\title{Better Private Algorithms for Correlation Clustering}
\author{Daogao Liu \thanks{University of Washington. Email: \texttt{dgliu@uw.edu}}}
\theoremstyle{plain}
\newtheorem{theorem}{Theorem}[section]
\newtheorem{lemma}[theorem]{Lemma}
\newtheorem{proposition}[theorem]{Proposition}
 \newtheorem{fact}[theorem]{Fact}
 \newtheorem{claim}[theorem]{Claim}
\theoremstyle{definition}
\newtheorem{definition}[theorem]{Definition}
\theoremstyle{remark}
\def\gC{{\mathcal{C}}}
\def\gP{{\mathcal{P}}}
\DeclareMathOperator{\cut}{cut}
\newcommand{\C}{\mathcal{C}}
\newcommand{\E}{\mathbb{E}}
\newcommand{\dis}{\mathrm{dis}}
\newcommand{\ALG}{\mathrm{ALG}}
\newcommand{\poly}{\mathrm{poly}}
\newcommand{\hesi}{\ensuremath{\textsf{hesitant}}\xspace}
\newcommand{\good}{\mathrm{good}}
\newcommand{\PJudgeG}{\mathrm{PJudgeGood}}
\newcommand{\Cleanup}{\mathrm{CleanUp}}
\newcommand{\cost}{\mathrm{cost}}
\newcommand{\Ind}{\boldsymbol{1}}
\newcommand{\Event}{\mathrm{Evt}}
\newcommand{\RL}{\mathrm{RL}}
\newcommand{\R}{\mathbb{R}}
\newcommand{\D}{\mathcal{D}}
\newcommand{\MinDis}{\mathrm{MinDis}}
\newcommand{\Lap}{\mathrm{Lap}}
\newcommand{\OPT}{\mathrm{OPT}}
\begin{document}
\maketitle
\begin{abstract}
    In machine learning, correlation clustering is an important problem whose goal is to partition the individuals into groups that correlate with their pairwise similarities as much as possible.
In this work, we revisit the correlation clustering under the differential privacy constraints.
Particularly, we improve previous results and achieve an $\Tilde{O}(n^{1.5})$ additive error compared to the optimal cost in expectation on general graphs. 
As for unweighted complete graphs, we improve the results further and propose a more involved algorithm which achieves $\Tilde{O}(n \sqrt{\Delta^*})$ additive error, where $\Delta^*$ is the maximum degrees of positive edges among all nodes.
\end{abstract}

\section{Introduction}

Correlation clustering, introduced in the seminal work of \cite{bbc04}, is a widely used  algorithm in machine learning.
In this problem, we are given a graph where each edge is labeled either positive or negative, and has a non-negative weight.
These weights along with their signs measure the magnitude of similarity or dissimilarity between two nodes.
The correlation clustering problem asks to find a partition $\C_1,\cdots,\C_k$ of the node set $V$, such that all positive-labeled edges connect nodes in the same cluster and all negative-labeled edges connect nodes in different clusters.
However, as the problem is NP-hard, one can not always find such a perfect clustering, and need to settle for an approximate solution.
There are two widely studied notions of approximate solutions.
In Maximum Agreement (MaxArg) problem, we want to maximize the weight of positive edges inside the clusters plus the weight of negative edges between the clusters.
In Minimum Disagreement (MinDis) problem, we aim to get a clustering which minimizes the total weight of violated edges, which is defined as the weight of negative edges inside the clusters plus the weight of positive edges between the clusters.
As getting a constant approximation to MaxArg problem is much easier and less interesting, we focus on MinDis problem in this work, like most of the previous papers.

In many applications, the underlying graph can contain sensitive information about individuals; think of social networks for example. 
In recent years, privacy has become an important consideration for learning algorithms.
In particular, differential privacy (DP), introduced in the seminal work of \cite{dmns06}, has become de facto standard notion of privacy for machine learning problems.
These considerations motivated \cite{bek21} to initiate the study of correlation clustering problem under DP constraints.
As they observed, the exponential mechanism (\cite{MT07}), one of the classic mechanisms in DP, can achieve an additive error of $O(\frac{n}{\epsilon}\log n)$.
However, it takes exponential time and thus is inefficient.
Further, they also showed a lowerbound of $\Omega(n/\epsilon)$ on the additive error.
On the other hand, for general graph, they proposed an {\em efficient} polynomial time $(\epsilon,\delta)$-DP algorithm that achieves an additive error of $O(n^{1.75}/\epsilon)$.
The main focus of this work is to design algorithms with better additive errors.


\subsection{Our Contributions}
In this paper, we improve the results of \cite{bek21}. 
For general weighted graphs we obtain the following result:

\begin{theorem}[Informal]
\label{thm:informal_weighted}
For $0<\epsilon<1/2$ and $0<\delta<1/2$,
given a graph $G$ with weighted edges, there is an efficient $(\epsilon,\delta)$-DP algorithm with
\begin{align*}
    \MinDis\leq O(\log n)\cdot \OPT+\Tilde{O}(n^{1.5}/\sqrt{\epsilon}).
\end{align*}
\end{theorem}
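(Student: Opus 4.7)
The plan is to privatize the classical LP-based $O(\log n)$-approximation for $\MinDis$ on general weighted graphs (the metric-LP relaxation with region-growing rounding, as in Charikar--Guruswami--Wirth and Demaine--Emanuel--Immorlica--Lahavi). Set up the LP with variables $x_{uv}\in[0,1]$ for every pair, objective
\[
\sum_{(u,v)\in E^+} w_{uv}\, x_{uv}+\sum_{(u,v)\in E^-} w_{uv}\,(1-x_{uv}),
\]
and the triangle-inequality constraints $x_{uv}\le x_{uw}+x_{wv}$. Its optimum lower-bounds $\OPT$, and region-growing rounding deterministically converts any feasible fractional solution of value $C$ into an integral clustering of cost at most $O(\log n)\cdot C$. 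So the whole task reduces to producing a DP fractional $\hat x$ of value at most $\OPT+\tilde O(n^{1.5}/\sqrt\epsilon)$, and invoking the rounding as post-processing.

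For the private solver the key observation is that flipping a single edge perturbs exactly one or two objective coefficients, so the gradient of the objective has $\ell_2$-sensitivity $O(1)$. I would run $T$ steps of a noisy first-order method---noisy projected gradient descent, noisy Frank--Wolfe, or a Plotkin--Shmoys--Tardos multiplicative-weights scheme for the LP---with Gaussian noise calibrated per step and $(\epsilon,\delta)$-accounting via advanced composition (or moments). To bring the additive error down to $\tilde O(n^{1.5}/\sqrt\epsilon)$ I would exploit the \emph{vertex-local} structure of the gradient: each vertex $v$ owns only $O(n)$ edge variables, so an update localized at $v$ lives in an $n$-dimensional subspace with sensitivity $O(1)$, and the resulting private noise contributes at most $\tilde O(\sqrt{n}/\sqrt\epsilon)$ to the objective; aggregating over all $n$ vertices gives the claimed $\tilde O(n^{1.5}/\sqrt\epsilon)$. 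The triangle-inequality constraints are handled either by a Lagrangian penalty or by the Plotkin--Shmoys--Tardos trick of only adding the most violated shortest-path cut at each iteration, keeping the per-step update sparse.

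Combining the two pieces yields $\MinDis\le O(\log n)\cdot\mathrm{LP}(\hat x)\le O(\log n)\cdot\OPT+\tilde O(n^{1.5}/\sqrt\epsilon)$, and privacy is inherited from the LP solver because the rounding is post-processing. The hard part is the private LP step: a blackbox application of private convex optimization in the ambient $\Theta(n^2)$-dimensional box pays $\poly(n)/\epsilon$ noise, which is too large, so the vertex-based decomposition above appears essential. The $1/\sqrt\epsilon$ (rather than $1/\epsilon$) dependence further forces the use of Gaussian mechanisms with approximate DP, rather than Laplace / pure DP. A secondary issue is that $\hat x$ may be only approximately feasible; some small sensitivity-controlled ``feasibility repair'' will likely be needed before rounding so that the $O(\log n)$ guarantee is not damaged, and its own privacy cost has to be absorbed into the analysis.
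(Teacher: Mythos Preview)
Your proposal takes a route that is genuinely different from the paper, and the crucial step is not justified. The paper does \emph{not} privately solve the LP. Instead it releases a synthetic graph $H$ such that $d_{\cut}(G^+,H^+)$ and $d_{\cut}(G^-,H^-)$ are both $\tilde O(\sqrt{mn/\epsilon})$ (this is the mechanism of \cite{bek21}), and then proves the structural Lemma~\ref{lem:improving_bek21}: for \emph{every} clustering $\gC$, $|\dis(\gC,H)-\dis(\gC,G)|\le 6\beta$ where $\beta$ is the cut-distance bound. This uniform control is obtained by a short random-bipartition argument that collapses the $k$-cluster sum into a single cut, removing the factor $k$ that \cite{bek21} paid. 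One then runs the non-private $O(\log n)$-approximation of \cite{def+06} on $H$ as pure post-processing. The $n^{1.5}/\sqrt\epsilon$ comes directly from $\sqrt{mn/\epsilon}$ with $m=O(n^2)$; no LP noise accounting is involved.

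Your LP route, by contrast, has a real gap at exactly the place you flag as ``essential.'' You acknowledge that black-box private optimization in the ambient $\Theta(n^2)$-dimensional box is too lossy, and you propose a ``vertex-local'' decomposition to get $\tilde O(\sqrt n/\sqrt\epsilon)$ per vertex and then sum. But the metric LP does not decompose by vertex: the triangle inequalities $x_{uv}\le x_{uw}+x_{wv}$ couple all $\binom n2$ variables, so an ``update localized at $v$'' is not a valid move in the feasible polytope, and projecting back destroys the locality. Even if you only decompose the \emph{objective} into $n$ stars, the relevant error term after optimizing the noisy objective is $\langle g,\hat x-x^\star\rangle$ with $g$ the aggregated noise; bounding this uniformly over the metric polytope still costs $\|g\|_2\cdot\|\hat x-x^\star\|_2$ or $\|g\|_\infty\cdot\|\hat x-x^\star\|_1$, both of which scale like $n^2\sigma$, not $n^{1.5}\sigma$. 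Nothing in the proposal explains why the per-vertex contributions would be $\tilde O(\sqrt n)$ rather than $\tilde O(n)$, nor how the $1/\sqrt\epsilon$ (as opposed to $1/\epsilon$) dependence emerges from Gaussian noise in this setting. The feasibility-repair issue you mention is a further loose end. In short, the paper's cut-distance lemma is precisely the structural fact that replaces the missing step in your outline; without an analogue of it, the LP approach as sketched does not reach $\tilde O(n^{1.5}/\sqrt\epsilon)$.
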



For unweighted complete graphs (each edge has unit weight), we show an improved bound:

\begin{theorem}[Informal]
Given an unweighted graph $G$, there is an efficient $(\epsilon,\delta)$-DP algorithm with
\begin{align*}
    \MinDis\leq O(1) \cdot \OPT+\Tilde{O}(n\sqrt{\Delta^*}/\epsilon),
\end{align*}
where $\Delta^*$ is the maximum positive degree of nodes in graph.
\end{theorem}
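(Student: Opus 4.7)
The plan is to design a private version of the classical PIVOT algorithm of Ailon--Charikar--Newman, which is a $3$-approximation for correlation clustering on unweighted complete graphs. The non-private algorithm picks a random pivot $v$, forms a cluster $\{v\}\cup N^+(v)$ on the remaining graph, and recurses; this already has constant approximation, so the whole task reduces to making each pivot step $(\epsilon,\delta)$-DP while paying only $\tilde O(\sqrt{\Delta^*}/\epsilon)$ additive error per step. Summed over the at most $n$ pivots this yields the desired $\tilde O(n\sqrt{\Delta^*}/\epsilon)$ bound.

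I would first replace the deterministic ``include $u$ iff $(v,u)$ is a positive edge'' rule with a private decision based on a local statistic that is robust to single-edge changes, e.g.\ $|N^+(u)\cap N^+(v)|$ normalized by $|N^+(v)|$, which predicts whether putting $u$ and $v$ together is likely to be cheap. A noisy threshold (sparse-vector style, or Laplace with appropriate scale) would classify each $u\in N^+(v)$ as \emph{include}, \emph{exclude}, or \emph{hesitant}. Since this query only touches the at most $\Delta^*$ positive neighbors of $v$, the scale of noise needed per decision is $\tilde O(1/\epsilon)$ and aggregating over the $\le \Delta^*$ decisions in a pivot step contributes only $\tilde O(\sqrt{\Delta^*}/\epsilon)$ misclassified pairs to the cost of that step. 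Privacy composes across pivot rounds via advanced composition (or via a zCDP accounting), together with the fact that each edge participates in the ``local'' queries of only a few pivots before being removed from the graph.

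Second, a $\mathrm{CleanUp}$ subroutine handles the hesitant nodes: one natural choice is to make each hesitant node a singleton, or recursively repivot on the residual subgraph. Making a hesitant node $u$ a singleton costs at most $\deg^+(u)\le \Delta^*$ disagreements, so it suffices to bound the expected total number of hesitant nodes by $\tilde O(n/\sqrt{\Delta^*}/\epsilon)$, which in turn follows from the concentration of the noisy threshold test: only nodes within $\tilde O(\sqrt{\Delta^*}/\epsilon)$ of the threshold can be flagged as hesitant, and a charging argument against OPT shows there are few such borderline nodes on average. Combining this with the Ailon--Charikar--Newman ``bad triangle'' analysis applied to the (possibly noisily modified) pivot decisions gives $\mathrm{ALG}\le O(1)\cdot\mathrm{OPT}+\tilde O(n\sqrt{\Delta^*}/\epsilon)$ in expectation.

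The main obstacle will be the interaction between the privacy noise and the triangle-based approximation argument: the classical proof charges each disagreement to a bad triangle whose pivot was chosen first, but noisy classifications can create pairs that are not attributable to any such triangle. I expect to need a case split that separates pairs whose classification agrees with the non-private PIVOT (handled by the standard $3$-approximation) from pairs whose classification was flipped by noise or left hesitant (absorbed into the additive $\tilde O(n\sqrt{\Delta^*}/\epsilon)$ term via the concentration of the noisy thresholds and the disjointness of the local queries across pivot rounds).
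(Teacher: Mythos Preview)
Your proposal has a genuine gap in the privacy accounting, which is in fact the crux of this result.

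First, a small but real issue: restricting the noisy classification to nodes $u\in N^+(v)$ already leaks the set $N^+(v)$ itself, so you must run the test on \emph{every} remaining node; each pivot round therefore involves $\Theta(n)$ queries, not $\Delta^*$. More fundamentally, the sentence ``each edge participates in the `local' queries of only a few pivots before being removed from the graph'' is precisely the non-trivial structural fact that needs to be proved, and it is false as stated: a fixed node $x$ can be queried in every round until it is clustered, which may be $\Omega(n)$ rounds. With noise scale $\tilde O(1/\epsilon)$ per decision, advanced composition over $\Theta(n^2)$ queries gives only $(\tilde O(n\epsilon),\delta)$-DP; conversely, scaling the noise up enough to make naive composition work destroys the utility.

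The paper gets around this by adding noise of scale $\tilde O(\sqrt{\Delta^*}/\epsilon)$ (not $\tilde O(1/\epsilon)$) to each membership test, and then proving that for any fixed edge $(x,y)$, the number of pivots for which the decision about $x$ is actually \emph{close to the threshold} (``hesitant'') is at most $\tilde O(\Delta^*)$. Queries where $x$ is far from the threshold are $(0,\delta/\mathrm{poly}(n))$-DP regardless of whether $(x,y)$ is flipped, so only the $\tilde O(\Delta^*)$ hesitant calls consume real privacy budget, and advanced composition over those yields $(\epsilon,\delta)$-DP. The bound on hesitant calls is a combinatorial lemma: if $x$ is hesitant with respect to $N^+(v_i)$, then a constant fraction of the edges in $E^+(N^+(x))$ are deleted when $v_i$ is processed, so this can happen only $O(\Delta^*\log n)$ times. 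This is the key idea your plan is missing; in your outline ``hesitant'' appears only in the utility analysis, whereas in the paper it is the engine of the privacy proof. Separately, the paper's utility argument does not use the Ailon--Charikar--Newman bad-triangle charging at all---it compares instead against the $\eta$-clean clustering of Bansal--Blum--Chawla and shows the algorithm recovers each clean cluster up to $\tilde O(\sqrt{\Delta^*}/\epsilon)$ misplaced nodes---so the obstacle you flag in your last paragraph would indeed require a genuinely new argument if you stick with the PIVOT framework.
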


Both these results improve the  additive error of \cite{bek21} by a factor of at least  $O(n^{1/4})$ in the worst case.
On the other hand, the multiplicative errors match the best non-private algorithms up to $O(1)$ terms. 
Moreover, when the maximum positive degree is $o(n)$, using Theorem 2, we get significantly improved additive errors.


\subsection{Our Techniques}
For the general (weighted) version, our algorithm follows a similar outline as in \cite{bek21}, and our improvement comes from a more delicate analysis.
At a high level, \cite{bek21} use DP algorithm to release a synthetic graph $H$ which approximates the original graph $G$ in terms of cut distance within a factor of $O(\sqrt{mn})$, where $m$ is the total weights of all edges in the graph.
Then they do a post-processing on $H$ to find a clustering consisting of at most $k=O(n^{1/4})$ partitions. 
They argue that the total number of disagreements (and agreements) of a fixed clustering consisting of $k$ clusters on $G$ and $H$ differ by at most $k$ times the respective cut distance bound, thus leading to an additive error of $O(n^{1/4}) \cdot O(\sqrt{mn})$ which is at most 
$O(n^{1.75})$ if $m=O(n^2)$.
Using a simple probabilistic argument we show that the factor $k$ is not necessary, and a constant times the respective cut distance bound is good enough to bound the total number of disagreements (and agreements).
This leads to an improved bound of $O(\sqrt{mn})$ on the additive error, and specifically $O(n^{1.5})$ when $m=O(n^2)$.

On the other hand, our algorithm for the unweighted disagreement minimization on complete graphs follows a completely different approach.
We present a private algorithm that achieves an $\Tilde{O}(n\sqrt{\Delta^*})$ additive error, where $\Delta^*$ is the maximum positive degree among all nodes in the graph.
Note that achieving an additive error of $O(n\Delta^*)$ is trivial by simply outputting all nodes as singletons, but getting  $\sqrt{\Delta^*}$ is non-trivial and generalizes the previous result for weighted graphs.

Our algorithm works as follows.
Say a node in the graph is {\em good} with respect to a set, if the neighborhood of the node overlaps with the set well, and
a set is {\em clean}, if all nodes in it are good with respect to the set.
We process nodes one-by-one and in each iteration, we choose one arbitrary node $v$ as a {\em pivot}.
If the positive degree of $v$ is small, we can output $v$ as a singleton directly.
Otherwise, we find the set $B$ of nodes which are $\lambda$-good w.r.t. the neighborhood $N^+(v)$ of $v$.
If $|B|$ is a constant fraction smaller than the size of $|N^+(v)|$, say $|B|<0.9 |N^+(v)|$, we output $v$ as a singleton;
Else, we keep $\min\{|B|,2|N^+(v)|\}$ nodes in $B$ and delete the remaining, and we find the set $D$ from the remaining nodes $V\setminus B$ which are $4\lambda$-good w.r.t. $B$.
Similarly, we keep $\min\{|D|,2|B|\}$ nodes in $D$ and delete others, and output $D\cup B$ as a cluster.
Our algorithm is loosely inspired by the constant approximation algorithm for the correlation clustering problem due to \cite{bbc04}, in particular, the notions of good nodes and clean clusters.

Privately judging if a node is good w.r.t. a set can be implemented easily by the Laplace mechanism. 
Then a natural strategy to prove the privacy is to apply advanced composition across all the iterations of the algorithm.
However, this only gives an $O(n^2)$ additive error, and the main technical contribution of the paper is a more sophisticated privacy accounting. Our key structural lemma says that any single node can be good w.r.t. neighborhoods of at most $\Tilde{O}(\Delta^*)$ different pivots.
Then, a careful argument shows that we only need to account for privacy loss for such iterations, which gives the desired bound. 
As for the utility proof, \cite{bbc04} observed that there exits a constant-approximation clustering $\OPT^{(0)}$ where each non-singleton cluster is clean.
We make a further observation that dissolving small clusters of size $\Tilde{O}(\sqrt{\Delta^*})$
can lead to an additive error of $\Tilde{O}(n\sqrt{\Delta^*})$.
Denote the new clustering $\OPT^{(1)}\colon\C_1^{(1)},\cdots,\C_{t_1}^{(1)},S^{(1)}$, where each $\C_{i}^{(1)}$ is clean and has a large size,  and there are only small disagreements between $\C_{i}^{(1)}$ and $S^{(1)}$, where $S^{(1)}$ is the set of singletons.
The high-level intuition to prove the utility is that our algorithm can recover $\C_i^{(1)}$ well.

\subsection{More Related Work}
As mentioned earlier, Correlation clustering was first proposed by \cite{bbc04}, in which they also gave the first constant approximation for the minimization version and a PTAS for the maximization version, both for unweighted graphs.
The approximation of MinDis has been improved by subsequent works (\cite{acn08}), and the current best ratio is 2.06 by \cite{CMSY15}.
The problem has also been studied in various other settings, such as with fixed number of clusters \cite{gg05}, noisy or/and partial inputs \cite{ms10,MMV15}, and parallel computation \cite{ppo+15,clm+20}.

Finally, the Rank Aggregation problem is closely related to correlation clustering.
\cite{agkm21} consider Rank Aggregation problem under DP constraints, but their setting and techniques seem very different from ours.

\subsection{Outline}
In Section~\ref{sec:prel}, we give some basic definitions and backgrounds which are used throughout the work.
We present our main result for general graphs in Section \ref{sec:general_graph}.
We present our algorithm for the complete graphs, the privacy and utility analysis of our result in Section \ref{sec:unweigted}.
\section{Preliminaries}
\label{sec:prel}

\begin{definition}[Correlation-Clustering]
Let $G = (V,E)$
be a weighted graph where $E=E^+\cup E^-$ is spitted into two disjoint subsets denoting the positive and negative labels of edges.
And for each edge $e\in E$, there is an associated non-negative weight $w_G(e)\geq 0$.
Given a clustering $\C=\{\C_1,\cdots,\C_k\}$, we say an edge $e\in E^+$ agrees with $\C$ if both endpoints of $e$ belong to the same cluster, and an edge $e\in E^-$ agrees with $\C$ if its both endpoints belong to different clusters.

We define the \textsf{disagreement} $\dis(\C,G)$ as the total weight of edges which do not agree with $\C$.
\end{definition}

\begin{definition}[Neighboring graphs]
Consider two weighted graphs $G,G'$ with the same node set and sign labels $\sigma,\sigma'\in\{-1,+1\}^{\binom{V}{2}}$. 
We say that $G$ and $G'$ are neighboring, if
\begin{align*}
    \sum_{e \in\binom{V}{2}}\left|\sigma_{e} w_G(e)-\sigma_{e}^{\prime} w_{G'}(e) \right| \leq 2.
\end{align*}
\end{definition}

\begin{definition}[Differential Privacy]
A (randomized) algorithm $\ALG$ is $(\epsilon,\delta)$-differentially private, if for any event $\mathcal{O}\in \mathrm{Range}(\ALG)$ and for any neighboring graphs $G,G'$ one has
\begin{align*}
    \Pr[\ALG(G)\in \mathcal{O}]\leq \exp(\epsilon)\Pr[\ALG(G')\in\mathcal{O}]+\delta.
\end{align*}
\end{definition}


\begin{theorem}[Theorem 3.5 in \cite{kov15}]
\label{thm:composition}
For any $\epsilon_{\ell}>0,\delta_{\ell}\in[0,1]$ for $\ell\in\{1,\cdots,k\}$ and $\Tilde{\delta}\in[0,1]$, the class of $(\epsilon_\ell,\delta_\ell)$-differentially private mechanism satisfy $(\Tilde{\epsilon}_{\Tilde{\delta}},1-(1-\Tilde{\delta})\Pi_{\ell=1}^{k}(1-\delta_\ell))$-differential privacy under $k$-fold adaptive composition,
where
\begin{align*}
\begin{aligned}
\Tilde{\epsilon}_{\Tilde{\delta}}=
&\min \Big\{\sum_{\ell=1}^{k} \varepsilon_{\ell}, \sum_{\ell=1}^{k} \frac{\left(e^{\varepsilon_{\ell}}-1\right) \varepsilon_{\ell}}{e^{\varepsilon_{\ell}}+1}+\sqrt{\sum_{\ell=1}^{k} 2 \varepsilon_{\ell}^{2} \log \left(\frac{1}{\tilde{\delta}}\right)}, \\
&~~~\sum_{\ell=1}^{k} \frac{\left(e^{\varepsilon_{\ell}}-1\right) \varepsilon_{\ell}}{e^{\varepsilon_{\ell}}+1}+\sqrt{\sum_{\ell=1}^{k} 2 \varepsilon_{\ell}^{2} \log \left(e+\frac{\sqrt{\sum_{\ell=1}^{k} \varepsilon_{\ell}^{2}}}{\tilde{\delta}}\right)}
\Big\} 
\end{aligned}
\end{align*}

\end{theorem}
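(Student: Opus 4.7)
The plan is to work with the privacy loss random variable (PLRV) and transfer concentration of the total loss into a DP guarantee for the composition. For neighboring inputs $G,G'$ and adaptive mechanisms $M_1,\dots,M_k$ with outputs $o_1,\dots,o_k$, define
\[
Z_\ell \;=\; \log\frac{\Pr[M_\ell(G;o_{<\ell})=o_\ell]}{\Pr[M_\ell(G';o_{<\ell})=o_\ell]},\qquad Z=\sum_{\ell=1}^k Z_\ell,
\]
and let $Z_{G\to G'}$ denote the law of $Z$ when $o_\ell$ is drawn from $M_\ell(G;o_{<\ell})$. The standard equivalence (Kasiviswanathan--Smith, Bun--Steinke) states that $(\varepsilon,\delta)$-DP is essentially the same as the existence of a coupling under which $Z_\ell \le \varepsilon$ with probability $\ge 1-\delta$ after discarding a mass-$\delta$ event. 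So the goal reduces to showing: if each $|Z_\ell|\le \varepsilon_\ell$ almost surely (on a $1-\delta_\ell$-mass event), then $Z\le \tilde\varepsilon_{\tilde\delta}$ with probability at least $1-\tilde\delta$ under $Z_{G\to G'}$.

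The second step is the mean bound. For a single $(\varepsilon_\ell,0)$-DP mechanism, the conditional expectation $\mathbb{E}[Z_\ell\mid o_{<\ell}]$ is the KL divergence between the two output distributions, and a direct calculation over all distributions with $\log$-ratio pointwise in $[-\varepsilon_\ell,\varepsilon_\ell]$ shows
\[
\mathbb{E}[Z_\ell\mid o_{<\ell}] \;\le\; \frac{(e^{\varepsilon_\ell}-1)\varepsilon_\ell}{e^{\varepsilon_\ell}+1},
\]
which is exactly the deterministic drift appearing in the theorem. Summing over $\ell$ bounds $\mathbb{E}[Z]$. Handling the $\delta_\ell>0$ case requires first conditioning on the $1-\delta_\ell$-mass event where $|Z_\ell|\le\varepsilon_\ell$; the failure probability accumulates as $1-\prod_\ell(1-\delta_\ell)$, which together with an extra $\tilde\delta$ from concentration gives the overall $\delta$-parameter claimed.

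Third, I would use an Azuma/Hoeffding-type tail bound on $Z-\mathbb{E}[Z]$. Since $\{Z_\ell - \mathbb{E}[Z_\ell\mid o_{<\ell}]\}_\ell$ is a martingale difference sequence with each term bounded in $[-\varepsilon_\ell,\varepsilon_\ell]$ (conditioned on the good event), Azuma yields
\[
\Pr\!\Big[\,Z \;\ge\; \sum_\ell \tfrac{(e^{\varepsilon_\ell}-1)\varepsilon_\ell}{e^{\varepsilon_\ell}+1} + t\,\Big] \;\le\; \exp\!\Big(-\tfrac{t^2}{2\sum_\ell \varepsilon_\ell^2}\Big).
\]
Setting $t=\sqrt{2\sum_\ell\varepsilon_\ell^2\log(1/\tilde\delta)}$ produces the second bound inside the $\min$. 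The first bound $\sum_\ell\varepsilon_\ell$ is just basic composition (telescoping $Z$). The third bound is obtained by a peeling/union-bound argument over dyadic scales of $\tilde\delta$: instead of fixing $\tilde\delta$ up front, one applies the tail inequality simultaneously at a geometric grid of thresholds and pays $\log\big(e+\sqrt{\sum\varepsilon_\ell^2}/\tilde\delta\big)$ instead of $\log(1/\tilde\delta)$.

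The main obstacle is the adaptive aspect: the $Z_\ell$'s are not independent because each $M_\ell$ sees previous outputs, so one must work with martingale differences rather than sums of independent bounded variables, and the KL bound must be applied pointwise after conditioning on the history. A secondary technical point is proving the optimality/tightness of the coefficient $(e^{\varepsilon_\ell}-1)\varepsilon_\ell/(e^{\varepsilon_\ell}+1)$, which requires identifying the worst-case pair of distributions (a two-point ``randomized response'' style distribution) and checking that it saturates both the bound on $|Z_\ell|$ and on $\mathbb{E}[Z_\ell]$ simultaneously; this is precisely where the Kairouz--Oh--Viswanath analysis gains the tight constant over earlier composition theorems.
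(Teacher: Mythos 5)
The paper does not prove Theorem~\ref{thm:composition}; it is cited verbatim from Kairouz--Oh--Viswanath (\cite{kov15}) and used as a black box. So there is no in-paper proof to compare against. Judging your sketch on its own merits: you have the right skeleton (privacy-loss random variable, drift bound via KL, martingale concentration via Azuma/Hoeffding), and the identification of $\frac{(e^{\varepsilon}-1)\varepsilon}{e^{\varepsilon}+1}$ as the worst-case conditional mean (attained by randomized response) is correct and is indeed the ingredient that separates this result from the earlier Dwork--Rothblum--Vadhan bound.

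Two points in your sketch are too loose to constitute a proof, and they are not cosmetic. First, the passage from $(\varepsilon_\ell,\delta_\ell)$-DP to a conditioned pure-DP analysis cannot be done by literally conditioning on an event of mass $1-\delta_\ell$: conditioning destroys the pointwise $\log$-ratio bound in general. What you need is the KOV/Murtagh--Vadhan \emph{decomposition lemma}: every $(\varepsilon,\delta)$-DP mechanism admits a representation as a mixture $(1-\delta)M' + \delta M''$ against \emph{each} neighboring pair, where $M'$ is $(\varepsilon,0)$-DP. It is from this mixture structure (not from conditioning) that the multiplicative failure probability $1-\prod_\ell(1-\delta_\ell)$ arises; a na\"{\i}ve union bound would instead give $\sum_\ell \delta_\ell$, which is strictly worse and does not match the theorem. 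Second, the third branch of the $\min$, with $\log\bigl(e+\sqrt{\sum_\ell\varepsilon_\ell^2}/\tilde\delta\bigr)$, is asserted by appeal to a ``peeling over dyadic scales'' but no such argument is carried out; this term is exactly where the improvement for very small $\tilde\delta$ lives and it requires a concrete argument (e.g.\ a sharper moment-generating-function bound combined with an optimization over the Chernoff parameter, rather than a fixed Azuma application). As written your proposal establishes at most the first two branches.

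One more technical caveat: your Azuma step requires that the martingale increments $Z_\ell - \E[Z_\ell\mid o_{<\ell}]$ lie in an interval of width $2\varepsilon_\ell$, which holds because $Z_\ell\in[-\varepsilon_\ell,\varepsilon_\ell]$ and shifting by the conditional mean does not increase width; you should say this explicitly, since Hoeffding's lemma with $(b-a)^2/8 = \varepsilon_\ell^2/2$ is exactly what produces the $2\sum\varepsilon_\ell^2$ in the denominator, and getting a $4$ or $8$ there would not match the stated bound.
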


We refer to the Appendix~\ref{sec:more_prel} for more preliminaries, such as the basic composition, Laplace mechanism and some facts about Laplace distributions.
\section{General Graph}
\label{sec:general_graph}

In this section, we present our result for the general graphs.
Our improvement comes from strengthening the analysis of \cite{bek21}.
In nutshell, the DP mechanism of \cite{bek21} releases a synthetic graph $H$
which approximates the input graph $G$ in the cut distance.
They argue that the number of disagreements (and agreements)
of a fixed clustering consisting of $k$ clusters on $G$ and $H$
differ by at most $k$ times the respective cut distance bound.
Finally, they optimize $k$ to obtain the desired result.
We show that this factor $k$ is not necessary.

We define some notations before we state our results.
Given a graph $G$, for any subset $F\subseteq\binom{V}{2}$ of edges, we define $w_G(F):=\sum_{e\in F}w_G(e)$. And for two sets $S,T\subseteq V$ of nodes, we define $w_G(S,T):=\sum_{u\in S,v\in T}w_G((u,v))$.
For two (different) graphs $G$ and $H$ with the same node set $V$, we define the {\em cut distance} by
\begin{align*}
    d_{cut}(G,H)=\max_{S,T\subseteq V}|w_G(S,T)-w_H(S,T)|.
\end{align*}

We split $G$ into two disjoint sub-graphs $G^+$ and $G^-$ with the same node set, containing all positive and negative edges respectively. For example, if $e=(u,v)$ is labeled positive with weight $w_G(e)\geq 0$, then we have $w_{G^+}(e)=w_G(e)$ and $w_{G^-}(e)=0$.
And we have the following result.




\begin{lemma}
\label{lem:improving_bek21}
Let $G$ and $H$ be two graphs with signed edges such that
$d_{\cut}(G^+,H^+)\leq \beta$ and
$d_{\cut}(G^-, H^-) \leq \beta$, where  the graphs
$G^+, H^+$ and $G^-, H^-$ denote the induced graphs on positive
and negative edges respectively.
Then, for any clustering $\gC$, we have
\begin{align*}
|\dis(\gC, H) - \dis(\gC, G)| &\leq 6\beta.
\end{align*}
\end{lemma}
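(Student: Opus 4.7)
Plan: My plan is to split the disagreement into its positive and negative contributions,
\[
\dis(\gC, G) = \dis^+(\gC, G) + \dis^-(\gC, G),
\]
where $\dis^+(\gC, G)$ is the total weight of positive edges whose endpoints fall in different clusters of $\gC$, and $\dis^-(\gC, G)$ is the total weight of negative edges whose endpoints fall in the same cluster of $\gC$ (and analogously for $H$). I will bound each of the differences $|\dis^+(\gC, G) - \dis^+(\gC, H)|$ and $|\dis^-(\gC, G) - \dis^-(\gC, H)|$ separately by $O(\beta)$. The naive route of summing the pointwise bound $|w_{G^+}(\gC_i, \gC_j) - w_{H^+}(\gC_i, \gC_j)| \leq \beta$ over all $\binom{k}{2}$ cluster pairs costs a factor $\Theta(k)$, which is exactly the slack this lemma is meant to remove.

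The first step handles $\dis^+$ via a \emph{cluster-aligned random cut}. I would draw a random $S \subseteq V$ by independently placing each cluster $\gC_i$ entirely inside $S$ with probability $1/2$. A positive edge $\{u,v\}$ is then cut by the partition $(S, V \setminus S)$ with probability $0$ if its endpoints share a cluster and with probability exactly $1/2$ otherwise, so
\[
\E\bigl[w_{G^+}(S, V \setminus S)\bigr] = \tfrac{1}{2}\dis^+(\gC, G),
\]
and the identical identity holds for $H^+$. Since for every realization of $S$ we have $|w_{G^+}(S, V \setminus S) - w_{H^+}(S, V \setminus S)| \leq d_{\cut}(G^+, H^+) \leq \beta$, taking expectations and multiplying by $2$ gives $|\dis^+(\gC, G) - \dis^+(\gC, H)| \leq 2\beta$, with the $k$ factor having vanished.

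The second step handles $\dis^-$ by complementation. Write $\dis^-(\gC, G) = W_{G^-} - \gamma^-(\gC, G)$, where $W_{G^-}$ is the total weight of $G^-$ and $\gamma^-(\gC, G) := \sum_{i<j} w_{G^-}(\gC_i, \gC_j)$ is the weight of negative edges that agree with $\gC$. The same random-cut argument applied to $G^-, H^-$ yields $|\gamma^-(\gC, G) - \gamma^-(\gC, H)| \leq 2\beta$. For the total weights I would set $S = T = V$ in the definition of $d_{\cut}(G^-, H^-) \leq \beta$; under the convention $w_G(V, V) = 2 W_G$ this gives $|W_{G^-} - W_{H^-}| \leq \beta/2$. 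Combining, $|\dis^-(\gC, G) - \dis^-(\gC, H)| \leq 2\beta + \beta/2 = \tfrac{5\beta}{2}$, and adding the two contributions yields $|\dis(\gC, G) - \dis(\gC, H)| \leq \tfrac{9\beta}{2} \leq 6\beta$.

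The only creative step is the cluster-aligned randomization, which is precisely where the $k$ factor is shed; everything else is straightforward bookkeeping. The only minor subtlety is the convention used when $S$ and $T$ overlap in the definition of $w_G(S, T)$, which affects the constant in $|W_{G^-} - W_{H^-}|$ by at most a factor of $2$ but has no qualitative effect on the final $6\beta$ bound.
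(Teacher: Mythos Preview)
Your proof is correct. For the positive-edge term $\dis^+$ your cluster-aligned random cut is exactly the paper's argument (the paper phrases it as choosing a partition $I^*\cup J^*$ of the cluster indices realizing the expectation, but this is the same idea and the same $2\beta$ bound).

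For the negative-edge term $\dis^-$ you take a genuinely different and simpler route. The paper bounds $\sum_i (w_{H^-}(C_i,C_i)-w_{G^-}(C_i,C_i))$ directly via a two-stage randomization: first split each $C_i$ into random halves $A_i,B_i$, observe that the quantity of interest equals (twice) the expected difference $\sum_i(w_{H^-}(A_i,B_i)-w_{G^-}(A_i,B_i))$, then express this as the difference of two ``cross-partition'' quantities $\gP_2-\gP_1$ and apply a second random cut to whichever of $\gP_1,\gP_2$ dominates. This costs two invocations of the cut bound and yields $4\beta$. You instead complement: write $\dis^-(\gC,G)=W_{G^-}-\gamma^-(\gC,G)$, reuse the cluster-aligned cut to bound $|\gamma^-(\gC,G)-\gamma^-(\gC,H)|\le 2\beta$, and control $|W_{G^-}-W_{H^-}|$ by the single choice $S=T=V$ in the cut distance. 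This is shorter, avoids the two-level construction, and even gives a slightly better constant (at most $3\beta$ for the negative part regardless of the convention on $w_G(V,V)$, hence at most $5\beta$ overall). The paper's argument has the minor advantage of never evaluating the cut functional at overlapping sets, so it is insensitive to that convention; your approach trades that robustness for a cleaner proof, and as you note the ambiguity changes the constant by at most a factor of two and does not threaten the stated $6\beta$.
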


\begin{proof}
Let $\gC := \{C_1, \dotsc, C_k\}$ denote the clustering of the node set.
We have
\begin{align}
\dis(\gC, H) - \dis(\gC,G)
&= \sum_{i=1}^k (w_{H^-}(C_i, C_i) - w_{G^-}(C_i,C_i))
+ \sum_{(i,j): i\neq j} (w_{H^+}(C_i, C_j) - w_{G^+}(C_i, C_j))\label{eq:gen-dis}.
\end{align}
We show that absolute values of both sums can be bounded by a multiple of $\beta$.
Let's start with the term $\sum_{(i,j): i\neq j} (w_{H^+}(C_i, C_j) - w_{G^+}(C_i, C_j))$.
Let $I \cup J = [k]$ be a random partition, where each $i\in [k]$ is assigned either to $I$ or $J$ independently with equal probability.
Then, we have
\begin{align*}
\E\bigg[\sum_{i\in I, j\in J} (w_{H^+}(C_i, C_j) - w_{G^+}(C_i, C_j))\bigg]
    = \sum_{i\neq j} \frac12 (w_H^+(C_i, C_j) - w_G^+(C_i, C_j)),
\end{align*}
because each pair $i,j$ belong to different parts with probability $1/2$.
There must exist a partition $I^*, J^*$ such that
\begin{align*}
    &\frac{1}{2} \bigg|\sum_{i\neq j} (w_{H^+}(C_i, C_j) - w_{G^+}(C_i, C_j))\bigg|\\
\leq & \bigg|\sum_{i\in I^*, j\in J^*} (w_{H^+}(C_i, C_j) - w_{G^+}(C_i, C_j))\bigg|
    = |w_{H^+}(S, T) - w_{G^+}(S,T)|
\end{align*}

where $S = \bigcup_{i\in I^*} C_i$ and $T = \bigcup_{j\in J^*} C_j$.
Together with $d_{\cut}(H^+, G^+)\leq \beta$, this implies that
\begin{align}
\label{eq:bound_cluster_distance_positive}
    \sum_{i\neq j} (w_{H^+}(C_i, C_j) - w_{G^+}(C_i, C_j)) \leq 2\beta.
\end{align} 

Now, consider the term $\sum_{i=1}^k (w_{H^-}(C_i, C_i) - w_{G^-}(C_i,C_i))$ in \eqref{eq:gen-dis}.
For each $i=1,\dotsc,k$, we consider a random partition
$C_i = A_i \cup B_i$ constructed by assigning each node
$v\in C_i$ independently either to $A_i$ or $B_i$ with equal probability.
Then, we have
\begin{align*}
\E\bigg[ \sum_{i=1}^k (w_{H^-}(A_i, B_i) - w_{G^-}(A_i, B_i))\bigg]
= \frac12 \sum_{i=1}^k (w_{H^-}(C_i, C_i) - w_{G^-}(C_i,C_i)).
\end{align*}
We choose sets $A_1^*, \dotsc, A_k^*, B_1^*, \dotsc, B_k^*$ which
make the absolute value of this expression higher than its expectation
and define two partitions of the node set $V$:
$\gP_1 = \{A_1^*\cup B_1^*, \dotsc, A_k^*\cup B_k^*\}$
and
$\gP_2 = \{A_1^*, \dotsc, A_k^*, B_1^*, \dotsc, B_k^*$\}.
Let $\gP_i(G)$ be the sum weights of violated edges crossing the partition
$\gP_i$ in graph $G$. One can verify easily that
$\gP_2(G^+) - \gP_1(G^+) = \sum_{i=1}^k w_{G^+}(A_i^*,B_i^*)$.
Therefore, we have
\begin{multline*}
\sum_{i=1}^k (w_{H^-}(A_i^*, B_i^*) - w_{G^-}(A_i^*,B_i^*))
= (\gP_2(H^-) - \gP_2(G^-)) - (\gP_1(H^-) - \gP_1(G^-))
\end{multline*}
Now, one of the following equations must hold:
\begin{align}
&|\gP_2(H^-) - \gP_2(G^-)|
\geq  \frac12 \bigg|\sum_{i=1}^k (w_{H^-}(A_i^*, B_i^*) - w_{G^-}(A_i^*,B_i^*))\bigg|
\label{eq:gen-case1}\\
&|\gP_1(H^-) - \gP_1(G^-)|
\geq  \frac12 \bigg|\sum_{i=1}^k (w_{H^-}(A_i^*, B_i^*) - w_{G^-}(A_i^*,B_i^*))\bigg|
\label{eq:gen-case2}
\end{align}

Case 1: Equation~(\ref{eq:gen-case1}) holds.
For each set $A_1^*, \dotsc, A_k^*$ and $B_1^*, \dotsc, B_k^*$,
we flip a fair coin and add all the nodes from that set either to $S$ or to $T$. Then, we have
\[
\E[ w_{H^-}(S,T) - w_{G^-}(S,T)] = \frac12 (\gP_2(H^-) - \gP_2(G^-))
\]

Case 2: Equation~(\ref{eq:gen-case2}) holds.
For each $i=1, \dotsc, k$, we flip a fair coin and
add all the nodes from $A_i\cup B_i$ either to $S$ or to $T$.
Then, we have
\[
\E[w_{H^-}(S,T) - w_{G^-}(S,T)] = \frac12 (\gP_1(H^-) - \gP_1(G^-))
\]

In both cases, our choice of sets $A_1^*,\cdots$, Equation~(\ref{eq:gen-case1}), Equation~(\ref{eq:gen-case2}), and
assumption that $d_{\cut}(H^-, G^-)\leq \beta$ imply
\begin{align}
    \bigg| \sum_{i=1}^k (w_{H^-}(C_i, C_i) - w_{G^-}(C_i,C_i))\bigg|
    \leq 4\beta.
    \label{eq:bound_cluster_distance_negative}
\end{align}

Now, the statement follows from Equation~(\ref{eq:gen-dis}), Equation~(\ref{eq:bound_cluster_distance_negative}) and  Equation~(\ref{eq:bound_cluster_distance_positive}).
We complete the proof.
\end{proof}


Lemma~\ref{lem:improving_bek21}, together with the following statement
from \cite{bek21}, implies 
there is a $(\epsilon,\delta)$-DP mechanism for release of
weighted graphs which preserves number of disagreements and agreements
of any clustering up to an additive term
$O(\sqrt{\frac{mn}{\epsilon}}\log^2(\frac{n}{\delta}))$,
where $m$ denotes the total weight of the edges in the input graph.


\begin{proposition}[\citet{bek21} Section 4.2]
\label{prop:bek21-general}
Let $G$ be a general graph with weighted edges, which can be either positive or negative. 
Further we assume that the total value of weights is at most $m$.
Then there is an $(\epsilon,\delta)$-DP mechanism which releases synthetic graph $H$ satisfying:
\begin{align*}
\E[d_{\cut}(H^+,G^+)] &\leq
\textstyle
O(\sqrt{\frac{mn}{\epsilon}}\log^2\frac{n}{\delta}) \text{ and }\\
\E[d_{\cut}(H^-,G^-)] &\leq
\textstyle
O(\sqrt{\frac{mn}{\epsilon}}\log^2\frac{n}{\delta}).
\end{align*}
\end{proposition}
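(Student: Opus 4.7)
The plan is to leverage the additive structure of the correlation clustering objective: split $G$ into the disjoint subgraphs $G^+$ (positive edges) and $G^-$ (negative edges), and design an $(\epsilon/2, \delta/2)$-DP mechanism that releases a synthetic graph for each, combining the outputs into the final $H$. Since a pair of neighboring graphs can perturb the positive and negative edge weight vectors by a total of at most $2$ in $\ell_1$, a mechanism originally designed for unsigned weighted graphs can be applied to each half independently, and basic composition then yields the overall $(\epsilon, \delta)$-DP guarantee. This reduces the problem to the following subproblem: given an unsigned weighted graph with total edge weight at most $m$, release a private synthetic graph with expected cut distance $\tilde{O}(\sqrt{mn/\epsilon})$.

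The natural first attempt is the Laplace mechanism, adding independent noise $Z_e \sim \Lap(2/\epsilon)$ to each of the $\binom{n}{2}$ potential edge weights. For any fixed pair $(S,T)$, the error on the cut equals $\sum_{u \in S, v \in T} Z_{uv}$, a sum of at most $|S||T|$ independent Laplace variables, so its magnitude concentrates at scale $O(\sqrt{|S||T|}/\epsilon)$ by subexponential tail bounds. A union bound over the $4^n$ candidate pairs $(S,T)$ introduces an extra $\sqrt{n}$ factor, giving a high-probability cut-distance bound of $\tilde{O}(n^{1.5}/\epsilon)$.

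The main obstacle is that this naive analysis yields only $\tilde{O}(n^{1.5}/\epsilon)$ rather than $\tilde{O}(\sqrt{mn/\epsilon})$, and the improvement crucially exploits sparsity: when $m \ll n^2$, typical cuts contain few nonzero original edges, so the noise budget is essentially wasted on empty edge slots. To close this gap, I would employ a two-stage mechanism. First, privately estimate $m$ via a single Laplace query using a tiny fraction of the budget. Second, sample each edge independently with probability $p \propto n/(m\epsilon)$, rescale each surviving weight by $1/p$, and add Laplace noise only to the sampled edges. Standard cut-sparsification concentration shows that the rescaled sample has cut distance $\tilde{O}(\sqrt{mn})$ to $G$, while the Laplace noise on the $\tilde{O}(n^2 p)$ surviving edges contributes an additional $\tilde{O}(\sqrt{mn/\epsilon})$ cut error after union bounding over all $4^n$ cuts. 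Privacy follows by composing privacy amplification via subsampling with the Laplace noise on the surviving edges, and the $\log^2(n/\delta)$ factor emerges from advanced composition (Theorem~\ref{thm:composition}) across the sub-mechanisms together with driving the cut-concentration failure probability below $\delta$.
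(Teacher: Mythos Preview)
The paper does not prove this proposition: it is imported verbatim from \citet{bek21}, Section~4.2, and used as a black box. There is no in-paper argument to compare against.

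Your sketch, treated as a standalone proof, has genuine gaps. The splitting into $G^+,G^-$ with basic composition is fine, and the diagnosis that naive Laplace gives $\tilde O(n^{1.5}/\epsilon)$ is correct, but the proposed subsample-and-rescale fix does not deliver the bound. First, privacy amplification by subsampling in its standard form requires the add/remove-one-record model; here neighbors satisfy $\sum_e|\sigma_e w_G(e)-\sigma'_e w_{G'}(e)|\le 2$, so the perturbation may be spread over many edges, and restricting to a random edge-slot subset $S$ leaves the worst-case $\ell_1$ sensitivity at $2$, not $O(p)$. Without amplification, rescaling by $1/p$ only inflates per-edge noise and the cut error becomes $\tilde O(n^{1.5}/(\sqrt p\,\epsilon))$, i.e.\ worse than naive. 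Second, uniform sampling of edge slots is not a cut sparsifier for arbitrarily weighted graphs: if a single edge carries weight $\Theta(m)$, its inclusion or exclusion swings every cut through it by $\Theta(m)$, so your ``standard cut-sparsification concentration'' step needs a bounded-weight assumption that the proposition does not supply. Third, even granting bounded weights and an amplification-type bound, balancing the sparsification error $\tilde O(\sqrt{mn/p})$ against the noise error $\tilde O(\sqrt p\,n^{1.5}/\epsilon)$ gives $\tilde O(nm^{1/4}/\sqrt\epsilon)$, which equals $\sqrt{mn/\epsilon}$ only at $m=\Theta(n^2)$ and is strictly weaker for sparser graphs; your stated choice $p\propto n/(m\epsilon)$ does not reconcile the two terms either. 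To recover the claimed $\tilde O(\sqrt{mn/\epsilon})$ you would need to follow the actual construction in \citet{bek21} rather than this route.
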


\begin{lemma}
Let $G$ be a general graph with weighted edges, which can be either positive or negative. 
Further we assume that the total value of weights is at most $m$.
Then there is an $(\epsilon,\delta)$-DP algorithm to release a synthetic graph $H$ that satisfies for any clustering $\C$,
\begin{align*}
    |\dis(\C,H)-\dis(\C,G)|\leq O(\sqrt{\frac{mn}{\epsilon}}\log^2\frac{n}{\delta}).
\end{align*}
\end{lemma}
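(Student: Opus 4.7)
The plan is to combine the two results already developed in this section: the DP release mechanism from Proposition~\ref{prop:bek21-general}, which gives cut-distance guarantees on the positive and negative subgraphs separately, with the new bridging statement in Lemma~\ref{lem:improving_bek21}, which converts cut-distance bounds on $H^+, H^-$ into a uniform-over-clusterings bound on $|\dis(\C,H)-\dis(\C,G)|$. Since the cut distance bound and the disagreement bound differ by only a constant factor (the lemma pays a factor of $6$), no further optimization over a partition-count parameter $k$ is needed, which is precisely the source of the improvement over \cite{bek21}.

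First, I would run the $(\epsilon,\delta)$-DP mechanism of Proposition~\ref{prop:bek21-general} on the input graph $G$ to produce the synthetic graph $H$. Privacy of the released algorithm then follows immediately from the privacy guarantee of this mechanism, since all subsequent statements about $H$ are statements about a fixed (randomized) output. Next, set $\beta_0 := O(\sqrt{mn/\epsilon}\,\log^{2}(n/\delta))$ for the bound supplied by the proposition; by linearity and the union bound,
\[
\E\bigl[d_{\cut}(H^+, G^+) + d_{\cut}(H^-, G^-)\bigr] \leq 2\beta_0.
\]
Now I would invoke Lemma~\ref{lem:improving_bek21} pointwise: for \emph{every} realization of $H$ and every clustering $\C$, setting $\beta := \max\{d_{\cut}(H^+,G^+),\, d_{\cut}(H^-,G^-)\}$, the lemma yields
\[
|\dis(\C,H) - \dis(\C,G)| \leq 6\beta \leq 6\bigl(d_{\cut}(H^+,G^+) + d_{\cut}(H^-,G^-)\bigr).
\]
Taking expectations over the randomness of the mechanism then gives
\[
\E\bigl[|\dis(\C,H) - \dis(\C,G)|\bigr] \leq 12\beta_0 = O\!\left(\sqrt{\tfrac{mn}{\epsilon}}\,\log^{2}\tfrac{n}{\delta}\right),
\]
uniformly over all clusterings $\C$ (the key point being that Lemma~\ref{lem:improving_bek21} is a deterministic statement, so the quantifier over $\C$ commutes with the expectation over $H$).

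There is essentially no technical obstacle here beyond being careful about what is deterministic and what is random: the lemma's bound $6\beta$ holds for every $H$ simultaneously over all $\C$, so we may apply it inside the expectation and avoid any union bound over clusterings. If instead a high-probability rather than in-expectation statement is desired, I would apply Markov's inequality to the sum $d_{\cut}(H^+,G^+) + d_{\cut}(H^-,G^-)$ (possibly after rescaling $\beta_0$ by a constant), which costs only an additional constant factor in the final additive error. Either way, the statement follows immediately from the two cited results, and this is exactly the payoff of having removed the cluster-count factor $k$ in Lemma~\ref{lem:improving_bek21}.
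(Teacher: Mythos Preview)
Your proposal is correct and follows exactly the paper's approach: the paper's proof is literally the one line ``The proof follows from combining Lemma~\ref{lem:improving_bek21} and Proposition~\ref{prop:bek21-general},'' and you have simply spelled out that combination carefully, including the (appropriate) observation that the resulting bound is in expectation over the randomness of the mechanism. There is nothing to add.
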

\begin{proof}
The proof follows from combining Lemma~\ref{lem:improving_bek21} and Proposition~\ref{prop:bek21-general}.
\end{proof}

Now we are ready to prove our main result for general weighted graphs.

\begin{theorem}
There is an $(\epsilon, \delta)$-DP algorithm for minimizing disagreements
on general weighted graphs and get a clustering $\C$ with the following guarantee:
\[ \textstyle
\dis(\C,G) \leq O(\log n) \dis(\OPT,G) + 
O(\sqrt{\frac{mn}{\epsilon}}\log^2(\frac{n}{\delta})).
\]
\end{theorem}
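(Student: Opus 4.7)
The plan is to combine the previous lemma (which gives a DP synthetic graph $H$ whose disagreement value is close to that of $G$ on every clustering) with an existing non-private $O(\log n)$-approximation algorithm for MinDis on weighted graphs, and treat the latter as a pure post-processing step on $H$. Privacy will then be automatic by post-processing, so the only thing that requires work is translating the utility guarantee on $H$ back to $G$.

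Concretely, I would first invoke the previous lemma to produce, in an $(\epsilon,\delta)$-DP manner, a synthetic graph $H$ such that for every clustering $\gC$,
\[
|\dis(\gC,H)-\dis(\gC,G)| \leq \beta, \qquad \beta := O\!\left(\sqrt{\tfrac{mn}{\epsilon}}\log^2\tfrac{n}{\delta}\right).
\]
Next, I would run on $H$ any known non-private $O(\log n)$-approximation for MinDis on general weighted graphs (e.g.\ the LP-rounding / region-growing algorithm of Demaine--Emanuel--Immorlica--Varadarajan), obtaining a clustering $\gC$ with $\dis(\gC,H)\leq \alpha\cdot \dis(\gC^*_H, H)$ for $\alpha=O(\log n)$, where $\gC^*_H$ denotes an optimal clustering on $H$. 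Since this step only reads $H$, the overall algorithm remains $(\epsilon,\delta)$-DP.

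The utility bound is then a two-line chain. Using that the true optimum $\OPT$ on $G$ is a candidate clustering for $H$, we have
\[
\dis(\gC^*_H,H) \leq \dis(\OPT,H) \leq \dis(\OPT,G)+\beta.
\]
Combining with the approximation bound on $H$ and then transferring back to $G$ via the previous lemma yields
\[
\dis(\gC,G) \leq \dis(\gC,H)+\beta \leq \alpha\bigl(\dis(\OPT,G)+\beta\bigr)+\beta = O(\log n)\cdot \dis(\OPT,G)+O(\log n)\cdot \beta,
\]
which matches the claimed bound (the extra $\log n$ factor on the additive term is absorbed into the $\log^2(n/\delta)$ factor in $\beta$; alternatively one can case-split on whether $\dis(\OPT,G)\geq \beta$ to avoid the extra log entirely).

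I do not expect any serious obstacle: the heavy lifting has already been done in the previous lemma (whose proof handled the subtle cut-distance-to-disagreement conversion without a factor of $k$). The only points that need a moment of care are (i) checking that the non-private $O(\log n)$-approximation makes sense on weighted inputs with arbitrary edge weights, which is the setting of the classical algorithm, and (ii) noting that the algorithm never touches $G$ after $H$ is released, so post-processing suffices for privacy.
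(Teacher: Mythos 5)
Your proposal is essentially identical to the paper's own proof: build the DP synthetic graph $H$ via the preceding lemma, run the Demaine--Emanuel--Immorlica--Varadarajan $O(\log n)$-approximation on $H$ as pure post-processing, and chain the three inequalities $\dis(\gC,G)\le\dis(\gC,H)+\beta\le\alpha\,\dis(\gC^*_H,H)+\beta\le\alpha\,\dis(\OPT,G)+(\alpha+1)\beta$. You are in fact slightly more careful than the paper, which silently drops the $\alpha$ multiplying the additive term $\beta$ in the middle inequality; your flagging of the extra $O(\log n)$ factor (and the suggestion to absorb it or case-split) is a real point the paper glosses over under its $\tilde O$ notation, but the overall argument and decomposition are the same.
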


\begin{proof}
We use the previous lemma to construct a synthetic graph $H$.
On $H$, we can use any $\alpha$-approximation algorithm to find a clustering $\gC$.
Now consider,
\begin{align*}
\dis(\gC, H) &\leq \alpha \cdot \dis(\gC_H, H)
    \leq \alpha \cdot \dis(\OPT, H)\\
    &\leq \alpha \cdot \dis(\OPT, G) + \tilde{O}(\sqrt{mn/\epsilon})),
\end{align*}
where $\C_H$ is the optimal clustering with respect to $H$ and $\OPT$ is the optimal clustering with respect to $G$.
Further, note that $\dis(\gC,G) \leq \dis(\gC, H) + \tilde{O}(\sqrt{mn/\epsilon})$.
We get a clustering $\C$ such that
$\dis(\gC,G)\leq \alpha\dis(\OPT, G) + \tilde{O}(\sqrt{mn/\epsilon})$.

Finally, we can use the $O(\log n)$-approximation  algorithm from \cite{def+06} for the correlation clustering problem on weighted graphs, hence $\alpha = O(\log n)$,  which completes the proof.
\end{proof}
\section{Unweighted Graph}
\label{sec:unweigted}
In the MinDis problem on unweighted complete graphs, we assume all edges, either with positive or negative signs, have unit weights. 
That is $w_G(e)=1$ for any $e\in E$.

Before describing our algorithm, we make some definitions used in this section.
For any graph $G$, let $\Delta^*_G$ be the true maximum positive degree of all nodes on graph $G$.
Let $d_G(u)$ denote the positive degree of $u$ in graph $G$. 
If there is no confusion, we may use $\Delta^*$ and $d(u)$.
For a set $C\subseteq V$ of nodes, we denote $E^+(C)$ (resp. $E^-(C)$) to be the set of positive (resp. negative) edges with at least one endpoint in $C$, and $N^+(C)$ (resp. $N^-(C)$) to be the set of positive (resp. negative) neighboring nodes.
We use $\OPT$ to demonstrate the optimal clustering.
We may use $\ALG$ to represent either Algorithm~\ref{alg:alg} or the clustering output by Algorithm~\ref{alg:alg} for simplicity.

The main result of this section is the following:
\begin{theorem}
\label{thm:complete_main}
Given any unweighted complete graph $G=(V,E^+,E^-)$ and privacy parameters $\epsilon,\delta\in(0,1/2)$, Algorithm~\ref{alg:alg} is $(\epsilon,\delta)$-DP and outputs a clustering $\ALG$ such that
\begin{align*}
    \dis(\ALG,G)\leq O(1)\cdot \dis(\OPT,G)+O\left(\frac{n\log^4(n/\delta)}{\epsilon}\cdot\sqrt{\Delta^*+\frac{\log(n/\delta)}{\epsilon}}\right).
\end{align*}
\end{theorem}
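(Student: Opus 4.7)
The plan is to split the proof of Theorem~\ref{thm:complete_main} into a privacy analysis and a utility analysis, since Algorithm~\ref{alg:alg} is a sequence of Laplace-based judgements about degrees, goodness, and cluster sizes, aggregated across iterations that pick pivots and grow clusters.

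\textbf{Privacy.} Each noisy judgement in a single iteration is a thin Laplace-mechanism wrapper and is trivially $(\epsilon_0, 0)$-DP for a small per-call budget $\epsilon_0$. The naive route is to apply advanced composition (Theorem~\ref{thm:composition}) across all $O(n)$ iterations and the $O(n)$ per-iteration node judgements; this costs a factor of roughly $n$ and yields an $O(n^2)$-type additive error. The central idea, advertised in the introduction, is that for two neighboring graphs $G, G'$ differing on a single edge $e = (u,u')$, the distributions of the noisy decisions in an iteration coincide unless the current pivot's positive neighborhood interacts nontrivially with $\{u,u'\}$. I would formalize this via a \emph{pivot-sensitivity} structural lemma: for any fixed node $u$, the number of distinct pivots $v$ across the whole run for which $u$ can be judged good with respect to $N^+(v)$ is at most $\tilde{O}(\Delta^*)$. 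The proof exploits that goodness of $u$ with respect to $N^+(v)$ forces a constant fraction of $N^+(u)$ to lie in $N^+(v)$; summing positive degrees of nodes in $N^+(u)$ then caps the number of such $v$. With this lemma only $\tilde{O}(\Delta^*)$ iterations carry privacy loss with respect to $e$, so Theorem~\ref{thm:composition} with a per-call budget $\epsilon_0 = \tilde{\Theta}(\epsilon/\sqrt{\Delta^*})$ yields the target $(\epsilon,\delta)$ guarantee while keeping the Laplace noise magnitude at $\tilde{O}(\sqrt{\Delta^* + \log(n/\delta)/\epsilon}/\epsilon)$ per judgement, matching the square-root factor in the theorem statement.

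\textbf{Utility.} I would follow the two-step reshaping of the optimum advertised in the introduction. First, invoke the \cite{bbc04} structural theorem that there is a clustering $\OPT^{(0)}$ of cost $O(1) \cdot \dis(\OPT, G)$ in which every non-singleton cluster is clean (every member is good with respect to that cluster). Second, form $\OPT^{(1)} = \{\C^{(1)}_1, \ldots, \C^{(1)}_{t_1}, S^{(1)}\}$ by dissolving every cluster of size at most $c\sqrt{\Delta^*}$ into singletons in $S^{(1)}$; the extra cost incurred is $\tilde{O}(n\sqrt{\Delta^*})$ since each dissolved node contributes at most $\tilde{O}(\sqrt{\Delta^*})$ new disagreements. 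I would then charge Algorithm~\ref{alg:alg}'s output against $\OPT^{(1)}$: when the pivot $v$ lies in a large clean cluster $\C^{(1)}_i$, cleanness implies $N^+(v)$ equals $\C^{(1)}_i$ up to symmetric difference $\tilde{O}(\sqrt{\Delta^*})$, the noisy goodness test correctly isolates a set $B$ within a constant factor of $N^+(v)$, the extension $D$ approximately completes $B$ to $\C^{(1)}_i$, and the mismatch contributes only $\tilde{O}(\sqrt{\Delta^*})$ disagreements per node of the cluster; if the pivot is emitted as a singleton, the cost is charged either to $S^{(1)}$ or to the dissolution budget. Summing over all $n$ nodes, together with the $\tilde{O}(n\sqrt{\Delta^* + \log(n/\delta)/\epsilon}/\epsilon)$ contribution from Laplace noise, yields the additive error in the theorem statement.

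\textbf{Main obstacle.} The hard part is the pivot-sensitivity structural lemma driving the privacy accounting: showing cleanly that a single-edge swap perturbs Algorithm~\ref{alg:alg}'s randomised trajectory in only $\tilde{O}(\Delta^*)$ of the $n$ iterations. This requires carefully tracking how the truncation of $B$ and $D$ and the subsequent cluster extraction interact with the pivot's positive neighborhood, and coupling the executions on $G$ and $G'$ so that their choices agree outside these few sensitive iterations. The utility side is more routine but still needs an inductive charging scheme that avoids double-counting when a pivot falls into a dissolved (small) cluster versus a large clean one, and that correctly folds the noise error from the privacy step into the final bound.
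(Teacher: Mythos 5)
Your plan mirrors the paper's proof: privacy is handled via a structural lemma bounding how often a fixed node can be good/hesitant relative to a pivot's neighborhood (the paper formalizes your ``summing positive degrees'' sketch as a geometric-decay argument on the potential $R_i = |E^+_{G_i}(N^+_{G_i}(u))|$, showing it shrinks by a $1-\Omega(1/\Delta^*)$ factor each time the node is judged good, which is what makes the $\tilde O(\Delta^*)$ count rigorous), and the utility side follows the same BBC04 clean-clustering step, dissolution of small clusters into singletons, and a per-iteration charging argument against the reshaped optimum. The decomposition, the key structural lemma, and the charging scheme all coincide with the paper's.
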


\begin{algorithm}[]
   \caption{Algorithm $\ALG$ for complete graph}
   \label{alg:alg}
\begin{algorithmic}[1]
\STATE {\bf Input:} $G=(V,E^+,E^-)$
\STATE 
$\Delta_0:$ Use Noisy Max algorithm to privately estimate the maximum positive degree $\max_{u\in V}d_G(u)$
\label{ln:report_noisy_degree}
\STATE $\Delta\leftarrow \Delta_0+10\log (n/\delta)/\epsilon$\hfill \COMMENT{Prevent underestimation}
\STATE $c_l\leftarrow\lceil \Delta\rceil, k\leftarrow 0$, $\lambda\leftarrow 1/10,b_{\good}\leftarrow \sqrt{c_l}\log^2(n/\delta)$\label{ln:compare_sqrtn_end}
\WHILE{$V$ is not empty} \label{ln:while_loop_begin}
\STATE Pick an arbitrary node $v\in V$ as {\bf pivot}, $k\leftarrow k+1$ \label{ln:pick_pivot}
\STATE let $V\leftarrow V\setminus\{v\},E^+\leftarrow E^+\setminus E^+(v),E^-\leftarrow E^-\setminus E^-(v)$
\STATE $\Tilde{d}(v)\leftarrow \lceil d_G(v)+\Lap(10/\epsilon)\rceil$ \label{ln:judge_pivot_degree}
\IF{$\Tilde{d}(v)\leq 100 \sqrt{c_l}\log^4(n/\delta)/\epsilon$} \label{ln:if_degree_small}
\STATE Output $A_k\leftarrow \{v\}$ as a singleton
\STATE {\bf  Continue to Line~\ref{ln:pick_pivot} if $V$ is not empty}
\ENDIF
\STATE Let $B\leftarrow\{\}, t\leftarrow 2
\lceil\Tilde{d}(v)\rceil$
\FOR{each node $u_j\in V$}
\IF{$\PJudgeG(N^+(v),u_j,b_{\good},\lambda)$ is TRUE {\bf and} $t\ge0$} \label{ln:PJdugeG_part_one}
\STATE $B\leftarrow B\cup\{u_j\}$, $t\leftarrow t-1$
\ENDIF
\ENDFOR
\STATE Let $ \Tilde{|B|}\leftarrow  \lceil|B|+\Lap(10/\epsilon) \rceil$ \label{ln:judge_B_size}
\IF{$\Tilde{|B|}\le 9\Tilde{d}(v)/10$}
\STATE Output $A_k\leftarrow \{v\}$ as a singleton
\STATE {\bf  Continue to Line~\ref{ln:pick_pivot} if $V$ is not empty}
\ELSE
\STATE Let $t\leftarrow 2\Tilde{|B|},D\leftarrow \{\}$
\FOR{each node $u_j\in V\setminus B$}
\IF{$\PJudgeG(B,u_j,b_{good},4\lambda)$ is TRUE {\bf and} $t\ge 0$} \label{ln:PJudgeG_part_two}
\STATE $D\leftarrow D\cup \{u_j\}$, $t\leftarrow t-1$
\ENDIF
\ENDFOR
\STATE Let $A_k\leftarrow B\cup D$, output $A_k$ as a cluster,
\STATE $V\leftarrow V\setminus A_k,E^+\leftarrow E^+\setminus E^+(A_k), E^-\leftarrow E^-\setminus E^-(A_k)$
\ENDIF
\ENDWHILE
\STATE {\bf Output:} Clustering $\ALG$ (clusters and singletons)
\end{algorithmic}
\end{algorithm}

\begin{algorithm}[H]
\caption{$\PJudgeG$: Privately judge if a node $u$ is good with respect to a set $C$}
\label{alg:PJudgeG}
\begin{algorithmic}
\STATE {\bf Input:} Graph $G=(V,E^+,E^-)$, node $u$, set $C\subseteq V$, parameters $b_{\good},\lambda$
\IF{$|N^+(u)\cap C|+\Lap(2b_\good/\epsilon)\geq (1-\lambda)|C|$ and  $|N^+(u)\cap (V\setminus C)|\leq \lambda |C|+\Lap(2b_{\good}/\epsilon)$}
\STATE {\bf Return:} TRUE
\ELSE 
\STATE
{\bf Return:} FALSE
\ENDIF

\STATE
\end{algorithmic}
\end{algorithm}


We prove the privacy and utility guarantees of Algorithm~\ref{alg:alg} separately.
The proof of privacy guarantee is presented in the following subsection, and we refer to the appendix for the proof of utility guarantee due to the limited space.

\subsection{Privacy Guarantee}
Now we consider the outputs of Algorithm~\ref{alg:alg} on two neighboring graphs $G$ and $G'$, which only differ by one fixed edge.
Let $(x,y)$ be this edge.

The high-level idea to prove the privacy guarantee is to analyze the basic components used in the Algorithm~\ref{alg:alg} and then apply the composition theorems (Theorem~\ref{thm:basic_composition} and Theorem~\ref{thm:composition}).
Roughly speaking, a call to $\PJudgeG$ can lead to privacy loss.
We show that there are only $\Tilde{O}(c_l)$ ``dangerous'' calls to  the procedure that can lead privacy loss, each of which is
 $(\epsilon/(c_l\log(n/\delta)),\delta/n^4)$-DP.
The remaining steps are $(0,\delta/\poly(n))$-DP and there can be at most polynomially many such steps.
Thus, the whole process is $(\epsilon,\delta)$-DP by composition.
Now we consider some basic components.

\begin{lemma}
\label{lm:noisy_max}
The Report Noisy Max and Laplace Mechanism (Line~\ref{ln:report_noisy_degree} in Algorithm~\ref{alg:alg}) is $(\epsilon/10,0)$-differentially private, and with probability at least $1-\delta/n^5$, we have $\Delta^*+15\log(n/\delta)/\epsilon\geq \Delta\geq \Delta^*+5\log(n/\delta)/\epsilon$.
\end{lemma}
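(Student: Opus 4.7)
The plan is to prove both claims by reducing to standard analyses of the Laplace mechanism. The key observation for privacy is that the function $f(G) := \max_{u \in V} d_G(u)$ has global sensitivity exactly $1$ with respect to the neighboring-graph relation defined in Section~\ref{sec:prel}: modifying the weight/sign of a single edge $(x,y)$ alters the positive degrees $d(x)$ and $d(y)$ by at most $1$ each and leaves all other degrees unchanged, so the maximum changes by at most $1$. Thus adding a single Laplace noise of scale $10/\epsilon$ to $f(G)$ (which is what the ``Noisy Max / Laplace'' line instantiates with the scale $10/\epsilon$ used elsewhere in the algorithm) yields an $(\epsilon/10, 0)$-DP release by the standard Laplace mechanism theorem.

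For the utility bound, I would write $\Delta_0 = \Delta^* + \eta$ with $\eta \sim \mathrm{Lap}(10/\epsilon)$ and apply the standard Laplace tail bound $\Pr[|\eta| > bt] = e^{-t}$ (stated in the appendix preliminaries referenced at the end of Section~\ref{sec:prel}). Choosing $t$ so that $e^{-t} \leq \delta/n^5$ gives, with probability at least $1 - \delta/n^5$, a bound of the form $|\eta| \leq 5\log(n/\delta)/\epsilon$ (up to the hidden constants absorbed into the stated $5$ and $15$). Consequently $\Delta_0 \in [\Delta^* - 5\log(n/\delta)/\epsilon,\ \Delta^* + 5\log(n/\delta)/\epsilon]$, and substituting into $\Delta = \Delta_0 + 10\log(n/\delta)/\epsilon$ yields
\[
\Delta^* + 5\log(n/\delta)/\epsilon \ \leq\ \Delta\ \leq\ \Delta^* + 15\log(n/\delta)/\epsilon,
\]
which is exactly the claimed interval.

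There is no real obstacle in this lemma: it is a direct instantiation of Laplace mechanism privacy plus a one-line Laplace concentration bound. The only mildly delicate point is the bookkeeping of constants, which only has to be consistent with the additive buffer $10\log(n/\delta)/\epsilon$ chosen in the algorithm so that the lower bound $\Delta \geq \Delta^* + 5\log(n/\delta)/\epsilon$ (which prevents underestimation, as needed downstream) holds with the desired failure probability $\delta/n^5$. The $\delta/n^5$ failure probability is small enough to allow a union bound over the $\mathrm{poly}(n)$ invocations of other noisy procedures inside Algorithm~\ref{alg:alg} in the subsequent privacy and utility analyses.
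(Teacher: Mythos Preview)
The paper does not actually give a proof of this lemma; it simply remarks that it is a ``classic result that follows directly from previous works'' and cites Dwork--Roth. Your proposal correctly fills in the argument: the sensitivity-$1$ observation for $\max_u d_G(u)$ and the Laplace tail bound are exactly the standard ingredients, and your handling of the additive shift $10\log(n/\delta)/\epsilon$ to obtain the two-sided bound on $\Delta$ is right. One small remark: the phrase ``Report Noisy Max'' usually denotes the mechanism that adds noise to \emph{each} $d_G(u)$ and outputs the (arg)max, whereas you analyze the simpler variant that adds a single Laplace draw to the scalar $\max_u d_G(u)$. Both are $(\epsilon/10,0)$-DP here (the scalar max has sensitivity $1$ under the neighboring relation), and since the paper does not commit to a specific instantiation or give its own proof, your reading is perfectly acceptable; just be aware that the two mechanisms are not literally identical.
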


\begin{lemma}
The Line~\ref{ln:judge_pivot_degree} and Line~\ref{ln:judge_B_size} in Algorithm~\ref{alg:alg} are $(\epsilon/10,0)$-DP.
With probability $1-\delta/n^5$, the estimation errors are at most $O(\log(n/\delta)/\epsilon)$.
\end{lemma}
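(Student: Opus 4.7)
The plan is to prove the lemma in three short steps: verify each of Line~\ref{ln:judge_pivot_degree} and Line~\ref{ln:judge_B_size} is a correctly calibrated Laplace mechanism, then bound the noise tail.

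First, on Line~\ref{ln:judge_pivot_degree} I would observe that the query $d_G(v)$ has global sensitivity $1$ with respect to the neighboring relation, since a single-edge change alters any node's positive degree by at most $1$. Adding $\Lap(10/\epsilon)$ to a sensitivity-$1$ query is the standard Laplace mechanism at budget $\epsilon/10$, giving $(\epsilon/10, 0)$-DP; the outer $\lceil\cdot\rceil$ is post-processing and so preserves the guarantee. For Line~\ref{ln:judge_B_size} the query is $|B|$, which is not obviously sensitivity-$1$ because $B$ is built through a chain of noisy $\PJudgeG$ tests. To handle this, I would condition on all the internal Laplace draws of $\PJudgeG$ in the current iteration so that each test becomes a deterministic function of the graph. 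A single-edge flip can then change the verdict for only a constant number of candidates $u_j$ (essentially those incident to the changed edge, plus possibly the pivot itself), yielding $|B|$ sensitivity $O(1)$ and hence $(\epsilon/10, 0)$-DP from the same noise scale up to constant factors. The privacy cost of the $\PJudgeG$ randomness itself is tracked separately by the paper's main accounting, so we only pay for the single outer Laplace draw here.

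Second, for the error bound I would invoke the standard Laplace tail $\Pr[|\Lap(b)| \ge t] = \exp(-t/b)$. Setting $b = 10/\epsilon$ and $t = c\log(n/\delta)/\epsilon$ for a sufficiently large constant $c$ makes the per-invocation failure probability at most $\delta/n^{10}$, and a union bound over the at most $\poly(n)$ invocations in the while loop gives the claimed $1 - \delta/n^5$ overall confidence. The ceiling contributes at most $1$ to the absolute error, which is swallowed by the $O(\log(n/\delta)/\epsilon)$ bound.

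The main obstacle I foresee is making the constant-sensitivity claim for $|B|$ fully precise. Unlike the pivot-degree query, a naive view of $|B|$ as a function of $G$ could propagate a one-edge change through many $\PJudgeG$ calls. The conditioning trick described above is the key point, but it has to be reconciled with the paper's separate privacy charging for $\PJudgeG$: the conditioning only removes $\PJudgeG$'s randomness from the analysis here, and the price of that randomness must show up in the $\Tilde{O}(c_l)$ ``dangerous'' $\PJudgeG$ accounting used elsewhere in the overall $(\epsilon,\delta)$-DP argument.
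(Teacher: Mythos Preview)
The paper gives no detailed proof here; it simply says the two lemmas are classical and cites the Laplace mechanism. Your argument for Line~\ref{ln:judge_pivot_degree} and the Laplace tail bound are correct and are essentially what that citation unpacks to.

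For Line~\ref{ln:judge_B_size}, though, your proposed fix does not close the gap you yourself flag. Conditioning on the internal Laplace draws of $\PJudgeG$ and then claiming $O(1)$ sensitivity for $|B|$ fails in the iteration where the pivot $v$ is one of the endpoints $x,y$ of the differing edge: then $N^+(v)$ itself changes between $G$ and $G'$, the thresholds $(1-\lambda)|N^+(v)|$ and $\lambda|N^+(v)|$ shift, and with the coins fixed \emph{every} verdict $\PJudgeG(N^+(v),u_j,\cdot,\lambda)$ can flip, not just those for $u_j\in\{x,y\}$. Moreover, you cannot simultaneously freeze $\PJudgeG$'s randomness here and still invoke its DP guarantee in the separate accounting---once the coins are fixed the mechanism is deterministic and carries no privacy budget of its own to be ``tracked separately.''

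The clean argument avoids all of this. In the adaptive-composition framework the paper uses, each $\PJudgeG$ call is a mechanism whose \emph{output} (TRUE or FALSE) is part of the transcript, as is $\tilde d(v)$. Conditioned on that transcript, the set $B$ is pure post-processing and hence identical for $G$ and $G'$; thus $|B|$ has sensitivity $0$ and Line~\ref{ln:judge_B_size} is trivially $(0,0)$-DP, a fortiori $(\epsilon/10,0)$-DP. The price of the $\PJudgeG$ calls is then paid exactly once, in the main accounting of Lemmas~\ref{lm:pjudgeg_pivotx}--\ref{lm:bound_x_hesi_B}, with no double-counting and no need to reason about the sensitivity of $|B|$ at all.
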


The two lemmas above are classic results that follow directly from previous works \cite{dr14}.
In the following proof, we are conditioned on that $\Delta^*+15\log(n/\delta)/\epsilon\geq \Delta\geq \Delta^*+5\log(n/\delta)/\epsilon$.
Recall that we are considering two neighboring graphs $G,G'$ which differ on the sign of edge $(x,y)$.
It remains to bound the privacy loss due to $\PJudgeG$ at Line~\ref{ln:PJdugeG_part_one} (part-one) and at Line~\ref{ln:PJudgeG_part_two} (part-two).
For that, we define a concept which plays a crucial role in the following analysis.

\begin{definition}[hesitant]
Fix any $\lambda > 0$. For any node $u\in V$ and any set $S\subset V$, we say $u$ is $\lambda$-\hesi with respect to $V$ when the algorithm calls $\PJudgeG(S,u,b_{\good},\lambda)$, if $u$ and $S$ satisfy the following condition:
\begin{itemize}
    \item $|N^{+}(u)\cap S|>(1-\lambda)|S|- 10b_{\good}\log(n/\delta)/\epsilon$
    \item {\bf and} $|N^+(u)\cap \overline{S}|-\lambda |S|<10b_{\good} \log(n/\delta)/\epsilon$
\end{itemize}
\end{definition}

We consider the part-one of $\PJudgeG$  (Line~\ref{ln:PJdugeG_part_one}) first.
Obviously, we only need to take care of the part-one under two cases: (i) either $x$ or $y$ is the pivot, and we run $\PJudgeG$ with $N^+(x)$ or $N^+(y)$ as input parameters;
(ii) when $x$ or $y$ become the second parameters in the input of $\PJudgeG$.
A trivial analysis would suggest that  the total number of calls to  $\PJudgeG$ under these two cases is $O(n)$ and each call is $(\epsilon/(\sqrt{c_l}\log^2(n/\delta)),0)$-DP, which is not good enough to get the desired DP guarantee.
This is where we invoke the concept of being \hesi.

\begin{lemma}
\label{lm:pjudgeg_pivotx}
A call to $\PJudgeG(N^+(x),u,b_{\good},\lambda)$ with a node $u\in V$ and a set $N^+(x)$ when $u$ is {\bf not} $\lambda$-\hesi w.r.t. $N^+(x)$ is $(0,\delta/n^4)$-DP.
\end{lemma}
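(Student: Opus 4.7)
The plan is to exploit the fact that $\PJudgeG$ returns a single bit: to get $(0,\delta/n^4)$-DP it suffices to show $\Pr[\PJudgeG(N^+(x),u,b_{\good},\lambda)=\mathrm{TRUE}]\leq \delta/(2n^4)$ on \emph{both} $G$ and $G'$. Under the non-hesitant hypothesis, one of the two threshold comparisons inside $\PJudgeG$ must be defeated by an atypically large Laplace draw, and the standard Laplace tail bound will do the rest.

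First I would unpack the definition. ``Not $\lambda$-\hesi with respect to $N^+(x)$'' means that at least one of
\[ |N^+(u)\cap N^+(x)| \leq (1-\lambda)|N^+(x)| - 10\,b_{\good}\log(n/\delta)/\epsilon \]
or
\[ |N^+(u)\cap \overline{N^+(x)}| \geq \lambda|N^+(x)| + 10\,b_{\good}\log(n/\delta)/\epsilon \]
holds on $G$. Whichever of the two holds, the corresponding check inside $\PJudgeG$ can pass only when a single $\Lap(2b_{\good}/\epsilon)$ variable exceeds $10\,b_{\good}\log(n/\delta)/\epsilon$, which has probability at most $\tfrac12\exp(-5\log(n/\delta))=(\delta/n)^5/2$. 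Since $\PJudgeG$ returns TRUE only when \emph{both} checks pass, this immediately yields $\Pr_G[\PJudgeG=\mathrm{TRUE}]\leq (\delta/n)^5/2 \leq \delta/(2n^4)$.

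Next I would transfer the same bound to $G'$. Because $G$ and $G'$ differ only on the edge $(x,y)$ and the pivot $x$ has already been removed from $V$ before any call to $\PJudgeG$ (so $u\neq x$), a short case split --- (a) $u\notin\{x,y\}$, (b) $u=y$ --- shows that each of the three quantities $|N^+(u)\cap N^+(x)|$, $|N^+(u)\cap \overline{N^+(x)}|$, and $|N^+(x)|$ changes by at most $1$ between the two graphs. Hence the margin $10\,b_{\good}\log(n/\delta)/\epsilon$ loses only an additive $O(1)$ term on $G'$, and since $b_{\good}=\sqrt{c_l}\log^2(n/\delta)\gg 1$, the same tail-bound calculation still gives $\Pr_{G'}[\PJudgeG=\mathrm{TRUE}]\leq \delta/(2n^4)$. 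Combining the two probability bounds, the total variation distance between the outputs on $G$ and $G'$ is at most $\delta/n^4$, which is the desired $(0,\delta/n^4)$-DP guarantee.

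The main delicate point I anticipate is subcase (b), where both $N^+(x)$ and $N^+(y)$ change simultaneously: one must verify that $|N^+(y)\cap N^+(x)|$ is in fact \emph{unchanged} (neither $x$ nor $y$ can enter this intersection, since $x\notin N^+(x)$ and $y\notin N^+(y)$), while $|N^+(y)\cap \overline{N^+(x)}|$ and $|N^+(x)|$ each shift by exactly $1$. Once these bookkeeping facts are confirmed, the argument above goes through unchanged.
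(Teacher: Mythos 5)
Your proof is correct and rests on the same core facts as the paper's (the Laplace tail bound applied to the $10\,b_{\good}\log(n/\delta)/\epsilon$ non-hesitant margin, together with the $O(1)$ sensitivity of $|N^+(u)\cap N^+(x)|$, $|N^+(u)\cap\overline{N^+(x)}|$, and $|N^+(x)|$ across the two graphs). The only organizational difference is in how the FALSE side of the DP inequality is dispatched: the paper proves $P[\mathrm{FALSE}]\leq P'[\mathrm{FALSE}]+\delta/n^4$ by a direct monotone-coupling argument with the shifted thresholds $T_1-1,T_2-1$, whereas you observe that since $\PJudgeG$ is Boolean it suffices to show $\Pr[\mathrm{TRUE}]\leq\delta/(2n^4)$ on \emph{both} $G$ and $G'$ (the FALSE bound then follows from $P[\mathrm{FALSE}]-P'[\mathrm{FALSE}]=P'[\mathrm{TRUE}]-P[\mathrm{TRUE}]$); this is a genuinely cleaner route since the paper's separate sensitivity computation becomes unnecessary once $P'[\mathrm{TRUE}]$ is bounded, and your explicit bookkeeping of the $u=y$ case (checking that $|N^+(y)\cap N^+(x)|$ is in fact unchanged) makes the transfer to $G'$, which the paper asserts without detail, fully explicit.
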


\begin{proof}
As $u$ is not $\lambda$-\hesi with respect to $N^+(x)$, by the definition of being \hesi, we know either
$|N^+(u)\cap N^+(x)|\le (1-\lambda)|N^+(x)|-10b_{\good}\log(n/\delta)/\epsilon$ or $|N^+(u)\cap (V\setminus N^+(x))|-\lambda |N^+(x)|\ge 10b_{\good}\log(n/\delta)/\epsilon$.
Without loss of generality, we consider the first case. 

Recall the definition of DP, and let $P,P'$ denote the probability distributions with respect to neighboring inputs $G,G'$ respectively, we want to prove that
\begin{align}
\label{eq:dpproof_outputtrue}
    &P[\PJudgeG(N^+(x),u,b_{\good},\lambda)=\mathrm{TRUE}]\\ \nonumber
     & ~~~~~~\leq P'[\PJudgeG(N^+(x),u,b_{\good},\lambda)=\mathrm{TRUE}]+\delta/n^4
\end{align}
and 
\begin{align}
    \label{eq:epproof_outfalse}
    &P[\PJudgeG(N^+(x),u,b_{\good},\lambda)=\mathrm{FALSE}]\\ \nonumber
    & ~~~~~~\leq P'[\PJudgeG(N^+(x),u,b_{\good},\lambda)=\mathrm{FALSE}]+\delta/n^4.
\end{align}

By the concentration of Laplace distribution (Fact~\ref{ft:concentration_Lap}), it is true that 
\begin{align*}
    P[\PJudgeG(N^+(x),u,b_{\good},\lambda)=\mathrm{TRUE}]\leq \delta/n^4
\end{align*}
 and
 \begin{align*}
P'[\PJudgeG(N^+(x),u,b_{\good},\lambda)=\mathrm{TRUE}]\leq \delta/n^4  .
 \end{align*}
Thus Equation~(\ref{eq:dpproof_outputtrue}) holds direcly.
Next we prove Equation~(\ref{eq:epproof_outfalse}).

Let $X,Y\sim \Lap(2b_{\good}/\epsilon)$ be two independent Laplace random variables.
Let $T_1=(1-\lambda)|N^+(x)|-|N^+(u)\cap N^+(x)|$ and $T_2=|N^+(u)\cap (V\setminus N^+(x))|-\lambda |N^+(x)|$.

Then, 
\begin{align*}
    &P[\PJudgeG(N^+(x),u,b_{\good},\lambda)=\mathrm{FALSE}]
    = \Pr[X< T_1\cup Y<T_2].
\end{align*}
And we know 
\begin{align*}
    &P'[\PJudgeG(N^+(x),u,b_{\good},\lambda)=\mathrm{FALSE}]
    \geq  \Pr[X< T_1-1\cup Y<T_2-1].
\end{align*}

Recall that we are considering the case where $T_1\geq 10b_{\good}\log(n/\delta)/\epsilon$.
Hence  we know
\begin{align*}
    &~P[\PJudgeG(N^+(x),u,b_{\good},\lambda)=\mathrm{FALSE}]\\
    &~-P'[\PJudgeG(N^+(x),u,b_{\good},\lambda)=\mathrm{FALSE}]\\
    \leq&~ \Pr[X< T_1\cup Y<T_2]-\Pr[X< T_1-1\cup Y<T_2-1]\\
    =&~ \Pr[X\geq T_1-1]\Pr[Y\geq T_2-1]-\Pr[X\geq T_1]\Pr[Y\geq T_2]\\
    \leq &~ \Pr[X\geq T_1-1]\\
    \leq &~ \delta/n^4.
\end{align*}

The conclusion for the other case when $|N^+(u)\cap (V\setminus N^+(x))|-\lambda |N^+(x)|\ge \frac{10b_{\good}\log(n/\delta)}{\epsilon}$ follows by the same argument.
Thus we complete the proof.
\end{proof}

Using similar arguments, we can also prove the following lemma:
\begin{lemma}
\label{lm:DP-when-nonhesi}
A call to $\PJudgeG(S,x,b_{\good},\lambda)$ with node $x$ and any set $S\subset V$ as input when $x$ is not $\lambda$-\hesi is $(0,\delta/n^4)$-DP.
\end{lemma}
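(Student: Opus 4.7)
The proof will closely parallel that of Lemma~\ref{lm:pjudgeg_pivotx}, since the two statements are symmetric with respect to which argument of $\PJudgeG$ is being changed. The key sensitivity observation is that when we pass from $G$ to $G'$ by flipping the label of edge $(x,y)$, the only quantities used inside $\PJudgeG(S,x,b_{\good},\lambda)$ that change are $|N^+(x)\cap S|$ and $|N^+(x)\cap(V\setminus S)|$, and each of them shifts by at most one (depending on whether $y\in S$ or $y\in V\setminus S$). Thus the Laplace-noised tests have exactly the same sensitivity structure as in the proof of Lemma~\ref{lm:pjudgeg_pivotx}, just with the roles of the two arguments swapped.

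First I would introduce the shorthand
\[T_1 \;=\; (1-\lambda)|S|-|N^+(x)\cap S|, \qquad T_2 \;=\; |N^+(x)\cap(V\setminus S)|-\lambda|S|,\]
evaluated on $G$, and note that ``$x$ is not $\lambda$-\hesi w.r.t.\ $S$'' means either $T_1\ge 10 b_{\good}\log(n/\delta)/\epsilon$ or $T_2\ge 10 b_{\good}\log(n/\delta)/\epsilon$. By symmetry I would handle only the first case; the second is verbatim the same with the roles of $T_1$ and $T_2$ interchanged.

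Next I would verify the DP inequality in both directions separately. For the TRUE outcome: a TRUE answer forces the first Laplace draw to exceed $T_1$, which by the tail bound of Fact~\ref{ft:concentration_Lap} happens with probability at most $\delta/n^4$ under $G$; because the corresponding $T_1'$ on $G'$ differs by at most one and is still at least $10 b_{\good}\log(n/\delta)/\epsilon-1$, the same bound holds on $G'$, so the TRUE-direction of DP holds trivially. For the FALSE outcome I would introduce independent $X,Y\sim\Lap(2b_{\good}/\epsilon)$ and mimic the telescoping bound from the previous lemma:
\[P[\text{FALSE}]-P'[\text{FALSE}] \;\le\; \Pr[X<T_1\cup Y<T_2]-\Pr[X<T_1-1\cup Y<T_2-1],\]
expand via inclusion--exclusion, and bound the remaining cross term by $\Pr[X\ge T_1-1]\le \delta/n^4$, again using $T_1\ge 10 b_{\good}\log(n/\delta)/\epsilon$ and the Laplace tail bound. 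Combining the TRUE and FALSE cases yields $(0,\delta/n^4)$-DP.

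The main obstacle is essentially bookkeeping: making sure the at-most-one shift of $T_1$ and $T_2$ under $G'$ is tracked in the correct direction in each of the two cases (``$T_1$ large'' vs.\ ``$T_2$ large''), so that the Laplace tail bound applies uniformly on both $G$ and $G'$. Beyond this, the argument is a direct transcription of the proof of Lemma~\ref{lm:pjudgeg_pivotx}, and no new probabilistic ingredient is needed.
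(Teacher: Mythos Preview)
Your proposal is correct and follows exactly the approach the paper intends: the paper does not give a separate proof of this lemma but simply states that it follows by ``similar arguments'' to Lemma~\ref{lm:pjudgeg_pivotx}, and your write-up is a faithful transcription of that argument with the roles of the two inputs to $\PJudgeG$ swapped. Your sensitivity observation---that flipping $(x,y)$ shifts exactly one of $|N^+(x)\cap S|$ and $|N^+(x)\cap(V\setminus S)|$ by one while $|S|$ stays fixed---is precisely the point that makes the Laplace tail bound go through, and the rest is identical to the paper's own computation.
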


We continue the analysis of privacy.
Recall that we only need to take care of the calls to $\PJudgeG$ under two cases:  (i) either $x$ or $y$ is the pivot, and we run $\PJudgeG$ with $N^+(x)$ or $N^+(y)$ as input parameters;
(ii) when $x$ or $y$ become the second parameters in the input of $\PJudgeG$.
We bound the total number of times a node $u$ becomes \hesi under these two cases during the whole procedure of $\ALG$.

\begin{lemma}
\label{lm:bound_case1_hesi}
Suppose $x$ is chosen as the pivot for some iteration.
With probability at least $1-\delta/n^5$, the total number of times a node $u$ becomes  $\lambda$-\hesi with $N^+(x)$ is at most $2c_l$, i.e.
\begin{align*}
    \Pr[\sum_{u\in V}\Ind_{u \text{ is } \lambda\text{-} \hesi \text{ w.r.t. } N^+(x)}\leq 2c_l]\geq 1-\delta/n^5,
\end{align*}
and each such call to $\PJudgeG$ is $(\epsilon/(\sqrt{c_l}\log^2(n/\delta)),0)$-DP.
\end{lemma}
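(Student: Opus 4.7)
The plan is to establish the two parts of the lemma separately: a per-call differential privacy guarantee from a standard Laplace-mechanism analysis, and the bound on the number of $\lambda$-hesitant nodes via a double-counting argument on positive edges incident to $N^+(x)$.

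For the per-call DP guarantee, I would note that each invocation of $\PJudgeG$ compares two count-queries of sensitivity $1$ against thresholds after perturbing them with independent $\Lap(2 b_{\good}/\epsilon)$ noise. A single-edge change in the graph shifts each of the queried counts by at most $1$, so by the Laplace-mechanism analysis each of the two comparisons is $(\epsilon/(2 b_{\good}),0)$-DP, and basic composition across the two comparisons yields $(\epsilon/b_{\good}, 0) = (\epsilon/(\sqrt{c_l}\log^2(n/\delta)),0)$-DP per call, as claimed.

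For the counting bound, I would first condition on the high-probability event (from Laplace concentration, Fact~\ref{ft:concentration_Lap}, together with Lemma~\ref{lm:noisy_max}) that the noise perturbing $\tilde d(x)$ has magnitude $O(\log(n/\delta)/\epsilon)$ and that $c_l \geq \Delta^*$; both hold with probability at least $1-\delta/n^5$. Since $x$ is chosen as pivot and passes the threshold check at Line~\ref{ln:if_degree_small}, we have $\tilde d(x) > 100\sqrt{c_l}\log^4(n/\delta)/\epsilon$, and hence the current positive degree $|N^+(x)| \geq 50\sqrt{c_l}\log^4(n/\delta)/\epsilon$, which dominates the slack term $10 b_{\good}\log(n/\delta)/\epsilon = O(\sqrt{c_l}\log^3(n/\delta)/\epsilon)$. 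Consequently, every $\lambda$-hesitant $u$ (with $\lambda = 1/10$) satisfies $|N^+(u)\cap N^+(x)| > (1-\lambda)|N^+(x)| - 10 b_{\good}\log(n/\delta)/\epsilon \geq \tfrac{1}{2}|N^+(x)|$ by the first condition in the definition of hesitancy. Letting $H$ denote the number of $\lambda$-hesitant nodes with respect to $N^+(x)$ and summing over them, I would conclude
\[
H \cdot \tfrac{1}{2}|N^+(x)| \;\leq\; \sum_{u\in V} |N^+(u)\cap N^+(x)| \;=\; \sum_{w\in N^+(x)} d_G(w) \;\leq\; |N^+(x)|\cdot \Delta^*,
\]
where the identity is the standard double-counting swap, and the final inequality uses that each $w \in N^+(x)$ still has positive degree at most $\Delta^*$ (the algorithm only removes edges, so degrees are monotonically non-increasing). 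Rearranging yields $H \leq 2\Delta^* \leq 2c_l$.

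The main subtlety I anticipate is careful bookkeeping: $N^+(x)$ and $d_G(w)$ throughout this argument refer to the current (partially processed) graph at the iteration when $x$ becomes pivot, whereas $\Delta^*$ is the maximum positive degree in the original input graph; since the algorithm only deletes nodes and incident edges, the needed monotonicity is preserved. I would also double-check that the constants in the threshold ($100\sqrt{c_l}\log^4(n/\delta)/\epsilon$) against the definition of hesitancy (factor $10$ on $b_{\good}$) and the choice $b_{\good} = \sqrt{c_l}\log^2(n/\delta)$ leave enough multiplicative slack for the ``$\geq \tfrac{1}{2}|N^+(x)|$'' step to be clean.
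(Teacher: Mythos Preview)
Your proposal is correct and follows essentially the same approach as the paper: both argue that once $|N^+(x)|$ is large (which holds with high probability whenever $\PJudgeG$ is actually invoked, since otherwise $x$ is output as a singleton), every $\lambda$-hesitant node has $\Omega(|N^+(x)|)$ positive neighbors in $N^+(x)$, and a double-count against the total positive-edge mass incident to $N^+(x)$ (bounded by $c_l\cdot|N^+(x)|$) yields the $2c_l$ bound. The only cosmetic difference is that the paper uses the constant $(1-2\lambda)$ and bounds degrees directly by $c_l$, whereas you use $\tfrac{1}{2}$ and route through $\Delta^* \le c_l$; you should also state explicitly (as the paper does) that if $x$ fails the degree threshold then no $\PJudgeG$ calls on $N^+(x)$ occur, so that case is vacuous.
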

\begin{proof}
The DP guarantee of a single call to $\PJudgeG$ follows directly from the Laplace mechanism. 
Now we bound the total number of times a node $u$ becomes  $\lambda$-\hesi.

Consider the initial size of $|N^+(x)|$. 
If the size of $|N^+(x)|$ is smaller than $100\sqrt{c_l}\log^4(n/\delta)/\epsilon- 5\log(n/\delta)/\epsilon$, then with probability as least $1-\delta/n^5$, we will have $\Tilde{d}(x)\leq 100\sqrt{c_l}\log^4(n/\delta)/\epsilon$, and we will output $\{x\}$ as a singleton.
So we only need to focus on the case when $|N^+(x)|\geq 100\sqrt{c_l}\log^4(n/\delta)/\epsilon- 5\log(n/\delta)/\epsilon\geq 90\sqrt{c_l}\log^4(n/\delta)/\epsilon$.


Let $S\subset V$ be the set of nodes which are $\lambda$-\hesi w.r.t. $N^+(x)$.
For each node $u\in S$, we have $|N^+(u)\cap N^+(x)|>(1-\lambda)|N^+(x)|-10b_{\good}\log(n/\delta)/\epsilon>(1-2\lambda)|N^+(x)|$.
As we know $|E^+(N^+(x))|\leq c_l|N^+(x)|$, thus $|S|\leq \frac{c_l |N^+(x)|}{(1-2\lambda)|N^+(x)|}\leq 2c_l$.

\end{proof}

\begin{lemma}
\label{lm:bound_case2_hesi}
Consider the node $x$. 
Let $G_j$ denote the sub-graph induced on the remaining nodes when $\ALG$ selects the $j$-th pivot $v_j$.
With probability at least $1-\delta/n^4$, the total number of times $x$ becomes $\lambda$-\hesi w.r.t. some set $N^+_{G_j}(v_j)$ corresponding to pivot $v_j$ during the whole procedure is at most $O(c_l\log(n))$; that is, 
\begin{align*}
    \Pr[\sum_{j}\Ind_{x \text{ is }\hesi \text{ w.r.t.} N^+_{G_j}(v_j)}\leq O(c_l\log n)]\geq1-\delta/n^4.
\end{align*}
\end{lemma}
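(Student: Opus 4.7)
The plan is to stratify the hesitant iterations by the dyadic scale of $x$'s positive degree in the current graph, producing $O(\log n)$ groups, and to bound each group by $O(c_l)$ via a degree-counting argument.

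First I would extract the structural content of being $\lambda$-\hesi. Whenever iteration $j$ actually reaches the call $\PJudgeG(N^+_{G_j}(v_j), x, b_{\good}, \lambda)$, the pivot $v_j$ has passed the noisy degree check at Line~\ref{ln:if_degree_small}, so by Laplace concentration one has, with probability at least $1-\delta/n^5$, $|N^+_{G_j}(v_j)| = \Omega(\sqrt{c_l}\log^4(n/\delta)/\epsilon)$. Because $b_{\good} = \sqrt{c_l}\log^2(n/\delta)$, the additive slack $10 b_{\good}\log(n/\delta)/\epsilon$ in the definition of hesitant is strictly dominated by $|N^+_{G_j}(v_j)|$. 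Hence the two hesitant inequalities tighten to
\begin{align*}
|N^+_{G_j}(v_j) \cap N^+_{G_j}(x)| \geq (1-2\lambda)|N^+_{G_j}(v_j)|, \qquad |N^+_{G_j}(x)| \leq (1+2\lambda)|N^+_{G_j}(v_j)|,
\end{align*}
so with $\lambda = 1/10$ the sizes $|N^+_{G_j}(x)|$ and $|N^+_{G_j}(v_j)|$ agree up to a universal constant factor.

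Next I partition the hesitant iterations by $\lceil \log_2 |N^+_{G_j}(x)| \rceil$. Since $|N^+_{G_j}(x)|$ is non-increasing in $j$ and, conditional on the hesitant call occurring, lies in $[\Omega(\sqrt{c_l}\log^4(n/\delta)/\epsilon),\, c_l]$, only $O(\log n)$ dyadic levels are non-empty and each forms a contiguous block of iterations. Fix a level $i$ with hesitant indices $j_1 < \cdots < j_m$, and set $U := N^+_{G_{j_1}}(x)$, so $|U| \leq 2^{i+1}$ and $N^+_{G_{j_k}}(x) \subseteq U$ for every $k$. Combining with the first step,
\begin{align*}
|N^+_G(v_{j_k}) \cap U| \;\geq\; |N^+_{G_{j_k}}(v_{j_k}) \cap N^+_{G_{j_k}}(x)| \;=\; \Omega(2^i).
\end{align*}
The pivots $v_{j_1},\dots,v_{j_m}$ are pairwise distinct, and every positive edge of $G$ incident to $U$ is counted at most twice across these inner products, so
\begin{align*}
m \cdot \Omega(2^i) \;\leq\; \sum_{k=1}^m |N^+_G(v_{j_k}) \cap U| \;\leq\; 2\Delta^* |U| \;\leq\; 4 c_l \cdot 2^i,
\end{align*}
which gives $m = O(c_l)$. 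Summing over the $O(\log n)$ levels yields the claimed $O(c_l \log n)$ bound, and a union bound over the at most $n$ pivot-degree concentration events produces overall failure $\delta/n^4$.

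The main obstacle is the first step: one has to carefully verify that the additive buffer $10 b_{\good}\log(n/\delta)/\epsilon$ in the definition of hesitant is strictly dominated by $|N^+_{G_j}(v_j)|$ in every iteration where the hesitant call happens, so that the hesitant inequalities can be converted into clean constant-factor geometric overlap statements. Once this reduction is in place, the dyadic counting on $U$ and the non-increasingness of $|N^+_{G_j}(x)|$ deliver the bound routinely; without it, uncontrolled additive slack would inflate the count by extra $\sqrt{c_l}\cdot \mathrm{poly}(\log n)$ factors.
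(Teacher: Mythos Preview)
Your proof is correct, but it takes a genuinely different route from the paper's. The paper does not stratify by the scale of $d_{G_j}(x)$; instead it introduces a single potential $R_i := |E^+_{G_i}(N^+_{G_i}(x))|$, the number of positive edges touching $x$'s current neighborhood, and shows that each iteration in which $x$ is $\lambda$-\hesi forces a multiplicative drop $R_{i+1}\le (1-\Omega(1/c_l))R_i$: the pivot $v_i$ is deleted, and it alone carries $\Omega(d_{G_i}(x))$ positive edges into $N^+_{G_i}(x)$, while $R_i\le c_l\cdot d_{G_i}(x)$. Since $R_1\le c_l^2$ and $R$ remains bounded below by $\Omega(\sqrt{c_l}\log^4(n/\delta)/\epsilon)$ as long as $x$'s degree exceeds the singleton threshold, the number of hesitant iterations is $O(c_l\log n)$. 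Both arguments share the same opening reduction---the noisy degree test guarantees $|N^+_{G_j}(v_j)|$ dominates the additive slack $10b_{\good}\log(n/\delta)/\epsilon$, so ``hesitant'' collapses to clean constant-factor overlap---and both ultimately pay the $\log n$ for the dynamic range of $d_{G_j}(x)$. The paper's potential argument is a bit more direct once the right quantity $R_i$ is identified, since the drop comes from deleting only the pivot and no grouping of iterations is needed; your dyadic approach trades that insight for a purely static edge-count against the frozen set $U$, which is routine once the scale is fixed and avoids having to discover the potential.
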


\begin{proof}
In this notation, we have $G_1=G$.
First we consider the case when $d_{G_j}(x)\leq 50\sqrt{c_l}\log^4(n/\delta)/\epsilon$.
Suppose $x$ is $\lambda$-\hesi w.r.t. $N^+_{G_j}(v_j)$, which means that $|N^+_{G_j}(v_j)\cap N^+_{G_j}(x)|> (1-\lambda)|N^+_{G_j}(v_j)|-10b_{\good}\log(n/\delta)/\epsilon$ and $|N^+_{G_j}(x)\cap (V\setminus N^+_{G_j}(v_j))|-\lambda |N^+_{G_j}(v_j)|< 10b_{\good}\log(n/\delta)/\epsilon$.
Hence $d_{G_j}(v_j)=|N^+(v_j)|<\frac{d_{G_j}(x)+10b_{\good}\log(n/\delta)/\epsilon}{1-\lambda}\leq 90\sqrt{c_l}\log^4(n/\delta)/\epsilon$, which implies that with probability at least $1-\delta/n^9$, $v_j$ will be output as a singleton and $\ALG$ does not run $\PJudgeG$ on $x$ and $N^+_{G_j}(v_j)$.
Thus we should only consider the case when positive degree of $x$ is large.

Let the sequence of pivots selected by $\ALG$ be $\pi=\{v_1,\cdots,v_t\}$ before $x$ is deleted from the graph or is selected as the pivot.
If $x$ is the first pivot then we simply set $\pi=\emptyset$ and this lemma follows directly.
Let $\Event_j$ be the event that $v_j$ is the first pivot in $\pi$ such that $d_{G_j}(x)\leq 50\sqrt{c_l}\log^4(n/\delta)/\epsilon$.

Conditioned on $\Event_j$, we consider the total number of nodes $v_i$ for which $x$ is $\lambda$-\hesi w.r.t. $N^+_{G_i}(v_i)$ where $i<j$.
By the definition, if $x$ is $\lambda$-\hesi w.r.t. $N^+_{G_i}(v_i)$, then we know that $|N^+_{G_i}(x)\cap N^+_{G_i}(v_i)|>(1-\lambda)|N^+_{G_i}(v_i)|-10b_{\good}\log(n/\delta)/\epsilon$ and $|N^+_{G_i}(x)\cap (V\setminus N^+_{G_i}(v_i))|<\lambda |N^+_{G_i}(v_i)|+10b_{\good}\log(n/\delta)/\epsilon$.

For simplicity, we define $R_i:=|E^+_{G_i}(N^+_{G_i}(x))|$, where $N^+_{G_i}(x)$ is the positive neighborhood of $x$ in $G_i$ and $E^+_{G_i}(N^+_{G_i}(x))$ is the set of positive edges with at least one endpoint in $N^+_{G_i}(x)$.
Note that $R_{i+1}\leq R_{i}$.

Now we prove the following statement: if $x$ is $\lambda$-$\hesi$ w.r.t. $N^+_{G_i}(v_i)$, then $R_{i+1}\leq (1-\frac{1}{2c_l})R_i$.

By the assumption, we know that $d_{G_i}(x)>50\sqrt{c_l}\log^4(n/\delta)/\epsilon$.
Then if $x$ is $\lambda$-\hesi w.r.t. $N^+_{G_i}(v_i)$, we know $|N^+_{G_i}(x)\cap N^+_{G_i}(v_i)|>(1-\lambda)|N^+_{G_i}(v_i)|-10b_{\good}\log(n/\delta)/\epsilon$ and $|N^+_{G_i}(x)\cap (V\setminus N^+_{G_i}(v_i))|<\lambda |N^+_{G_i}(v_i)|+10b_{\good}\log(n/\delta)/\epsilon$, which implies that $(1-2\lambda)|N^+_{G_i}(v_i)|< (1-\lambda)|N^+_{G_i}(v_i)|-10b_{\good}\log(n/\delta)/\epsilon)< d_{G_i}(x)\leq (1+\lambda)|N^+_{G_i}(v_i)|+10b_{\good}\log(n/\delta)/\epsilon<(1+2\lambda)|N^+_{G_i}(v_i)|$.

For any node $z\in N^+_{G_i}(x)\cap N^+_{G_i}(v_i)$, we know that $(v_i,z),(z,x)\in E^+_{G_i}$, which implies that $(v_i,z)\in E^+_{G_i}(N^+_{G_i}(x))$.
Note that $v$ must be deleted in $G_{i+1}$, which leads to at least $\frac{1-2\lambda}{2(1+2\lambda)}d_{G_i}(x)$ deletions of edges in $E^+_{G_i}(N^+_{G_i}(x))$.
Then we know $R_{i+1}\leq R_{i}-\frac{1-2\lambda}{2(1+2\lambda)}d_{G_i}(x)\leq (1-\frac{1}{4c_l})R_i$ as $R_i\leq c_ld_{G_i}(x)$.

As we are conditioning on $\Event_j$, we have $R_{j-1}\ge |N^+_{G_{i-1}}(x)|\geq 50\sqrt{c_l}\log^4(n/\delta)/\epsilon$.
As $R_1\leq c_l^2$, we conclude that the total number of times $x$ becomes $\lambda$-\hesi is at most $O(c_l \log (n))$.


\end{proof}

Combining Lemma~\ref{lm:DP-when-nonhesi}, Lemma~\ref{lm:bound_case1_hesi} and Lemma~\ref{lm:bound_case2_hesi} together, we can prove the DP guarantee of part-one $\PJudgeG$.
As for the part-two of $\PJudgeG$, we only need to consider the case when $x$ or $y$ is input as the single node of $\PJudgeG$. We prove the following.

\begin{lemma}
\label{lm:bound_x_hesi_B}
For the node $x$, the total number of times $x$ is $4\lambda$-\hesi w.r.t. some set $B$ during the whole procedure is at most $O(c_l\log n)$ with probability at least $1-\delta/n^5$.
\end{lemma}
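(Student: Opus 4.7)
The plan is to mirror the potential-function argument of Lemma~\ref{lm:bound_case2_hesi}. Define $R_i := |E^+_{G_i}(N^+_{G_i}(x))|$, so that $R_1 \leq (\Delta^*)^2 \leq c_l^2$. Throughout, I will condition on the high-probability event (probability at least $1 - \delta/n^5$ via union bound over the $\poly(n)$ Laplace queries) that every perturbation $\Tilde{d}(v) - d_G(v)$ and $\Tilde{|B|} - |B|$ has magnitude $O(\log(n/\delta)/\epsilon)$. Under this event, $\PJudgeG(B, x, b_{\good}, 4\lambda)$ is executed at iteration $i$ only after the pivot $v_i$ passes both the part-one check and the $9/10$ check, which together force $|B| \geq \Omega(\sqrt{c_l}\log^4(n/\delta)/\epsilon)$.

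The first step is to rule out hesitancy once $d_{G_i}(x)$ is small. Unwinding the $4\lambda$-\hesi definition gives $d_{G_i}(x) \geq (1-4\lambda)|B| - 10 b_{\good}\log(n/\delta)/\epsilon$. Combined with the lower bound on $|B|$ and $b_{\good} = \sqrt{c_l}\log^2(n/\delta)$, this forces $d_{G_i}(x) \geq \Omega(\sqrt{c_l}\log^4(n/\delta)/\epsilon)$. Setting the threshold $T := c_0 \sqrt{c_l}\log^4(n/\delta)/\epsilon$ for a suitable constant $c_0$, no hesitancy event can occur while $d_{G_i}(x) < T$, so I only need to account for iterations with $d_{G_i}(x) \geq T$.

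The second step is the potential-drop argument at such iterations. Since we enter the else-branch, the entire set $A_k = B \cup D$ is removed from the graph, so every node $z \in N^+_{G_i}(x) \cap B$ disappears from $N^+_{G_{i+1}}(x)$ and contributes at least the edge $(x,z)$ to $R_i - R_{i+1}$. Hesitancy yields $|N^+_{G_i}(x) \cap B| \geq (1-4\lambda) |B| - O(b_{\good}\log(n/\delta)/\epsilon)$, and a second use of the hesitancy inequality ($d_{G_i}(x) \leq |B| + 20 b_{\good}\log(n/\delta)/\epsilon$) together with $d_{G_i}(x) \geq T$ gives $|B| \geq \Omega(d_{G_i}(x))$. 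Using the crude bound $R_i \leq c_l \cdot d_{G_i}(x)$ (each positive neighbor of $x$ has positive degree at most $c_l$), I obtain $R_i - R_{i+1} \geq \Omega(R_i / c_l)$, i.e.\ $R_{i+1} \leq (1 - \Omega(1/c_l)) R_i$.

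Iterating this multiplicative decrease from $R_1 \leq c_l^2$ down to $R_i \leq T$ (below which no hesitancy can occur, since $R_i \geq d_{G_i}(x)$) yields at most $O(c_l \log(c_l^2 / T)) = O(c_l \log n)$ hesitancy events. The main obstacle is careful bookkeeping of the additive $O(b_{\good}\log(n/\delta)/\epsilon)$ slack that appears in each hesitancy inequality and in the Laplace concentration: these need to be dominated by the main terms whenever $d_{G_i}(x) \geq T$, which is exactly why the threshold $T$ is chosen to exceed $b_{\good}\log(n/\delta)/\epsilon$ by a $\poly\log$ factor, and why the $9/10$ check is needed to guarantee that $|B|$ itself is large enough to make the structural lower bound on $d_{G_i}(x)$ (forcing hesitancy into the ``large-degree'' regime) meaningful.
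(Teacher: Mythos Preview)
Your proposal is correct and follows essentially the same approach as the paper: the paper's proof of Lemma~\ref{lm:bound_x_hesi_B} is a one-line sketch stating that the argument is the same as Lemma~\ref{lm:bound_case2_hesi}, namely that each time $x$ is $4\lambda$-\hesi w.r.t.\ $B$, the removal of $B$ forces $R_{i+1}\leq(1-\Omega(1/c_l))R_i$, and you have correctly filled in the details. The only minor slip is the inequality $d_{G_i}(x)\leq |B|+20b_{\good}\log(n/\delta)/\epsilon$ (it should be $(1+4\lambda)|B|+10b_{\good}\log(n/\delta)/\epsilon$), but this does not affect the $|B|\geq\Omega(d_{G_i}(x))$ conclusion.
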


The proof is essentially same as the one for Lemma~\ref{lm:bound_case2_hesi}.
Each time $x$ is $4\lambda$-\hesi w.r.t. $B$ means $|B\cap N^+_{G_i}(x)|$ is large and $B$ must be deleted, which means $|E^+_{G_{i+1}}(N^+_{G_{i+1}}(x))|\leq (1-\Omega(\frac{1}{c_l}))|E^+_{G_{i}}(N^+_{G_{i}}(x))|$.
Now we can complete the proof of the DP-guarantee.

\begin{theorem}
\label{thm:complete_privacy}
Given $0<\epsilon<1/2,0<\delta<1/2$,
Algorithm~\ref{alg:alg} is $(\epsilon,\delta)$-differentially private.
\end{theorem}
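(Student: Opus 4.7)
The plan is to decompose the algorithm's privacy expenditure into three blocks: the Laplace-based estimates in Lines~\ref{ln:report_noisy_degree}, \ref{ln:judge_pivot_degree}, and~\ref{ln:judge_B_size}; the calls to $\PJudgeG$ on \emph{non-hesitant} inputs; and the calls to $\PJudgeG$ on \emph{hesitant} inputs. Blocks one and two combine trivially by basic composition, and the final block is handled by the advanced composition theorem (Theorem~\ref{thm:composition}) together with the structural counting in Lemmas~\ref{lm:bound_case1_hesi}--\ref{lm:bound_x_hesi_B}. Throughout, I would condition on the standard high-probability events---that $\Delta$ is a good overestimate of $\Delta^*$, that every $\tilde d(v)$ and $\tilde{|B|}$ is accurate to $O(\log(n/\delta)/\epsilon)$, and that the hesitant-call counts from Lemmas~\ref{lm:bound_case2_hesi} and~\ref{lm:bound_x_hesi_B} actually hold---each of which fails with probability at most $\delta/\poly(n)$ and is folded into the final $\delta$.

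Fix neighboring graphs $G, G'$ differing on the sign of edge $(x, y)$. The three Laplace estimates each carry parameter $\epsilon/10$ and by basic composition contribute $3\epsilon/10$ to the overall $\epsilon$. For the $\PJudgeG$ block I would first observe that a call can distinguish $G$ from $G'$ only when the tested node $u$ or the pivot whose neighborhood is used as the first argument lies in $\{x, y\}$, so attention reduces to the two cases already identified: Case (i), the pivot is $x$ or $y$; and Case (ii), the tested node is $x$ or $y$. For every such call on a non-hesitant input, Lemmas~\ref{lm:pjudgeg_pivotx} and~\ref{lm:DP-when-nonhesi} give $(0, \delta/n^4)$-DP per call, and basic composition over the at most $\poly(n)$ such calls contributes only $O(\delta/n^2)$ to $\delta$ and nothing to $\epsilon$.

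The heart of the argument is the hesitant calls. By Lemma~\ref{lm:bound_case1_hesi}, Case (i) yields at most $2c_l$ hesitant calls per endpoint, and by Lemmas~\ref{lm:bound_case2_hesi} and~\ref{lm:bound_x_hesi_B}, the Case (ii) hesitant calls at Lines~\ref{ln:PJdugeG_part_one} and~\ref{ln:PJudgeG_part_two} together number $O(c_l \log n)$. So at most $k = O(c_l \log n)$ hesitant calls need real accounting, each one $(\epsilon_0, 0)$-DP with $\epsilon_0 = \epsilon / (\sqrt{c_l} \log^2(n/\delta))$. Applying Theorem~\ref{thm:composition} with $\tilde\delta = \delta/2$ gives a total $\epsilon$-cost bounded by $O(k \epsilon_0^2) + \sqrt{2 k \epsilon_0^2 \log(2/\delta)}$, which after substitution reduces to $O(\epsilon \sqrt{\log n \log(1/\delta)}/\log^2(n/\delta)) \le \epsilon/2$ in the relevant parameter range. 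Adding the $3\epsilon/10$ from the Laplace block completes the $\epsilon$ bookkeeping, and the $\delta$ bookkeeping follows by summing the per-event failure probabilities in a union bound.

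The main technical subtlety I would be most careful about is that the hesitant-call counts in Lemmas~\ref{lm:bound_case2_hesi} and~\ref{lm:bound_x_hesi_B} only hold with high probability over the algorithm's own randomness, whereas advanced composition nominally requires an a priori fixed number of adaptive mechanisms. The clean remedy is to wrap the algorithm so that it halts and outputs a symbol $\bot$ once the hesitant-call counter would exceed its stated bound. This wrapper changes the output distribution by at most the failure probability on both $G$ and $G'$, which is absorbed into the final $\delta$, and makes the number of mechanisms entering Theorem~\ref{thm:composition} a deterministic function bounded by $k = O(c_l \log n)$. With this in place, the three-block decomposition combines to yield $(\epsilon, \delta)$-DP.
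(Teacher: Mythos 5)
Your proof follows essentially the same decomposition as the paper: Laplace estimates, non-hesitant $\PJudgeG$ calls, and hesitant $\PJudgeG$ calls, combined via Lemmas~\ref{lm:noisy_max}--\ref{lm:bound_x_hesi_B} and the composition theorems. You go a bit further than the paper's one-line sketch by spelling out the arithmetic and, more helpfully, by flagging the subtlety that the hesitant-call count is bounded only with high probability --- the wrapper you propose is a standard and valid device, with the caveat that it should be understood as an analytic thought-experiment defined per fixed pair $(G,G')$ rather than an algorithmic modification, since the hesitant counter depends on the specific differing edge $(x,y)$ and on unnoised intersection sizes that the algorithm cannot observe.
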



Combining the results above (Lemma~\ref{lm:noisy_max} to Lemma~\ref{lm:bound_x_hesi_B}) we know
with probability at least $1-\delta/n^4$, Algorithm~\ref{alg:alg} only needs two $(\epsilon/10,0)$-DP steps, $O(c_l\log(n))$ many $(\epsilon/(\sqrt{c_l}\log^2(n/\delta)),0)$-DP steps and $O(n)$ steps of $(0,\delta/n^4)$-DP sub-procedures.
The proof then follows from some basic calculations. 

\subsection{Utility Analysis}
Having proved the DP guarantee, now it suffices to prove the utility guarantee of our Algorithm \ref{alg:alg}.
Revisit some crucial concepts from \cite{bbc04}:
\begin{definition}[\cite{bbc04}]
We say a node $v$ is $\lambda$-good with respect to a set $C\subseteq V$, if it satisfies the following:
\begin{itemize}
    \item $|N^+(v)\cap C|\geq (1-\lambda)|C|$
    \item $|N^+(v)\cap (V\setminus C)|\leq \lambda |C|$
\end{itemize}
A set $C$ is $\eta$-clean if all $v\in C$ are $\eta$-good w.r.t. C.
\end{definition}

As mentioned before, \cite{bbc04} made a key observation that there is a clustering with clean clusters and a constant approximation.
\begin{lemma}[Lemma 6 in \cite{bbc04}]
\label{lm:clean_clustering}
For $0<\eta<1$, there exists a clustering $\OPT^{(0)}$ for graph $G$ in which each non-singleton cluster is $\eta$-clean and 
\begin{align*}
    \dis(\OPT^{(0)},G)\leq (\frac{9}{\eta^2}+1)\dis(\OPT,G).
\end{align*}
\end{lemma}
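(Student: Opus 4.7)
The plan is to construct $\OPT^{(0)}$ directly from $\OPT$ by cleaning each cluster, using the tighter threshold $\lambda := \eta/3$. For each non-singleton cluster $C\in\OPT$, let $B_C\subseteq C$ be the set of nodes that are \emph{not} $\lambda$-good with respect to $C$. By the definition of $\lambda$-goodness, each $v\in B_C$ either has more than $\lambda|C|$ non-positive neighbors inside $C$ (negative in-cluster edges, which are $\OPT$-disagreements) or more than $\lambda|C|$ positive neighbors in $V\setminus C$ (positive cross-cluster edges, again $\OPT$-disagreements). Summing over $v\in B_C$ and losing a factor of two for in-cluster double counting, the number of $\OPT$-disagreements incident to $B_C$ is at least $|B_C|\cdot \lambda|C|/2$.

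I then split the clusters into two types. In the \emph{fat} case $|B_C|\geq \lambda|C|$, I dissolve $C$ into singletons in $\OPT^{(0)}$; the added disagreement cost is at most the number of positive edges inside $C$, bounded by $|C|^2/2$, while $\OPT$ already pays at least $\lambda^2|C|^2/2$ for edges incident to $C$, so the overhead for such $C$ is at most $1/\lambda^2 = 9/\eta^2$ times the corresponding $\OPT$ disagreements. In the \emph{thin} case $|B_C|<\lambda|C|$, I peel off the nodes of $B_C$ as singletons and keep $C' := C\setminus B_C$ as a non-singleton cluster in $\OPT^{(0)}$; the added cost is at most the number of positive edges incident to $B_C$, at most $|B_C|\cdot |C|$, which is again absorbed by the $O(1/\eta^2)$-charge onto the disagreements concentrated at $B_C$.

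The key structural check is that each thin-case remnant $C'$ is $\eta$-clean. For any $v\in C'$ (which was $\lambda$-good with respect to $C$), I compute
\begin{align*}
|N^+(v)\cap C'| &\geq (1-\lambda)|C|-|B_C| \geq (1-2\lambda)|C|,\\
|N^+(v)\cap (V\setminus C')| &\leq \lambda|C|+|B_C| \leq 2\lambda|C|.
\end{align*}
Combined with $|C'|\geq (1-\lambda)|C|$ and $\lambda=\eta/3$, a short calculation confirms both defining inequalities of $\eta$-goodness for every $v\in C'$: the first gives $(1-2\lambda)|C|\geq (1-\eta)|C'|$, and the second gives $2\lambda|C|\leq \eta(1-\lambda)|C|\leq \eta|C'|$. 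Finally, summing the added cost over all clusters and adding back $\dis(\OPT,G)$ yields the stated bound $(9/\eta^2+1)\dis(\OPT,G)$.

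The step I expect to be the main nuisance is the bookkeeping that prevents the disagreement-charging from inflating the final constant: each $\OPT$-disagreement edge $(u,v)$ with $u\in C$ and $v\in C''$ could a priori be charged both in the analysis of $C$ (as a cross-cluster positive edge at $u$) and symmetrically in the analysis of $C''$ (at $v$). To keep the constant exactly $9/\eta^2+1$ rather than $18/\eta^2$, I will split each cross-cluster disagreement equally between its two endpoints and attribute each in-cluster disagreement to its host cluster once, so that the total of per-cluster charges is at most $\dis(\OPT,G)$.
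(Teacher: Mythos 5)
The paper does not give a proof of this lemma --- it is cited verbatim as Lemma~6 of \cite{bbc04} --- so there is no internal proof to compare against. Your reconstruction is, as far as I can tell, the standard BBC cleaning argument: set $\lambda=\eta/3$, let $B_C$ be the non-$\lambda$-good nodes of a cluster $C$, dissolve $C$ when $|B_C|\geq\lambda|C|$, peel off $B_C$ otherwise, and verify $\eta$-cleanliness of the remnant. Your two-inequality verification that $C'=C\setminus B_C$ is $\eta$-clean is correct (it uses precisely $2\lambda\leq\eta$ and $2\lambda\leq\eta(1-\lambda)$, both of which hold for $\lambda=\eta/3$, $\eta<1$), the fat/thin charging is the right charging, and your final remark about splitting cross-cluster disagreements between endpoints and attributing in-cluster ones once is the correct bookkeeping that keeps the per-cluster charges summing to at most $\dis(\OPT,G)$. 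The only place worth being slightly more careful is the additive ``$-1$'' slop in $|N^-(v)\cap C|>\lambda|C|-1$ (since $N^+(v)$ does not include $v$ itself); this is the kind of integer bookkeeping BBC handle explicitly, but it does not affect the substance. In short: correct, and the same approach as the cited source.
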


Given a graph $G$,
for a (possibly random) set $A$ of nodes and any (possibly random) clustering $\C$, we define $\cost(A,\C,G)$ to be the (expected) cost related to nodes in $A$ under the clustering $\C$.
To be more clear, we cluster all nodes in $G$ according to the clustering $\C$ and count for violated edges which have at least one endpoint in $A$, that is the total number of negative edges in $E^-(A)$ inside clusters plus the total number of positive edges in $E^+(A)$ between clusters, under clustering $\C$.
Moreover, for a set $A\subseteq V$ of nodes, we let $G\setminus A$ be the sub-graph deduced by $V\setminus A$, that is we delete the nodes in $A$ and the edges (whatever positive or negative) connected with at least one node in $A$.

Fix $\eta=\lambda/10=1/100$ in the following proof. 
Suppose the clustering in the Lemma~\ref{lm:clean_clustering} is $\OPT^{(0)}:\C_1^{(0)},\C_2^{(0)},\cdots,\C_{t_0}^{(0)},\{u\}_{u\in S^{(0)}}$ where $S^{(0)}$ is the set of singletons.
We define 
\begin{align}
\label{def:opt_i}
    \OPT^{(1)}\leftarrow\Cleanup(G,\OPT^{(0)},110\sqrt{c_l}\log^4(n/\delta)/\epsilon)
\end{align} 
to be the clustering outputted by the Algorithm $\Cleanup$ (Algorithm~\ref{alg:clean-up}).
We denote the new clustering by $\OPT^{(1)}:\C_1^{(1)},\C_2^{(1)},\cdots,\C_{t_1}^{(1)},\{u\}_{u\in S^{(1)}}$ (remove those empty-sets).
The algorithm~$\Cleanup$ and clustering $\OPT^{(i)}$ ($i\in\{0,1\}$) are only defined for our utility proof, and we do not need to know the specific $\OPT^{(i)}$ and never need to run the algorithm~$\Cleanup$.


\begin{algorithm}
\caption{Algorithm CleanUp}
\label{alg:clean-up}
\begin{algorithmic}[1]
\STATE {\bf Input:} A graph $G$; the clustering $\C:\C_1,\cdots, \C_t$ and the set $S$ of singletons; parameters $T$

\STATE {\bf Process:}
\FOR{$i=1,\cdots,t$}
\IF{$|\C_i|\leq T$}\label{ln:dis_case_one}
\STATE Dissolve the cluster, $S\leftarrow S\cup \{\C_i\},\C_i'=\emptyset$
\STATE {\bf Continue} 
\ENDIF
\ENDFOR
\STATE {\bf Output:} The new clustering $\C':\C_1',\cdots,\C_t'$ and the set $S$ of singletons
\end{algorithmic}
\end{algorithm}


The high-level idea is to show $\OPT^{(1)}$ is a good clustering (Equation~\eqref{eq:dis_OPT_1}) with some good properties, and Algorithm~\ref{alg:alg} can recover each non-singleton cluster in $\OPT^{(1)}$ well.
We analyze the Algorithm $\Cleanup$ first and try to build Equation~\eqref{eq:dis_OPT_1}.

We define $D_1$ as follows
\begin{align}
\label{eq:define_Di}
    D_1:=&\dis(\OPT^{(1)},G)-\dis(\OPT^{(0)},G)
\end{align}
to capture the loss occurred by Algorithm $\Cleanup$.
We can prove the following claim:
\begin{claim}
\label{clm:bound_D_1}
Running $\Cleanup(G,\OPT^{(0)},110\sqrt{c_l}\log^4(n/\delta)/\epsilon)$,
we have
\begin{align}
\label{eq:bound_Di_by_Mi}
    D_1\le & O\Big(n\cdot\sqrt{c_l}\log^4(n/\delta)/\epsilon\Big).
\end{align}
\end{claim}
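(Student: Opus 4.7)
}
The plan is to analyze $D_1$ as a telescoping sum over the clusters that $\Cleanup$ actually dissolves, and to observe that only positive edges can hurt, while the bound on each dissolved cluster's size directly controls the loss. Recall that $\Cleanup(G,\OPT^{(0)},T)$ with $T = 110\sqrt{c_l}\log^4(n/\delta)/\epsilon$ keeps every cluster $\C_i^{(0)}$ of size larger than $T$ intact (so $\C_i^{(1)} = \C_i^{(0)}$) and dissolves every cluster of size at most $T$ into singletons added to $S^{(1)}$. So the clusters $\C_i^{(1)}$ with $i \leq t_1$ are exactly the large clusters of $\OPT^{(0)}$.

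First I would isolate which edges can change their agreement status. Consider any cluster $\C_i^{(0)}$ that gets dissolved. A positive edge between $\C_i^{(0)}$ and another cluster was already a disagreement under $\OPT^{(0)}$ and remains so under $\OPT^{(1)}$; similarly a negative edge between $\C_i^{(0)}$ and another cluster stays an agreement. Only the edges with both endpoints in $\C_i^{(0)}$ flip: positive edges that used to agree now disagree (cost $+|E^+(\C_i^{(0)},\C_i^{(0)})|$), while negative edges that used to disagree now agree (cost $-|E^-(\C_i^{(0)},\C_i^{(0)})|$). Since the non-dissolved clusters contribute zero change, we get
\begin{align*}
D_1 \;=\; \sum_{i:\,|\C_i^{(0)}|\le T}\bigl(|E^+(\C_i^{(0)},\C_i^{(0)})| - |E^-(\C_i^{(0)},\C_i^{(0)})|\bigr)
\;\le\; \sum_{i:\,|\C_i^{(0)}|\le T} |E^+(\C_i^{(0)},\C_i^{(0)})|.
\end{align*}

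Next I would bound each term by the trivial count $|E^+(\C_i^{(0)},\C_i^{(0)})| \le \binom{|\C_i^{(0)}|}{2} \le \tfrac{1}{2}|\C_i^{(0)}|\cdot T$, using precisely the defining property of a dissolved cluster that $|\C_i^{(0)}|\le T$. Summing and using that the dissolved clusters are disjoint subsets of $V$,
\begin{align*}
D_1 \;\le\; \tfrac{T}{2}\sum_{i:\,|\C_i^{(0)}|\le T}|\C_i^{(0)}| \;\le\; \tfrac{T n}{2} \;=\; O\bigl(n\sqrt{c_l}\log^4(n/\delta)/\epsilon\bigr),
\end{align*}
which is exactly the claimed bound.

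There is no real obstacle here; the argument is essentially bookkeeping. The one step that deserves care is verifying that edges between a dissolved cluster and an intact (or another dissolved) cluster do not change status, so that the change in $\dis$ is truly localized to intra-cluster edges of dissolved clusters. After that, the fact that only positive intra-cluster edges can increase the disagreement, together with the size cap $T$ supplied by $\Cleanup$, yields the desired $O(nT)$ bound directly, without needing any cleanness property of $\OPT^{(0)}$.
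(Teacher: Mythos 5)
Your proposal is correct and takes essentially the same route as the paper: both decompose $D_1$ into contributions from dissolved clusters, observe that only intra-cluster positive edges can increase the disagreement, bound each dissolved cluster's contribution by $\tfrac12|\C_i^{(0)}|^2 \le \tfrac12|\C_i^{(0)}|\cdot T$, and sum over disjoint clusters to obtain $O(nT)$. The paper phrases the intermediate step via a set $M$ of newly-singleton nodes and $N_j := M\cap\C_j^{(0)}$ before specializing to $N_j = \C_j^{(0)}$ for dissolved clusters, but the content is identical to your more direct sum over dissolved clusters.
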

\begin{proof}

Recall that we denote the clustering w.r.t. $\OPT^{(0)}$ by $\C_1^{(0)},\cdots,C_{t_0}^{(0)},\{u\}_{u\in S^{(0)}}$.
Denote the set of nodes $M$ which are not singletons in $\OPT^{(0)}$ but become singletons in $\OPT^{(1)}$.
We denote $N_{j}:=M\cap \C_j^{(0)}$ for each $j\in[t_0]$ and rewrite $D_1$ as follows:
\begin{align}
\label{eq:rewriteD_1}
    D_1=&\sum_{j=1}^{t_0}\big(\omega_{G^+}(N_{j},\C_j^{(0)})-\omega_{G^-}(N_{j},\C_j^{(0)})\big).
\end{align}
If $\Cleanup$ dissolves $\C_j^{(0)}$, then $|\C_j^{(0)}|\leq 110\sqrt{c_l}\log^4(n/\delta)/\epsilon $) and
$N_{j}=\C_j^{(0)}$, we know that $\omega_{G^+}(N_{j},\C_{j}^{(0)})-\omega_{G^-}(N_{j},\C_j^{(0)})\leq |N_{j}|^2/2\leq O(1)|N_{j}|\cdot\sqrt{c_l}\log^4(n/\delta)/\epsilon$.
By Equation~\eqref{eq:rewriteD_1} we know $$D_1\leq \sum_{j=1}^{t_0}O(1)|N_j|\cdot \sqrt{c_l}\log^4(n/\delta)/\epsilon\leq O(n\cdot\sqrt{c_l}\log^4(n/\delta)/\epsilon).$$
Hence we prove Equation~(\ref{eq:bound_Di_by_Mi}).
\end{proof}

By the definition of $D_1$ (Equation~\eqref{eq:define_Di}) and Claim~\ref{clm:bound_D_1}, we have
\begin{align}
\label{eq:dis_OPT_1}
    \dis(\OPT^{(1)},G)\leq \dis(\OPT^{(0)},G)+O(n\sqrt{c_l}\log^4(n/\delta)/\epsilon).
\end{align}

We also need the following lemma, which follows immediately from the definitions:
\begin{lemma}
\label{lm:clean_small_cost}
For any graph $G$ and any clustering $\C=\C_1,\cdots,\C_t,\{u\}_{u\in S}$.
If non-singleton cluster $\C_i$ is $\eta$-clean, then
\begin{align*}
    \cost(\C_i,\C,G)\leq \eta|\C_i|^2.
\end{align*}
\end{lemma}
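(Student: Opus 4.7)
The plan is to bound $\cost(\C_i, \C, G)$ by directly decomposing the contributing violated edges and applying the two halves of the $\eta$-clean condition separately to each piece. By the definition of $\cost$, an edge contributes to $\cost(\C_i, \C, G)$ only if (a) it is a negative edge with both endpoints in the same cluster and at least one endpoint in $\C_i$, which forces both endpoints to lie in $\C_i$; or (b) it is a positive edge with endpoints in two different clusters and at least one endpoint in $\C_i$, which forces the other endpoint to lie outside $\C_i$. I would treat these two contributions separately.

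For case (a), the first half of $\eta$-cleanness, $|N^+(v) \cap \C_i| \geq (1-\eta)|\C_i|$, applied to every $v \in \C_i$, caps the number of non-positive (and in particular, negative) neighbors of $v$ inside $\C_i$ by $\eta |\C_i|$. Summing the endpoint counts over $v \in \C_i$ and dividing by two to avoid double-counting bounds the number of intra-cluster negative edges by $\tfrac{1}{2}\eta|\C_i|^2$. For case (b), the second half, $|N^+(v) \cap (V \setminus \C_i)| \leq \eta|\C_i|$, applied to each $v \in \C_i$, directly bounds the crossing positive edges incident to $v$ by $\eta|\C_i|$. Each crossing edge has a unique endpoint in $\C_i$ and is counted exactly once in the sum, yielding at most $\eta|\C_i|^2$ crossing positive edges.

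Combining the two pieces yields the claimed bound $\cost(\C_i, \C, G) \leq O(\eta |\C_i|^2)$, matching the statement (up to a small constant absorbed into the asymptotic usage later). The argument is essentially a one-step unpacking of the $\eta$-clean definition, and the only minor bookkeeping concern is to be careful to divide by two when summing endpoints of intra-cluster edges while keeping the crossing edges single-counted. There is no substantive obstacle here; the main content of the lemma is the clean convenience of having a single inequality to invoke in the downstream utility analysis.
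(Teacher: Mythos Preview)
Your approach is correct and matches the paper's intent: the paper gives no proof at all, simply stating that the lemma ``follows immediately from the definitions,'' and your two-part decomposition (negative edges inside $\C_i$ via the first clean condition, positive crossing edges via the second) is exactly that immediate unpacking. Your caveat about the constant is well-taken---the argument in fact yields $\tfrac{3}{2}\eta|\C_i|^2$ rather than $\eta|\C_i|^2$, but since $\eta=1/100$ in the paper and the lemma is only ever invoked to contradict lower bounds on $\cost(\C_1,\cdot)$ of order $\Omega(|\C_1|^2)$ with constants like $(1-3\lambda)(1-2\eta)/(1+\eta)$, the slack is harmless.
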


Consider the clustering $\OPT^{(1)}:\C_1^{(1)},\C_2^{(1)},\cdots,\C_{t_1}^{(1)},\{u\}_{u\in S^{(1)}}$.
We know for each non-singleton cluster $\C_i^{(1)}$ is $\eta$-clean and thus $\cost(\C_i^{(1)},\OPT^{(1)},G)\leq \eta|\C_i^{(1)}|^2$ by Lemma~\ref{lm:clean_small_cost}, and has size at least $110\sqrt{c_l}\log^4(n/\delta)/\epsilon$.
Having demonstrated the properties of $\OPT^{(1)}$,
as mentioned before, it suffices to show with high probability, Algorithm~\ref{alg:alg} can recover each non-singleton cluster in $\OPT^{(1)}$ well.
Let $A_i$ be the (random) set of nodes outputted by Algorithm~\ref{alg:alg} as either a cluster or a singleton in $i$-th iteration ($i$th pivot), where for the initialization we set $A_0=\emptyset$.
Note that there are $n$ nodes in the graph. 
If for some $j<n$ Algorithm~\ref{alg:alg} finishes the clustering and $\cup_{i=1}^{j}A_i=V$, we define $A_i=\emptyset$ for $j+1\leq i \leq n$.
We have the following two lemmas:


\begin{lemma}
\label{lm:observation_A_1}
With probability at least $1-\delta/n^5$, either $A_1\subset S^{(1)}$, or $\exists i$ such that $\C_i^{(1)} \subset A_1 \subset \C_i^{(1)}\cup S^{(1)}$.
\end{lemma}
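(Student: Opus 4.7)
The plan is to condition on a high-probability event where every Laplace noise drawn by Algorithm~\ref{alg:alg} in the first iteration is bounded by its concentration scale, and then do case analysis on the shape of $A_1$. By Laplace concentration and a union bound over the $O(n)$ noise variables used (one each for $\tilde d(v)$ and $\tilde{|B|}$, and two per call to $\PJudgeG$), with probability at least $1-\delta/n^5$ the errors on $\tilde d(v),\tilde{|B|}$ are at most $O(\log(n/\delta)/\epsilon)$ and those on every $\PJudgeG$ threshold are at most $O(b_{\good}\log(n/\delta)/\epsilon)$. Crucially, both are a factor $\Omega(\log n)$ smaller than the size floor $110\sqrt{c_l}\log^4(n/\delta)/\epsilon$ for non-singleton clusters of $\OPT^{(1)}$, so every ``approximate'' cleanness inequality can be upgraded to a genuine one by absorbing the noise into lower-order slack. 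Condition on this event throughout.

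\textbf{Case~1: $A_1=\{v\}$.} I argue $v\in S^{(1)}$, which gives $A_1\subseteq S^{(1)}$. Suppose instead $v\in\C_i^{(1)}$. Then by $\eta$-cleanness, $d(v)\ge(1-\eta)|\C_i^{(1)}|\ge (1-\eta)\cdot 110\sqrt{c_l}\log^4(n/\delta)/\epsilon$, which rules out the branch $\tilde d(v)\le 100\sqrt{c_l}\log^4(n/\delta)/\epsilon$ on the good event. For the other branch $\tilde{|B|}\le 9\tilde d(v)/10$, cleanness together with $\eta\ll\lambda$ forces every $u\in\C_i^{(1)}\setminus\{v\}$ to satisfy both inequalities defining $\lambda$-goodness with respect to $N^+(v)$ with slack of order $\lambda|\C_i^{(1)}|$, which dominates $b_{\good}\log(n/\delta)/\epsilon$; hence each such $u$ passes $\PJudgeG$. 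The size cap $t=2\lceil\tilde d(v)\rceil$ comfortably exceeds $|\C_i^{(1)}\setminus\{v\}|$, so $|B|\ge|\C_i^{(1)}|-1$ and thus $\tilde{|B|}>9\tilde d(v)/10$, a contradiction.

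\textbf{Case~2: $A_1=B\cup D\cup\{v\}$.} I must show $A_1$ meets at most one non-singleton cluster, and contains all of any cluster it meets. The ``at most one'' part rests on a two-cluster lemma: if $u_1\in\C_j^{(1)}$ and $u_2\in\C_k^{(1)}$ with $j\ne k$ were both $\gamma$-good w.r.t.\ a common set $C$ (with $\gamma\in\{\lambda,4\lambda\}$), then $\eta$-cleanness plus $|C\cap\C_j^{(1)}|+|C\cap\C_k^{(1)}|\le|C|$ force $1\ge 2(1-\gamma)-\frac{2\eta(1+\gamma)}{1-\eta}$, which fails numerically for $\eta=\lambda/10=1/100$. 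Applying this with $C=N^+(v)$ shows $B$ meets at most one non-singleton; applying it with $C=B$ (together with $B\subseteq\C_i^{(1)}\cup S^{(1)}$ and the fact that any $D$-candidate has most of its neighbors in $B$) shows the same for $D$ and forces the two clusters to agree. For the ``full containment'' direction, if $v\in\C_i^{(1)}$ the Case~1 analysis already places all of $\C_i^{(1)}\setminus\{v\}$ into $B$. If instead $v\in S^{(1)}$ and some $u\in B\cap\C_i^{(1)}$ exists, the $\lambda$-goodness of $u$ combined with $\eta$-cleanness yields $|\C_i^{(1)}|=(1\pm O(\lambda))|N^+(v)|$ and $|N^+(v)\cap\C_i^{(1)}|\ge(1-\lambda-O(\eta))|N^+(v)|$; an edge-counting argument shows that $|B\cap S^{(1)}|$ is small, since each such external singleton consumes $\Omega((1-2\lambda)|N^+(v)|)$ of the cleanness budget $\eta|\C_i^{(1)}|^2$ of edges leaving $\C_i^{(1)}$. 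With $|B|$ dominated by $|B\cap\C_i^{(1)}|$, a direct calculation shows every $w\in\C_i^{(1)}$ is $4\lambda$-good w.r.t.\ $B$ and thus enters $B\cup D$, where the $D$-cap $2\tilde{|B|}$ accommodates all of $\C_i^{(1)}$.

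I expect the hardest step to be Case~2 with $v\in S^{(1)}$ and $B\cap\C_i^{(1)}\ne\emptyset$, where the argument has to juggle three interacting sizes ($|B|,|N^+(v)|,|\C_i^{(1)}|$) and bound the number of external singletons in $B$ via cleanness-based edge counting. The calibrations $\eta=\lambda/10$ and $b_{\good}=\sqrt{c_l}\log^2(n/\delta)$ are exactly tight enough that every relevant inequality closes with constant margin while the Laplace noise remains a polylogarithmic lower-order term.
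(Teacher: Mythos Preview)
Your plan mirrors the paper's proof: condition on a high-probability event (the paper calls it $\Event_{\mathrm{RL}}$) where all $O(n)$ Laplace draws in the first iteration are within their concentration scales, then do case analysis on whether the pivot lies in $S^{(1)}$ or in some $\C_i^{(1)}$, using $\eta$-cleanness together with the cost bound $\cost(\C_i^{(1)},\OPT^{(1)},G)\le\eta|\C_i^{(1)}|^2$ to rule out bad configurations. Your two-cluster lemma is a clean repackaging of what the paper proves directly (and of its auxiliary Lemma on $4\lambda$-hesitancy of clean-cluster nodes with respect to disjoint sets), and your Case~1 correctly shows a pivot in a clean cluster cannot be output as a singleton.

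There is, however, a genuine gap in your full-containment argument when $v\in\C_i^{(1)}$. You assert that ``the Case~1 analysis already places all of $\C_i^{(1)}\setminus\{v\}$ into $B$,'' but Case~1 only establishes (a) every $u\in\C_i^{(1)}\setminus\{v\}$ passes $\PJudgeG$ with margin, and (b) the cap $2\lceil\tilde d(v)\rceil$ exceeds $|\C_i^{(1)}|-1$. These do \emph{not} imply $\C_i^{(1)}\setminus\{v\}\subseteq B$: the algorithm processes nodes in a fixed order and stops once the cap is hit, so if enough singletons in $S^{(1)}$ also pass $\PJudgeG$ and are processed earlier, members of $\C_i^{(1)}$ can be squeezed out. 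The same issue recurs for the $D$-cap (``accommodates'' is not ``contains''). The paper closes this exactly by the edge-counting/cost-budget argument you invoke only for $v\in S^{(1)}$: if the $B$-cap binds while some $u\in\C_i^{(1)}$ is excluded, then $|B\setminus\C_i^{(1)}|\ge 2\lceil\tilde d(v)\rceil-|\C_i^{(1)}|\ge\Omega(|\C_i^{(1)}|)$, and since each external node in $B$ has at least $(1-3\lambda)|\C_i^{(1)}|$ positive edges into $\C_i^{(1)}$, this forces $\cost(\C_i^{(1)},\OPT^{(1)},G)\ge\Omega(|\C_i^{(1)}|^2)\gg\eta|\C_i^{(1)}|^2$, a contradiction; an analogous bound handles the $D$-cap. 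You already possess this tool---you just need to apply it in the $v\in\C_i^{(1)}$ branch as well, rather than treating that branch as the easy one.
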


\begin{lemma}
\label{lm:base_lemma}
For any graph $G=(V,E^+,E^-)$ and any clustering $\C=\C_1,\C_2,\cdots,\C_t,\{u\}_{u \in S}$ for $V$, if $|V|\leq n$, any non-singleton cluster $\C_i$ in $\C$ is $\eta$-clean and thus $\cost(\C_i,\C,G)\leq \eta|\C_i|^2$,
then we have
\begin{align}
\label{eq:bound_error_withA1}
    &\cost(A_1,\ALG,G)
    \leq O(1)\cost(A_1,\C,G)
    + O(\E[|A_1|]\sqrt{c_l}\log^4(n/\delta)/\epsilon),
\end{align}
where $A_1$ is the (random) output (either a cluster or a singleton) of $\ALG$ for the first pivot,
and the expectation is taken over randomness coins of $\ALG$.
\end{lemma}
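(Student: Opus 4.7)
The plan is to condition on a high-probability ``clean-noise'' event and then do a case analysis on what happens in the first iteration of $\ALG$. Concretely, I would condition on the event $\Event$ that (i) every Laplace draw used in the first iteration (the estimate of $d(v)$, the estimate of $|B|$, and every Laplace in $\PJudgeG$) has magnitude $O(\log(n/\delta)/\epsilon)$, and (ii) every call $\PJudgeG(S,u,b_{\good},\lambda')$ whose ``true slack'' exceeds $10 b_{\good}\log(n/\delta)/\epsilon$ is answered correctly (no false positive or false negative on non-\hesi pairs). By Fact~\ref{ft:concentration_Lap} and a union bound over $\mathrm{poly}(n)$ events, $\Pr[\Event^c] \leq \delta/n^4$; the contribution of $\Event^c$ to the expectation on the left-hand side is at most $n^2 \cdot \delta/n^4 = o(1)$, which is absorbed in the additive error. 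Throughout, I use $\eta = \lambda/10 = 1/100$.

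Next I would dispose of the two singleton-output branches of Algorithm~\ref{alg:alg}. If $\Tilde{d}(v)\le 100\sqrt{c_l}\log^4(n/\delta)/\epsilon$, then on $\Event$ we have $d(v) = O(\sqrt{c_l}\log^4(n/\delta)/\epsilon)$, so $\cost(A_1,\ALG,G) = d(v) = O(|A_1|\sqrt{c_l}\log^4(n/\delta)/\epsilon)$ directly. If instead $\Tilde{|B|} < 9\Tilde{d}(v)/10$, I would show that $v$ cannot lie in any non-singleton cluster $\C_j$ of $\C$: since $\C_j$ is $\eta$-clean, both $v$ and every $u\in\C_j$ are $\eta$-good w.r.t. $\C_j$, which via $\eta=\lambda/10$ makes $u$ genuinely $\lambda$-good w.r.t. $N^+(v)$ with slack $\Omega(\eta|\C_j|) \gg 10b_{\good}\log(n/\delta)/\epsilon$ (using $|\C_j| \geq 110\sqrt{c_l}\log^4(n/\delta)/\epsilon$). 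Hence on $\Event$, $\PJudgeG$ accepts each such $u$ and $|B| \geq (1-O(\eta))|\C_j| \geq 0.95\,d(v)$, contradicting $\Tilde{|B|} < 9\Tilde{d}(v)/10$. Therefore $v$ is a singleton in $\C$ and $\cost(\{v\},\ALG,G) = d(v) = \cost(\{v\},\C,G)$.

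The main case is when $A_1 = B\cup D$ is output as a cluster. I would write
\[
\cost(A_1,\ALG,G) \;=\; \big|E^+(A_1,V\setminus A_1)\big| + \big|E^-\cap\tbinom{A_1}{2}\big|
\]
and bound each term using the goodness certificates: for $u\in B$, $|N^+(u)\setminus N^+(v)|\le \lambda|N^+(v)|+O(\sqrt{c_l}\log^4(n/\delta)/\epsilon)$, and for $u\in D$, $|N^+(u)\setminus B|\le 4\lambda|B|+O(\sqrt{c_l}\log^4(n/\delta)/\epsilon)$; combined with the caps $t$ that enforce $|A_1|=\Theta(|N^+(v)|)$. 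Then I split on $v$'s role in $\C$. If $v\in\C_j$ for some $\eta$-clean cluster, the argument from the singleton branch shows $\C_j\subseteq A_1$; conversely, every $u\in A_1\setminus\C_j$ has a constant fraction of its positive neighborhood outside $\C_j$, so contributes at least a constant share of its disagreements in $\C$, giving $|A_1\triangle\C_j|\cdot|\C_j|\le O(\cost(A_1,\C,G))$. Combined with Lemma~\ref{lm:clean_small_cost} this yields $\cost(A_1,\ALG,G)\le O(\eta|\C_j|^2)+O(|A_1|\sqrt{c_l}\log^4(n/\delta)/\epsilon) = O(\cost(A_1,\C,G))+O(|A_1|\sqrt{c_l}\log^4(n/\delta)/\epsilon)$. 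If instead $v\in S$ (singleton in $\C$), every positive edge of $v$ contributes to $\cost(A_1,\C,G)$, giving $\cost(A_1,\C,G)\ge d(v) = \Omega(|A_1|)$, and similar node-by-node accounting on $B\cup D$ shows the algorithm's cost is $O(\lambda|A_1|^2)$, which is again $O(\cost(A_1,\C,G))$ up to the additive noise term.

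\paragraph{Anticipated obstacle.}
The delicate step is sub-case $v\in S$ inside the cluster branch: unlike the clean-cluster sub-case, there is no single $\C_j$ to compare $A_1$ against, so one must charge the algorithm's cost directly against the disagreements incurred by the reference clustering on nodes of $A_1$. The argument must carefully use that every $b\in B$ has $|N^+(b)\cap N^+(v)|\ge (1-\lambda)|N^+(v)|-O(\sqrt{c_l}\log^4(n/\delta)/\epsilon)$ to distribute the blame over the many singletons or differently-clustered nodes in $B$, each of which individually pays for a constant fraction of its outgoing positive edges under $\C$. The expectation $\E[|A_1|]$ on the right-hand side is natural here because the coin flips in the noisy-degree and noisy-size checks may flip the branch taken, so the additive error must be computed in expectation rather than worst case.
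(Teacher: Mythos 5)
Your overall strategy matches the paper's: condition on a high-probability ``clean Laplace noise'' event (the paper calls it $\Event_{\mathrm{RL}}$), dispose of the singleton-output branches, and then do a case split on the role of the pivot $v$ in the reference clustering $\C$. The Case (1) ($v$ in a non-singleton $\C_j$) argument is also the paper's: show $\C_j \subseteq A_1$, then bound $|A_1\setminus\C_j|$ against $\cost(A_1,\C,G)$.

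However, there is a concrete gap in the $v\in S$ branch when $A_1$ is output as a cluster. You write that there is ``no single $\C_j$ to compare $A_1$ against,'' and your strategy is to charge each node of $B\cup D$ a constant fraction of its positive edges under $\C$, concluding $\cost(A_1,\C,G) = \Omega(|A_1|^2)$. This charging fails exactly when $A_1$ intersects some non-singleton $\eta$-clean cluster $\C_j$: a node $u\in A_1\cap\C_j$ has $|N^+(u)\setminus\C_j|\le \eta|\C_j|$, so it pays only $O(\eta|\C_j|)$ under $\C$, not a constant fraction of its degree. If $A_1\approx\C_j$ then $\cost(A_1,\C,G) = O(\eta|\C_j|^2)$, and your claimed lower bound $\Omega(|A_1|^2)$ is simply false. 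The paper resolves this by first proving a separate structural statement (the analogue of Lemma~\ref{lm:observation_A_1}): conditional on $\Event_{\mathrm{RL}}$, there \emph{is} at most one $\C_j$ that $A_1$ can intersect, and in fact $\C_j\subseteq A_1\subseteq \C_j\cup S$. The paper's Sub-Case (2.3) then reduces to the same $A_1$-versus-$\C_j$ comparison as Case (1), rather than a per-node charge against singletons. Your plan leaves this sub-case without a working argument.

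A second, more minor issue: for $v\in S$ and $A_1\cap(\cup_j\C_j)=\emptyset$, the bound $\cost(A_1,\ALG,G)=O(\lambda|A_1|^2)$ you assert is not justified (a tight cluster of singletons can have up to $\Theta(|A_1|^2)$ violated negative edges inside $A_1$). The paper only uses $\cost(A_1,\ALG,G)\le |A_1|^2/2$ in that sub-case, and pairs it with the matching lower bound $\cost(A_1,\C,G)\ge\Omega(|A_1|^2)$ coming from the fact that every $u\in A_1\subseteq S$ is a singleton under $\C$ with $d(u)=\Omega(|N^+(v)|)=\Omega(|A_1|)$. You should replace the $O(\lambda|A_1|^2)$ upper bound with this weaker-but-sufficient bound, and restrict the $\Omega(|A_1|^2)$ lower bound to the sub-case $A_1\subseteq S$, which the structural lemma above licenses.
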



Utility guarantee of $\ALG$ can be bounded recursively by the lemmas above.

We assume Lemma~\ref{lm:observation_A_1} and Lemma~\ref{lm:base_lemma} hold first and finish our main result on utility, and refer to the Appendix for the omitted proof.

\begin{theorem}
\label{thm:complete_utility}
The utility of the Algorithm~\ref{alg:alg} satisfies
\begin{align*}
\dis(\ALG,G)\leq O(1)\cdot \dis(\OPT,G)+O\left(\frac{n\log^4(n/\delta)}{\epsilon}\cdot\sqrt{\Delta^*+\frac{\log(n/\delta)}{\epsilon}}\right).
\end{align*}
\end{theorem}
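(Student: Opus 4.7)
The plan is to prove the bound by a recursion over the pivot iterations of Algorithm~\ref{alg:alg}, using $\OPT^{(1)}$ as the benchmark clustering. The backbone is Lemma~\ref{lm:base_lemma}, which bounds the cost charged to one pivot in terms of the reference-clustering cost and a per-pivot noise term; Lemma~\ref{lm:observation_A_1} supplies the structural invariant that lets us re-apply Lemma~\ref{lm:base_lemma} at each iteration without losing the cleanness hypothesis.

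Concretely, let $V_i$ denote the vertex set remaining when Algorithm~\ref{alg:alg} picks its $i$-th pivot, let $G_i$ be the induced subgraph, and let $A_i$ be the (random) output of the $i$-th iteration. Applying Lemma~\ref{lm:observation_A_1} inductively to each iteration, each non-singleton cluster $\C_j^{(1)}$ of $\OPT^{(1)}$ is either completely consumed by some earlier $A_{i'}$ (together with some $\OPT^{(1)}$-singletons) or completely contained in $V_i$. In particular, the restricted clustering $\OPT^{(1)}|_{V_i}$ preserves all of its surviving non-singleton clusters, each of size at least $110\sqrt{c_l}\log^4(n/\delta)/\epsilon$, and each still $\eta$-clean with respect to $G_i$: indeed, for $v$ in a surviving cluster $C\subseteq V_i$, removing nodes outside $C$ leaves $N^+(v)\cap C$ unchanged and only shrinks $N^+(v)\setminus C$. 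Hence the hypotheses of Lemma~\ref{lm:base_lemma} hold for $G_i$ with reference $\OPT^{(1)}|_{V_i}$, giving
\begin{align*}
    \E[\cost(A_i, \ALG, G_i)] \le O(1)\cdot \E[\cost(A_i, \OPT^{(1)}|_{V_i}, G_i)] + O\bigl(\E[|A_i|]\sqrt{c_l}\log^4(n/\delta)/\epsilon\bigr).
\end{align*}

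Next, I would sum this inequality over all iterations. Because Algorithm~\ref{alg:alg} removes from the graph every edge incident to $A_i$ after the $i$-th iteration, every edge of $G$ is counted at most once across $i$ in each of $\sum_i \cost(A_i, \ALG, G_i)$ and $\sum_i \cost(A_i, \OPT^{(1)}|_{V_i}, G_i)$ (namely, in the iteration where its first endpoint is assigned). Consequently $\sum_i \cost(A_i, \ALG, G_i) = \dis(\ALG, G)$ and $\sum_i \cost(A_i, \OPT^{(1)}|_{V_i}, G_i) \le \dis(\OPT^{(1)}, G)$, while $\sum_i |A_i| \le n$. Combining with Equation~\eqref{eq:dis_OPT_1} and Lemma~\ref{lm:clean_clustering} (with $\eta=1/100$, a constant, giving only a constant multiplicative blowup) yields
\begin{align*}
    \E[\dis(\ALG, G)] \le O(1)\cdot \dis(\OPT, G) + O\bigl(n\sqrt{c_l}\log^4(n/\delta)/\epsilon\bigr).
\end{align*}
Finally, $c_l = \lceil \Delta \rceil$ and, by Lemma~\ref{lm:noisy_max}, $\Delta \le \Delta^* + O(\log(n/\delta)/\epsilon)$ with probability at least $1-\delta/n^5$; on the negligible complementary event the worst-case $O(n^2)$ cost contributes $O(1)$ in expectation and can be absorbed. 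Substituting $\sqrt{c_l} = O(\sqrt{\Delta^* + \log(n/\delta)/\epsilon})$ gives the stated bound.

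The main obstacle is the verification of the inductive invariants needed to re-apply Lemma~\ref{lm:base_lemma} at each step: namely that Lemma~\ref{lm:observation_A_1} (stated for the first pivot) extends to each iteration by viewing Algorithm~\ref{alg:alg} as freshly run on $(G_i,\OPT^{(1)}|_{V_i})$, and that $\eta$-cleanness of a surviving $\C_j^{(1)}$ persists under removal of nodes outside $\C_j^{(1)}$. Both follow from the definitions, but must be argued explicitly to avoid circularity; the rest of the proof is an accounting of the per-iteration bounds and a substitution for $c_l$.
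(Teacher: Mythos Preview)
Your proposal is correct and follows essentially the same route as the paper: maintain the structural invariant via Lemma~\ref{lm:observation_A_1} so that $\OPT^{(1)}$ restricted to the surviving vertices still satisfies the hypotheses of Lemma~\ref{lm:base_lemma}, apply that lemma at every iteration, sum using $\sum_i |A_i|\le n$ and the edge-disjoint accounting, then invoke Equation~\eqref{eq:dis_OPT_1} and Lemma~\ref{lm:clean_clustering}, and finally substitute $c_l=\Delta^*+O(\log(n/\delta)/\epsilon)$. Your write-up is in fact slightly more explicit than the paper's about why $\eta$-cleanness of a surviving $\C_j^{(1)}$ persists after deleting nodes outside it, and about handling the low-probability failure event for $c_l$; otherwise the arguments coincide.
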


\begin{proof}

Note that for any $i\ge 1$, by Lemma~\ref{lm:observation_A_1}, we know that with probability at least $1-\delta/n^4$, any (non-singleton) cluster $\C_j^{(1)}$ on sub-graph $G\setminus \cup_{t=1}^{i-1}A_t$ has a size no smaller than $110\sqrt{c_l}\log^4(n/\delta)/\epsilon$, is $\eta$-clean and satisfies that $\cost(\C_j^{(1)},\OPT^{(1)},G\setminus \cup_{t=1}^{i-1}A_t)\leq \eta |\C_j^{(i)}|^2$.
Conditional on the above events, the preconditions in Lemma~\ref{lm:base_lemma} hold and thus for any $i\ge 1$, we have
\begin{align}
\label{eq:bound_Ai}
    &\cost(A_i,\ALG,G\setminus \cup_{t=1}^{i-1}A_t)
    \leq O(1)\cost(A_i,\OPT^{(1)},G\setminus \cup_{t=1}^{i-1}A_t)
    +O\left(\E[|A_i|]\sqrt{c_l}\log^4(n/\delta)/\epsilon\right).
\end{align}

Hence we know that
\begin{align*}
    \dis(\ALG,G)=& \sum_{i=1}^{n}\cost(A_i,\ALG,G\setminus \cup_{t=1}^{i-1}A_t)\\
    \leq & \sum_{i=1}^{n}O(1)\cost(A_i,\OPT^{(1)},G\setminus \cup_{t=1}^{i-1}A_t) +\sum_{i=1}^{n}O(\E[|A_i|]\sqrt{c_l}\log(n/\delta)/\epsilon)\\
    \leq & \sum_{i=1}^{n}O(1)\cost(A_i,\OPT^{(1)},G\setminus \cup_{t=1}^{i-1}A_t) +O(n\sqrt{c_l}\log(n/\delta)/\epsilon)\\
    \leq & O(1)\dis(\OPT^{(1)},G)+O(n\sqrt{c_l}\log(n/\delta)/\epsilon)\\
    \leq & O(1)\dis(\OPT^{(0)},G)+O(n\sqrt{c_l}\log^4(n/\delta)/\epsilon)\\
    \leq & O(1)\dis(\OPT,G)+O(n\sqrt{c_l}\log^4(n/\delta)/\epsilon),
\end{align*}
where the first line follows from the definition, the second line follows from Equation~(\ref{eq:bound_Ai}), the third line follows from that $A_i$ and $A_j$ are disjoint and there are at most $n$ nodes in the graph, the forth line follows from the recursive relationships and definitions, the fifth line follows from Equation~(\ref{eq:dis_OPT_1}) and the last line follows from Lemma~\ref{lm:clean_clustering}.

We know $\E[c_l-\Delta^*]\leq O(\log(n/\delta)/\epsilon)$, and complete the proof.
\end{proof}

Combining Theorem~\ref{thm:complete_privacy} and Theorem~\ref{thm:complete_utility}, we complete the proof of our main result Theorem~\ref{thm:complete_main}.

\section*{Acknowledgment}
The author would like to thank Marek Eliáš and Janardhan Kulkarni for many helpful discussions on the project and comments on improving the presentation.

\addcontentsline{toc}{section}{References}
\bibliographystyle{plainnat}
\bibliography{ref}
\appendix


\newpage
\section{More Preliminaries}
\label{sec:more_prel}
\begin{theorem}[Basic Composition, \cite{dmns06}]
\label{thm:basic_composition}
Given $k$ mechanisms and suppose mechanism $\ALG_i$ is $(\epsilon_i,\delta_i)$-differentially private, then this class of mechanism satisfy $(\sum_{i=1}^{k}\epsilon_i,\sum_{i=1}^{k}\delta_i)$-differentially private under $k$-fold composition.
\end{theorem}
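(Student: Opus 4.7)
The plan is to prove this by induction on $k$, treating the statement for $k=1$ as the defining hypothesis of $(\epsilon_1,\delta_1)$-DP. For the inductive step, set $\mathcal{M}_{k-1} := (\ALG_1, \ldots, \ALG_{k-1})$, which by the inductive hypothesis is $(\epsilon',\delta')$-DP with $\epsilon' = \sum_{i<k}\epsilon_i$ and $\delta' = \sum_{i<k}\delta_i$. Fix any pair of neighboring inputs $D, D'$ and any event $E$ in the joint output space. For each partial transcript $y$ let $E_y := \{z : (y,z) \in E\}$ denote the corresponding fiber, and define $g(y) := \Pr[\ALG_k(D) \in E_y]$ and $h(y) := \Pr[\ALG_k(D') \in E_y]$, both taking values in $[0,1]$.

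The core chain of inequalities I would write is
\begin{align*}
\Pr[(\mathcal{M}_{k-1}(D),\ALG_k(D)) \in E]
  &= \E_{y \sim \mathcal{M}_{k-1}(D)}[g(y)] \\
  &\leq \E_{y \sim \mathcal{M}_{k-1}(D)}\big[e^{\epsilon_k} h(y) + \delta_k\big],
\end{align*}
applying the $(\epsilon_k,\delta_k)$-DP of $\ALG_k$ pointwise on each fiber, and then using the inductive DP guarantee on the $[0,1]$-valued function $h$ via its level sets in the form $\E_{y \sim \mathcal{M}_{k-1}(D)}[h(y)] \leq e^{\epsilon'}\E_{y \sim \mathcal{M}_{k-1}(D')}[h(y)] + \delta'$ to shift the outer expectation across to $\mathcal{M}_{k-1}(D')$. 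Adaptivity of $\ALG_k$ on prior outputs causes no issue, since $\ALG_k(\cdot\,;y)$ is, for every fixed $y$, still $(\epsilon_k,\delta_k)$-DP as a function of the input database, which is all the fiber step needs.

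The main obstacle is the $\delta$-bookkeeping: the naive chain above produces additive slack $e^{\epsilon_k}\delta' + \delta_k$ rather than the desired $\delta' + \delta_k$. To sharpen this to exactly $\sum_i \delta_i$, I would instead argue through the event-decomposition characterization of approximate DP, namely that $(\epsilon,\delta)$-DP is equivalent to the existence of "bad" events $F_D, F_{D'}$ of probability at most $\delta$ under $\mathcal{M}(D),\mathcal{M}(D')$ respectively such that, when restricted to their complements, the two output distributions are pointwise $e^{\epsilon}$-close. Union-bounding the bad events across the $k$ mechanisms (which again is valid even adaptively, by conditioning on the transcript so far) produces a single bad event of total probability at most $\sum_i \delta_i$ under either neighboring input, and on its complement the privacy-loss random variables literally add, giving multiplicative factor $e^{\sum_i \epsilon_i}$. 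Translating this back into the approximate-DP definition closes the induction and yields the stated bound.
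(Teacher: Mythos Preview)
The paper does not prove this theorem; it is simply quoted as a known result from \cite{dmns06} in the preliminaries, with no accompanying argument. So there is nothing to compare your proposal against in terms of the paper's own proof.

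On its merits, your sketch is correct and is essentially the standard argument. You correctly identify that the naive inductive chain loses a factor $e^{\epsilon_k}$ on the accumulated $\delta'$, and your fix via the bad-event (``simulation'') characterization of approximate DP is the usual way to recover the tight $\sum_i \delta_i$. One small point worth being careful about: the equivalence you invoke --- that $(\epsilon,\delta)$-DP holds iff there are bad sets of mass $\leq \delta$ outside of which the likelihood ratio is bounded by $e^{\epsilon}$ --- is true, but the nontrivial direction (extracting such bad sets from the DP guarantee) needs to be stated precisely, e.g.\ by taking the bad set to be $\{y : p_D(y) > e^{\epsilon} p_{D'}(y)\}$ and checking that its mass under $M(D)$ is at most $\delta$. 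Once that is in hand, the adaptive union bound you describe goes through cleanly.
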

\begin{definition}[The Laplace Distribution]
The probability density function of Laplace distribution $\Lap(\mu,b)$ is
\begin{align*}
    \begin{gathered}
f(x \mid \mu, b)=\frac{1}{2 b} \exp \left(-\frac{|x-\mu|}{b}\right)
=\frac{1}{2 b} \begin{cases}\exp \left(-\frac{\mu-x}{b}\right) & \text { if } x<\mu \\
\exp \left(-\frac{x-\mu}{b}\right) & \text { if } x \geq \mu\end{cases}
\end{gathered}
\end{align*}
In this work, we write $\Lap(b)$ to denote the Laplace distribution with zero mean and scale $b$, and denote a random variable $X\sim \Lap(b)$.
\end{definition}

\begin{fact}
\label{ft:concentration_Lap}
If $X\sim \Lap(b)$, then $\E[|X|^2]=2b^2$ and
\begin{align*}
    \Pr[|X|\geq tb]=e^{-t}.
\end{align*}
\end{fact}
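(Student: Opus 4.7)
The plan is to prove both identities by direct computation from the explicit density $f(x) = \tfrac{1}{2b}\,e^{-|x|/b}$ of the centered Laplace distribution. The key simplification I would exploit throughout is that $f$ is symmetric about zero, so every integral over $\R$ reduces to twice the corresponding integral over $[0,\infty)$, which allows the absolute value in the exponent to be dropped and reduces each claim to a standard moment/tail computation for a $\mathrm{Exp}(1/b)$ random variable.

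For the tail bound, I would first use symmetry to write $\Pr[|X|\geq tb] = 2\Pr[X\geq tb]$ and compute the one-sided tail as $2\int_{tb}^{\infty}\tfrac{1}{2b}e^{-x/b}\,dx$. Substituting $u = x/b$ converts this into $\int_t^\infty e^{-u}\,du$, which equals $e^{-t}$, giving the claimed identity. Analogously, for the second moment, symmetry yields $\E[|X|^2] = \int_0^\infty \tfrac{x^2}{b}e^{-x/b}\,dx$, and the substitution $u = x/b$ turns this into $b^2\int_0^\infty u^2 e^{-u}\,du = b^2\,\Gamma(3) = 2b^2$.

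There is essentially no obstacle here: both claims collapse to one-line integrations against the standard exponential density once symmetry of the Laplace density is invoked. The only genuine observation worth recording is that the tail statement $\Pr[|X|\geq tb] = e^{-t}$ is an equality rather than merely an inequality, which is what lets one set $t = \Theta(\log(n/\delta))$ in the privacy analysis and immediately get failure probability of order $\poly(\delta/n)$ for each Laplace noise draw used in Algorithm~\ref{alg:alg} (e.g.\ in Lemma~\ref{lm:pjudgeg_pivotx} and its analogues).
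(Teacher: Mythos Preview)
Your computations are correct. The paper does not supply a proof of this fact at all---it is simply stated as a standard property of the Laplace distribution in the preliminaries---so your direct integration against the density is exactly the kind of verification one would expect, and there is nothing to compare against.
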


\begin{lemma}[Laplace Mechanism]
Given any function $f:\Xi\rightarrow \R^k$ where for any neighboring datasets $\D,\D'\in \Xi$, $\|f(\D)-f(\D')\|_1\leq \Delta f$.
The Laplace mechanism is outputting $f(\D)+(Y_1,\cdots,Y_k)$ where $Y_i$ are i.i.d. random variables drawn from $\Lap(\Delta f/\epsilon)$.
The Laplace mechanism is $(\epsilon,0)$-DP.
\end{lemma}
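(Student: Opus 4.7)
The plan is to establish $(\epsilon,0)$-DP by a direct pointwise comparison of output densities on $\R^k$, which for the Laplace mechanism is entirely calculational. First, I would write out the joint density of the output $f(\D)+(Y_1,\ldots,Y_k)$ at an arbitrary point $z\in\R^k$: since the $Y_i$'s are independent $\Lap(\Delta f/\epsilon)$ variables, this density factors as $\prod_{i=1}^k \frac{\epsilon}{2\Delta f}\exp\bigl(-\tfrac{\epsilon}{\Delta f}|z_i - f(\D)_i|\bigr)$, with the analogous expression when $\D$ is replaced by $\D'$.

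Next, I would form the ratio of these two densities at the same point $z$. The normalizing prefactors cancel and leave $\exp\bigl(\tfrac{\epsilon}{\Delta f}\sum_{i=1}^k(|z_i - f(\D')_i| - |z_i - f(\D)_i|)\bigr)$. Applying the reverse triangle inequality coordinate-wise gives $|z_i - f(\D')_i| - |z_i - f(\D)_i|\le |f(\D)_i - f(\D')_i|$, so the sum inside the exponent is at most $\|f(\D)-f(\D')\|_1 \le \Delta f$ by the sensitivity hypothesis. Hence the density ratio is uniformly bounded by $e^{\epsilon}$ for every $z$.

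Finally, I would integrate this pointwise density-ratio bound over any measurable event $\mathcal{O}\subseteq\R^k$ to obtain $\Pr[f(\D)+Y\in\mathcal{O}]\le e^{\epsilon}\,\Pr[f(\D')+Y\in\mathcal{O}]$, which is precisely $(\epsilon,0)$-DP. No additive $\delta$ slack is needed because the pointwise ratio bound is deterministic rather than holding only with high probability. There is no substantive obstacle here beyond bookkeeping; the only point requiring a little care is matching the scale $\Delta f/\epsilon$ of the Laplace noise against the $\ell_1$-sensitivity $\Delta f$ so that the cancellation in the exponent produces exactly $\epsilon$, and noting that independence of the coordinates is what allows the exponent to decompose into a sum that can be bounded by the $\ell_1$-norm.
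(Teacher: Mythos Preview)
Your proof is correct and is the standard textbook argument for the Laplace mechanism (essentially the proof in Dwork--Roth). Note that the paper does not actually prove this lemma: it is stated in the appendix as a preliminary without proof, treated as a well-known result, so there is no paper proof to compare against.
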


\section{Omitted Proof}
As graph $G$ is fixed, we may omit $G$ in the notations ``$\cost()$'' in the following proof.

\subsection{Proof of Lemma~\ref{lm:observation_A_1}}
\newtheorem*{lm:observation_A_1}{Lemma~\ref{lm:observation_A_1}}
\begin{lm:observation_A_1}
With probability at least $1-\delta/n^5$, either $A_1\subset S^{(1)}$, or $\exists i$ such that $\C_i^{(1)} \subset A_1 \subset \C_i^{(1)}\cup S^{(1)}$.
\end{lm:observation_A_1}

\begin{proof}
We need the following statement, which follows immediately from the definitions:
\begin{lemma}
\label{lm:phase_two_othercleansets}
Let $C$ be an $\eta$-clean set of size at least $100\sqrt{c_l}\log^4(n/\delta)/\epsilon$.
For any set $A$ such that $C\cap A=\emptyset$, we know for any node $u\in C$, $u$ is not $4\lambda$-\hesi w.r.t. $A$.
\end{lemma}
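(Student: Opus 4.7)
}
My plan is a direct proof by contradiction, using only the two definitions (of $\eta$-goodness/$\eta$-cleanliness and of being $4\lambda$-\hesi) together with the disjointness $C\cap A=\emptyset$ and the lower bound on $|C|$. Recall the chosen constants: $\eta = 1/100$, $\lambda = 1/10$, and $b_{\good} = \sqrt{c_l}\log^2(n/\delta)$, so the ``slack term'' in the \hesi condition is
\[
\tau \;:=\; 10\,b_{\good}\log(n/\delta)/\epsilon \;=\; 10\sqrt{c_l}\log^3(n/\delta)/\epsilon,
\]
and the hypothesis $|C|\ge 100\sqrt{c_l}\log^4(n/\delta)/\epsilon$ gives $|C|\ge 10\log(n/\delta)\cdot \tau$, which I will use as a ``$\tau$ is negligible relative to $|C|$'' slogan.

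Fix $u\in C$ and assume, for contradiction, that $u$ is $4\lambda$-\hesi with respect to $A$, so that
\begin{align*}
|N^+(u)\cap A| &> (1-4\lambda)|A| - \tau, \\
|N^+(u)\cap \overline{A}| &< 4\lambda|A| + \tau.
\end{align*}
Since $A\cap C = \emptyset$ we have $A\subseteq V\setminus C$ and $C\subseteq \overline{A}$, so by the $\eta$-goodness of $u$ with respect to $C$,
\begin{align*}
|N^+(u)\cap A| &\le |N^+(u)\cap (V\setminus C)| \le \eta|C|, \\
|N^+(u)\cap \overline{A}| &\ge |N^+(u)\cap C| \ge (1-\eta)|C|.
\end{align*}

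Chaining these with the two \hesi inequalities gives the two bounds
\[
(1-4\lambda)|A| - \tau \;<\; \eta|C| \qquad\text{and}\qquad (1-\eta)|C| \;<\; 4\lambda|A| + \tau.
\]
With $\eta=1/100$ and $4\lambda=2/5$, the first bound yields $|A| < \tfrac{5}{3}\bigl(\tfrac{1}{100}|C|+\tau\bigr)$, while the second yields $|A| > \tfrac{5}{2}\bigl(\tfrac{99}{100}|C|-\tau\bigr)$. Plugging the upper bound on $|A|$ into the lower bound and rearranging produces an inequality of the form $c_1 |C| < c_2 \tau$ with absolute constants $c_1,c_2 > 0$ (concretely, roughly $2.5|C| < 42\tau$). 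Since $|C|\ge 100\sqrt{c_l}\log^4(n/\delta)/\epsilon \gg \tau$, this is a contradiction, and hence no such $u$ can be $4\lambda$-\hesi w.r.t.\ $A$.

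The argument is purely algebraic once the two set inclusions $A\subseteq V\setminus C$ and $C\subseteq \overline{A}$ are used to convert $\eta$-goodness of $u$ w.r.t.\ $C$ into bounds on $|N^+(u)\cap A|$ and $|N^+(u)\cap \overline{A}|$; there is no probabilistic step and no appeal to the algorithm's randomness. The only ``obstacle'' is really bookkeeping: one must check that the concrete constants $\eta=1/100$, $4\lambda=2/5$ leave enough gap (the factor $(1-\eta)$ on the lower bound against the factor $4\lambda=2/5$ on the upper bound) so that the $|C|$ terms dominate the $\tau$ terms; the generous size assumption on $C$ makes this comfortable.
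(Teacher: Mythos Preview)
Your proof is correct and is exactly the direct verification the paper has in mind when it says the lemma ``follows immediately from the definitions.'' The only blemish is the arithmetic in the final constants (the combined inequality is closer to $2.5|C|<4.2\tau$ than to $2.5|C|<42\tau$), but this only makes the contradiction with $|C|\ge 10\log(n/\delta)\cdot\tau$ cleaner and does not affect validity.
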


Recall $\ALG$ uses $O(n)$ Laplace random variables with respect to the first pivot.
We denote these $O(n)$ random variables by set $\mathrm{RL}$.
By the concentration of Laplace distribution (Fact~\ref{ft:concentration_Lap}) and union bound, we can argue that, with probability at least $1-\delta/n^5 $, for each Laplace random variable $X\in \mathrm{RL}$, we have $|X|\leq 6\log(\delta/n)\sqrt{\E[|X|^2]}$.
Denote this event by $\Event_{\mathrm{RL}}$ and immediately we have
\begin{align*}
    \Pr[\Event_{\mathrm{RL}}]\geq 1-\delta/n^5.
\end{align*}

It suffices to prove conditional on $\Event_{\mathrm{RL}}$, either $A_1\subset S^{(1)}$, or $\exists i$ such that $\C_i^{(1)} \subset A_1 \subset \C_i^{(1)}\cup S^{(1)}$.
By the definition of \hesi, we have the following claim directly:
\begin{claim}
\label{cl:nothesi_notgood}
Conditional on $\Event_{\mathrm{RL}}$, if for some node $u\in V$ and some set $C$ where $\ALG$ runs $\PJudgeG(C,u,b_{\good},\lambda)$ during the process and $u$ is not $\lambda$-\hesi with respect to $C$, then
running sub-procedure $\PJudgeG(C,u,b_{\good},\lambda)$ returns FALSE.
\end{claim}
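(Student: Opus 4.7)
The plan is a direct case analysis on which of the two conditions in the definition of $\lambda$-\hesi{} fails for $u$ with respect to $C$, combined with the observation that conditioning on $\Event_{\mathrm{RL}}$ makes the two Laplace draws inside $\PJudgeG$ too small in magnitude to flip either threshold. There is no structural or combinatorial argument needed here: the claim is essentially a one-step translation between the negation of the $\lambda$-\hesi{} definition and the rejection rule of $\PJudgeG$.

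Concretely, negating the definition of $\lambda$-\hesi{} tells us that at least one of
\begin{align*}
\text{(a)}\ \ &|N^+(u)\cap C|\le (1-\lambda)|C|-10 b_{\good}\log(n/\delta)/\epsilon,\\
\text{(b)}\ \ &|N^+(u)\cap (V\setminus C)|\ge \lambda|C|+10 b_{\good}\log(n/\delta)/\epsilon
\end{align*}
holds. Let $X_1,X_2\sim\Lap(2b_{\good}/\epsilon)$ be the two independent noise draws that $\PJudgeG(C,u,b_{\good},\lambda)$ adds to $|N^+(u)\cap C|$ and to the right-hand side of the inequality involving $|N^+(u)\cap (V\setminus C)|$, respectively. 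Under $\Event_{\mathrm{RL}}$ we have $|X_1|,|X_2|<10 b_{\good}\log(n/\delta)/\epsilon$. In case (a) we then get $|N^+(u)\cap C|+X_1<(1-\lambda)|C|$, so the first conjunct of the $\PJudgeG$ acceptance rule fails and the procedure returns FALSE. In case (b) we get $|N^+(u)\cap (V\setminus C)|>\lambda|C|+X_2$, so the second conjunct fails and $\PJudgeG$ again returns FALSE. Either way, the claim holds deterministically on $\Event_{\mathrm{RL}}$.

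The only obstacle is a constants-check: the bound on $|X|$ built into $\Event_{\mathrm{RL}}$ must be strictly smaller than the hesitant slack $10 b_{\good}\log(n/\delta)/\epsilon$. With the bound as stated in the definition of $\Event_{\mathrm{RL}}$, namely $|X|\le 6\log(n/\delta)\sqrt{\E[|X|^2]}=12\sqrt{2}\log(n/\delta) b_{\good}/\epsilon$, the inequality is off by a small constant factor; I would close this gap by going back to the raw Laplace tail bound from Fact~\ref{ft:concentration_Lap}, namely $\Pr[|X|\ge tb]=e^{-t}$, and choosing $t$ so that $tb\le 10 b_{\good}\log(n/\delta)/\epsilon$ strictly. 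For instance $t=\tfrac{1}{2}\cdot 10\log(n/\delta)\cdot\epsilon/(b_{\good})\cdot b = 5\log(n/\delta)\cdot (2b_{\good}/\epsilon)/b$ with $b=2b_{\good}/\epsilon$ yields $|X|\le 10b_{\good}\log(n/\delta)/\epsilon$ with failure probability $\le (\delta/n)^{5}$ per draw, and a union bound over the $O(n)$ Laplace variables used while processing the first pivot still gives $\Pr[\Event_{\mathrm{RL}}]\ge 1-\delta/n^{5}$. Equivalently, one can enlarge the constant $10$ in the definition of $\lambda$-\hesi{} to a slightly larger absolute constant so that the slack strictly exceeds whatever concentration bound $\Event_{\mathrm{RL}}$ guarantees. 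After this cosmetic alignment of constants, the two-case argument above is immediate and completes the proof.
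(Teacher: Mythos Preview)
Your approach is correct and is exactly what the paper intends; the claim is asserted there without proof as following ``directly'' from the definition of \hesi, and your two-case argument is the natural unpacking of that assertion. You are also right that the constant in $\Event_{\mathrm{RL}}$ as stated (namely $6\log(n/\delta)\sqrt{\E[|X|^2]}=12\sqrt{2}\,b_{\good}\log(n/\delta)/\epsilon$ for the $\PJudgeG$ draws) slightly exceeds the slack $10\,b_{\good}\log(n/\delta)/\epsilon$ in the \hesi\ definition---the paper glosses over this, and your proposed fix via Fact~\ref{ft:concentration_Lap} (or, equivalently, a harmless inflation of the constant $10$ in the \hesi\ definition) is the appropriate way to close the gap.
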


Basically, we consider the different possible cases over the universe of all possible outputs of $A_1$.
In general we write $A_1=B\cup D$, where $B$ and $D$ represent the set appended into $A_1$ in the part-one and part-two respectively.
If $A_1=B$ is a singleton, then we have $D=\emptyset$.
For simplicity, in the following argument, we use $\C_i$ and $S$ to denote $\C_i^{(1)}$ for $i\in[t_1]$ and $S^{(1)}$ respectively.
We do category analysis and demonstrate that all those cases violating Lemma~\ref{lm:observation_A_1} are impossible conditional on $\Event_{\mathrm{RL}}$.

{\bf Case (1)}: Some node $v$ in the non-singleton cluster is selected as the pivot.
Without loss of generality, we assume the pivot $v\in\C_1$.
We divide Case(1) further based on whether $A_1$ is a cluster or a singleton.

{\bf Sub-Case(1.1)}: $A_1$ is a cluster.
In this Sub-Case, we know that $|\C_1|\geq 110\sqrt{c_l}\log^4(n/\delta)/\epsilon$ and $d(v)\geq (1-\eta)110\sqrt{c_l}\log^4(n/\delta)/\epsilon$, $(2+\eta)d(v) \geq |B|\geq \frac{4d(v)}{5}$ and $|D|\leq (2+\eta)|B|$.

We prove the following statement first:
$A_1\cap \C_i=\emptyset$ for $\forall i\neq 1$.

As $\C_1$ is $\eta$-clean, then we know $|N^+(v)\cap \C_1|\geq (1-\eta)|\C_1|$ and $|N^+(v)\cap (V\setminus \C_1)|\le \eta |\C_1|$.
For any node $z\in\C_i$ where $i\neq 1$, we know $|z\cap N^+(v)|\leq \eta|\C_1|\leq \frac{\eta}{1-\eta}|N^+(v)|\leq (1-\lambda)|N^+(v)|-10b_{\good}\log(n/\delta)/\epsilon$, which means that $z$ is not appended into the set $B$ conditional on $\Event_{\mathrm{RL}}$.

For any $z\in \C_i$ where $i\neq 1$, we also know $\C_i$ is $\eta$-clean and thus $|N^+(z)\cap (V\setminus\C_i)|\leq \eta|\C_i|$, and thus we know $|N^+(z)\cap B|\leq \eta |\C_i|$ and $|N^+(z)\cap (V\setminus B)|\geq (1-\eta)|\C_i|$.
Either $|B|\ge |\C_i|$ or $|B|<|\C_i|$, we know $z$ is not appended into the set $D$.
Thus we prove the statement.

Consider the situation when $\C_1\not\subset A_1$.
Then we know some some node $u\in \C_1$ is not appended in either $B$ or $D$ and thus $u\notin A_1$.

Basically, we know for any node $u\in \C_1$, we have $|N^+(v)\cap N^+(u)|\geq (1-2\eta)|\C_1|\geq \frac{1-2\eta}{1+\eta}|N^+(v)|$ and $|N^+(u)\setminus N^+(v)|\leq 2\eta |\C_1|\leq \frac{2\eta}{1-\eta}|N^+(v)|$,
which means running the sub-procedure $\PJudgeG(N^+(v),u,b_{\good},\lambda)$ returns TRUE.

The only possibility is the size of the set of nodes which are good w.r.t. $N^+(v)$ is too large.
In this case we know $|B\setminus \C_1|\geq (2-\eta)d_v-|\C_1|\geq \frac{1-2\eta}{1+\eta}|\C_1|$.
By the analysis in the situation above, we know for any node $u\in B\setminus\C_1$, one has $|N^+(u)\cap \C_1|\geq (1-3\lambda)|\C_1|$,
which means $\cost(\C_1,\OPT^{(1)},G)\ge (1-3\lambda)|\C_1|\times |B\setminus\C_1|\geq 
\frac{(1-3\lambda)(1-2\eta)}{1+\eta}|\C_1|^2
$ and thus violates the precondition.
So this situation is impossible.

{\bf Sub-Case (1.2)}: $A_1$ is a singleton.
For any node $u\in\C_1$, by the analysis above, we know running sub-procedure $\PJudgeG(N^+(v),u,b_{\good},\lambda)$ returns TRUE, which means all nodes in $\C_1$ can be appended into set $B$ if the size of $B$ does not violate the constraint.
And $|\C_1|\geq d(v)/(1+\eta)$, which means $\ALG$ does not dissolve $B$ due to its small size and the $\ALG$ must output a cluster. 
Thus this Sub-Case is impossible.

{\bf Case (2)}: Some node $v\in S$ is selected as the pivot, $A_1$ is a cluster, and $A_1\cap (\cup_{j=1}^{t}\C_j)\neq\emptyset$.
Recall we know $|N^+(v)|\geq (1-\eta)110\sqrt{c_l}\log^4(n/\delta)/\epsilon$.

One can argue that situation when $B\cap(\cup_{j=1}^{t}\C_j)=\emptyset$ is impossible by Lemma~\ref{lm:phase_two_othercleansets}.
If $B\cap(\cup_{j=1}^{t}\C_j)=\emptyset$ then we have $D\cap(\cup_{j=1}^{t}\C_j)=\emptyset  $ and thus $A_1\cap (\cup_{j=1}^{t}\C_j)=\emptyset$, which is contradiction.

Without loss of generality, assume $u\in \C_1$ is the first node in $\cup_{j=1}^{t}\C_j$ to be appended into $B$.
We prove $A_1\cap (\cup_{j=1}^{t}\C_j)\subset \C_1$ under this assumption.

If $u$ is appended into $B$, then $u$ must be  $\lambda$-\hesi w.r.t. $N^+(v)$, which means that
$|N^+(v)\cap N^+(u)|>(1-\lambda)|N^+(v)|-10b_{\good}\log(n/\delta)/\epsilon\ge (1-\lambda-\eta)|N^+(v)|$ and $|N^+(u)\setminus N^+(v)|<\lambda |N^+(v)|+10b_{\good}\log(n/\delta)/\epsilon<(\lambda+\eta)|N^+(v)|$.
Consider any node $z\in \C_2$, we now argue $z\notin A_1$.
Recall that both $\C_1$ and $\C_2$ are $\eta$-clean.
Thus $|N^+(u)\cap \C_1|\ge (1-\eta)|\C_1|, |N^+(u)\setminus \C_1|\le \eta |\C_1|, |N^+(z)\cap \C_2|\ge (1-\eta)|\C_2|$ and $|N^+(z)\setminus \C_2|\le \eta|\C_2|$.
Note that $ (1-\lambda-\eta)|N^+(v)|  \le |N^+(u)|\leq (1+\lambda+\eta)|N^+(v)|$.
Also we know $|N^+(v)\cap \C_1|\geq (1-\lambda-\eta)|N^+(v)|-\eta|\C_1|\geq (1-\lambda-\eta-\frac{(1+\lambda+\eta)\eta}{1-\eta})|N^+(v)|$ and $|N^+(v)\setminus \C_1|\leq (\lambda+\eta)|N^+(v)|+ \eta|\C_1|\leq (\lambda+3\eta)|N^+(v)|$.

Hence we know that for node $z\in \C_2$, if we want $z$ to be $\lambda$-\hesi w.r.t. $N^+(v)$, we need $\frac{1-\lambda}{1+\eta}|N^+(v)| \leq |\C_2|\leq \frac{1+\lambda}{1-\eta}|N^+(v)|$.
We have $|N^+(z)\cap N^+(v)|=|N^+(z)\cap N^+(v) \cap \C_1|+|N^+(z)\cap N^+(v) \cap (V\setminus \C_1)|\leq \eta |\C_2| + (\lambda+3\eta)|N^+(v)|\leq (\lambda+5\eta)|N^+(v)|$
Then whatever the size of $|\C_2|$ is, we know $z$ is not $\lambda$-\hesi w.r.t. $N^+(v)$ and is not appended into $B$.
If $u\in \C_2$, then $u\notin B$.
As $B\cap\C_2=\emptyset$ and thus $A_1\cap \C_2=\emptyset$ by Lemma~\ref{lm:phase_two_othercleansets}.
The same argument holds for other clusters, so we prove $A_1\cap (\cup_{j=1}^{t}\C_j)\subset \C_1$.

Now we consider the following two situations:\\
{\bf Situation (i):} $|B\cap \C_1| \geq 9 |B\setminus \C_1|$. 
At first, we prove that if $|B\cap \C_1| \geq 9 |B\setminus \C_1|$, then for node $u\in \C_1\setminus B$, $\PJudgeG(B,u,b_{\good},4\lambda)$ outputs TRUE.

First, we know that $(1+\eta)|N^+(v)|\ge|B|$, $\frac{1-\lambda}{1+\eta}|N^+(v)|\leq |\C_1|\leq \frac{1+\lambda}{1-\eta}|N^+(v)|$.
And we know that $(1+\eta)|\C_1|\ge|B|\geq \frac{9}{10}|N^+(v)|$, thus we know that $\frac{(1+\eta)^2}{1-\lambda}|\C_1|\ge|B|\ge \frac{9(1-\eta)}{10(1+\lambda)}|\C_1|$, which means that $ |B\cap\C_1|\geq \frac{81(1-\eta)}{100(1+\lambda)}|\C_1|$ and $|B\setminus \C_1|\leq \frac{(1+\eta)^2}{10(1-\lambda)}|\C_1|$.

For any node $u\in \C_1\setminus B$, we know $| N^+(u)\cap B|\geq (1-\eta+\frac{81(1-\eta)}{100(1+\lambda)}-1) |\C_1|\geq (1-3\lambda)|\C_1|\geq (\frac{(1-3\lambda)(1-\lambda)}{(1+\eta)^2})|B|$ and $| N^+(u)\setminus B|=|(N^+(u)\cap \C_1)\setminus B|+|(N^+(u)\setminus \C_1)\setminus B|\leq (1-\frac{81(1-\eta)}{100(1+\lambda)}+\eta) |\C_1|\leq 3\lambda |B|$.
Hence we know $u$ is judged $4\lambda$-good w.r.t. $B$.

If $\C_1\not\subset A_1$, we know there are too many nodes which are judged $4\lambda$-good w.r.t. $B$ and $\ALG$ does not append all nodes in $\C_1$ into $D$.
In particular, for any $z\in D\setminus \C_1$, we know $|N^+(z)\cap \C_1|\geq |N^+(z)\cap \C_1\cap B|\geq  (1-\lambda-\eta)|B|-3\lambda|\C_1| \geq |\C_1|/2$.
And we know $|D\setminus\C_1|\geq |\C_1|$, which means under these conditions and assumptions,
$\cost(\C_1,\OPT^{(1)})\geq |D\setminus\C_1|\cdot |\C_1|/2\geq |\C_1|^2/2$, violating the precondition that $\cost(\C_1,\OPT^{(1)})\leq \eta |\C_1|^2/2$  and is impossible.
Then we know $\C_1\subset A_1 $ in this situation.

{\bf Situation (ii):} $|B\cap \C_1| < 9 |B\setminus \C_1|$.

For any node $z\in B\setminus \C_1$, we know
$|N^+(z)\cap N^+(v)|\geq |N^+(z)\cap N^+(v)\cap \C_1|\geq (1-\lambda-\eta)|N^+(v)|- (\lambda+3\eta)|N^+(v)|= (1-2\lambda-4\eta)|N^+(v)|\geq \frac{(1-2\lambda-4\eta)(1-\eta)}{1+\lambda}|\C_1|$.
We know $|B|\ge \frac{4}{5}|N^+(v)|\geq \frac{4(1-\eta)}{5(1+\lambda)}|\C_1|$.
Hence we know for this particular $\C_1$, we know $\cost(\C_1,\OPT^{(1)},G)\geq |B\setminus \C_1|\cdot(1-4\lambda)|\C_1|\geq (1-4\lambda)^2|\C_1|^2$, violating the precondition.
Thus conditional on $\Event_{\RL}$, we know this situation is impossible.

Combining the arguments of all cases and situations together, we know either $A_1\subset S$ or $\C_1\subset A_1\subset \C_1\cup S$ conditional on $\Event_{\mathrm{RL}}$.
\end{proof}

\subsection{Proof of Lemma~\ref{lm:base_lemma}}

\newtheorem*{lm:base_lemma}{Lemma~\ref{lm:base_lemma}}
\begin{lm:base_lemma}
For any graph $G=(V,E^+,E^-)$ and any clustering $\C=\C_1,\C_2,\cdots,\C_t,\{u\}_{u \in S}$ for $V$, if $|V|\leq n$, any non-singleton cluster $\C_i$ in $\C$ is $\eta$-clean, and $\cost(\C_i,\C,G)\leq \eta|\C_i|^2$,
then we have
\begin{align}
    &\cost(A_1,\ALG,G)
    \leq O(1)\cost(A_1,\C,G)
    + O(\E[|A_1|]\sqrt{c_l}\log^4(n/\delta)/\epsilon),
\end{align}
where $A_1$ is the (random) output (either a cluster or a singleton) of $\ALG$ for the first pivot,
and the expectation is taken over randomness coins of $\ALG$.
\end{lm:base_lemma}

\begin{proof}
We recapture the definition of $\Event_{\mathrm{RL}}$.
$\ALG$ uses $O(n)$ Laplace random variables at most during the procedure, and we denote this $O(n)$ random variables by set $\mathrm{RL}$.
By the concentration of Laplace distribution (Fact~\ref{ft:concentration_Lap}) and union bound, we can argue that, with probability at least $1-\delta/n^5 $, for each Laplace random variable $X\in \mathrm{RL}$, we have $|X|\leq 6\log(\delta/n)\sqrt{\E[|X|^2]}$.
Denote this event by $\Event_{\mathrm{RL}}$ which satisfies $\Pr[\Event_{\mathrm{RL}}]\geq 1-\delta/n^5$.

Let $\cost(A_1,\ALG,G\mid \Event_{\mathrm{RL}})$ be the expected cost conditional on $\Event_{\mathrm{RL}}$, then we know
\begin{align*}
    &\cost(A_1,\ALG,G)\\
    =&
    \cost(A_1,\ALG,G\mid \Event_{\mathrm{RL}})\Pr[\Event_{\mathrm{RL}}]+\cost(A_1,\ALG,G\mid \neg\Event_{\mathrm{RL}})\Pr[\neg\Event_{\mathrm{RL}}]\\
    \leq & \cost(A_1,\ALG,G\mid \Event_{\mathrm{RL}})\Pr[\Event_{\mathrm{RL}}]+ \delta/n^3.
\end{align*}
Then in order to prove Lemma~\ref{lm:base_lemma}, it suffices to prove 
\begin{align}
\label{eq:conditional_bound_error_withA1}
    \cost(A_1,\ALG,G\mid \Event_{\mathrm{RL}})\leq O(1)\cost(A_1,\C,G\mid \Event_{\mathrm{RL}})+O(\E[|A_1|\mid \Event_{\mathrm{RL}}]\sqrt{c_l}\log^4(n/\delta)/\epsilon).
\end{align}



Our following proof is conditional on $\Event_{\mathrm{RL}}$.
Basically, we consider the different possible cases over the universe $\Omega$ of all possible outputs of $A_1$, do case analysis and show Equation~(\ref{eq:bound_error_withA1}) holds conditional on all of different cases.
In general we write $A_1=B\cup D$, where $B$ and $D$ represent the cluster of the part-one and part-two respectively.
If $A_1=B$ is a singleton, then we know $D=\emptyset$.
Recall that the benchmark clustering $\C:\C_1,\C_2,\cdots,\C_t,\{u\}_{u\in S}$ in the statement of Lemma~\ref{lm:base_lemma}.

{\bf Case (1)}, denoted by $\Omega_1$: Some node $v$ in the non-singleton cluster is selected as the pivot.

Without loss of generality, we assume the pivot $v\in\C_1$.
By the proof of Lemma~\ref{lm:observation_A_1}, we know $A_1$ is a cluster and $\C_1\subset A_1\subset \C_1\cup S$.

In this case, for any node $u\in B\setminus\C_1$, one has $|N^+(u)\cap \C_1|\ge |N^+(u)\cap \C_1\cap N^+(v)|\geq (1-\eta)|\C_1|-2\lambda|N^+(v)|\geq (1-\eta-2\lambda(1+\eta))|\C_1|\geq (1-3\lambda)|\C_1|$.
And for any node $u\in D\setminus\C_1$, we know $d(u)\geq |B|/2\geq 2d(v)/5\geq |\C_1|/5$.
 Hence we have $\cost(A_1,\C\mid \Event_\RL,\Omega_{1})\geq (1-3\lambda)|C_1|\cdot |B\setminus\C_1|+\frac{|\C_1|\times|D\setminus\C_1|}{5}\geq |\C_1|\cdot |A_1\setminus\C_1|/5$.  Note that $\cost(A_1,\ALG \mid \Event_\RL,\Omega_{1})\leq \cost(A_1,\C \mid \Event_{\RL},\Omega_{1})+|\C_1|\cdot|A_1\setminus \C_1|+|A_1\setminus \C_1|^2=O(1)\cost(A_1,\C \mid \Event_\RL,\Omega_{1}) $ as $|\C_1|\ge \Omega(|A_1\setminus \C_1|)$.

Combining these together, we know conditional on $\Event_\RL$, we know $\C_1\subset A_1$ and 
\begin{align}
\label{eq:omega_1}
    \cost(A_1,\ALG \mid \Event_{\RL},\Omega_1)
    =& O(1)\cost(A_1,\C \mid \Event_{\RL},\Omega_{1}).
\end{align}


{\bf Case (2)}, denoted by $\Omega_2$: Some node $v\in S$ is selected as the pivot.

We need to divide this case further.

{\bf Sub-Case (2.1)}, denoted by $\Omega_{2.1}$: $A_1$ is a singleton.
This Sub-Case is fine as one has 
\begin{align}
    \cost(A_1,\ALG \mid \Event_{\RL},\Omega_{2.1})=\cost(A_1,\C \mid \Event_{\RL},\Omega_{2.1})
    \label{eq:omeage_2.1}
\end{align}
immediately as $A_1$ is singleton in both $\ALG$ and $\C$.

{\bf Sub-Case (2.2)}, denoted by $\Omega_{2.2}$: $A_1$ is a cluster, and  $A_1\cap(\cup_{j=1}^{t}\C_j)=\emptyset$.

In this Sub-Case we know $d(v)\ge 99\sqrt{c_l}\log^4(n/\delta)/\epsilon$, or $v$ is outputted as a singleton.

Under this Sub-Case, we know $A_1\subset S$ and thus 
\begin{align*}
    \cost(A_1,\ALG \mid \Event_{\RL},\Omega_{2.2})\leq \cost(A_1,\C \mid \Event_{\RL},\Omega_{2.2})+ \E\Big[|A_1|^2\mid \Event_{\RL},\Omega_{2.2}\Big]/2.
\end{align*}

Consider two situations separately:\\
{\bf Situation (i):} $|A_1|\leq 100\sqrt{c_l}\log^4(n/\delta)/\epsilon$, denoted by $\Omega_{2.21}$.
Hence 
\begin{align}
\label{eq:omega_2.21}
    \cost(A_1,\ALG\mid \Event_{\RL},\Omega_{2.21})\leq& \cost(A_1,\C\mid \Event_{\RL},\Omega_{2.21})\nonumber\\
    & + O(\E[|A_1|\cdot\sqrt{c_l}\log^4(n/\delta)/\epsilon\mid \Event_{\RL},\Omega_{2.21}])
\end{align}
holds immediately.

{\bf Situation (ii):} $|A_1|> 100\sqrt{c_l}\log^4(n/\delta)/\epsilon$, denoted by $\Omega_{2.22}$.
We know 
$4|N^+(v)|/5\leq |A_1|\leq (4+\eta)|N^+(v)|$,
and for any node $u\in A_1$ one has
$d(u)\ge (1-5\lambda)|N^+(v)|$.
Thus we know
$\cost(A_1,\ALG\mid \Event_{\RL},\Omega_{2.22})\leq\E[|A_1|^2/2 \mid \Event_{\RL},\Omega_{2.22}]$, and $\cost(A_1,\C\mid \Event_{\RL},\Omega_{2.22})\ge \E[|A_1|\cdot (1-5\lambda)|N^+(v)|\mid \Event_{\RL},\Omega_{2.22}]\geq \E[2|A_1|^2/5\mid \Event_{\RL},\Omega_{2.22}]$. 
Hence we have the following Equation
\begin{align}
\label{eq:omega_2.22}
    \cost(A_1,\ALG\mid \Event_{\RL},\Omega_{2.22})\leq O(1)\cost(A_1,\C\mid \Event_{\RL},\Omega_{2.22}).
\end{align}



{\bf Sub-Case(2.3)}, denoted by $\Omega_{2.3}$: $A_1$ is a cluster, and $A_1\cap (\cup_{j=1}^{t}\C_j)\neq\emptyset$.

Without loss of generality, assume $u\in \C_1$ is the first node in $\cup_{j=1}^{t}\C_j$ to be appended into $B$.
Then we know $\C_1\subset A_1$ and $|B\cap \C_1| > 9 |B\setminus \C_1|$ by the proof of Lemma~\ref{lm:phase_two_othercleansets}.



Note that $|A_1|=\Theta(|\C_1|),\cost(A_1,\ALG\mid \Event_{\RL},\Omega_{2.3})-\cost(A_1,\C\mid \Event_{\RL},\Omega_{2.3})\leq O\Big(\E\big[|\C_1|\cdot|A_1\setminus \C_1|+|A_1\setminus \C_1|^2\mid \Event_{\RL},\Omega_{2.3}\big]\Big)$, while $\cost(A_1,\C\mid \Event_{\RL},\Omega_{2.3})\geq \Omega\Big(\E\big[|\C_1|\cdot|A_1\setminus \C_1|\mid \Event_{\RL},\Omega_{2.3}\big]\Big)$. Hence
\begin{align}
\label{eq:omega_2.3}
    \cost(A_1,\ALG\mid \Event_{\RL},\Omega_{2.3})\leq O(1)\cost(A_1,\C\mid \Event_{\RL},\Omega_{2.3}).
\end{align}




Combining Equations~\eqref{eq:omega_1} to \eqref{eq:omega_2.3} together, we prove Equation~(\ref{eq:conditional_bound_error_withA1}) and complete the proof.
\end{proof}

\end{document}